\documentclass{article}

\usepackage{microtype}
\usepackage{graphicx}
\usepackage{subcaption}
\usepackage{booktabs} % for professional tables
\usepackage{amsmath}

\usepackage{bm}
\usepackage{array}
\usepackage{multirow}
\usepackage{makecell}
\usepackage[table]{xcolor}

\usepackage{hyperref}

\usepackage[accepted]{icml2026}

\usepackage{amsmath}
\usepackage{amssymb}
\usepackage{mathtools}
\usepackage{amsthm}

\usepackage[capitalize,noabbrev]{cleveref}

\theoremstyle{plain}
\newtheorem{theorem}{Theorem}[section]
\newtheorem{proposition}[theorem]{Proposition}
\newtheorem{lemma}[theorem]{Lemma}
\newtheorem{corollary}[theorem]{Corollary}
\theoremstyle{definition}

\newtheorem{assumption}[theorem]{Assumption}
\theoremstyle{remark}
\newtheorem{remark}[theorem]{Remark}

\usepackage[textsize=tiny]{todonotes}

\newcommand{\camel}{\raisebox{-0.15em}{\includegraphics[height=1.05em]{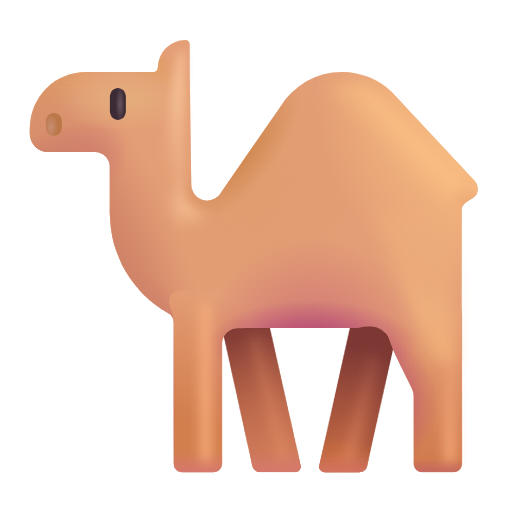}}}

\icmltitlerunning{Mitigating Gradient Pathology in PINNs through Aligned Constraint}

\begin{document}

\twocolumn[
  % \icmltitle{Mitigating Gradient Pathology in PINNs through Aligned Constraint}
  \icmltitle{\camel\enspace Mitigating Gradient Pathology in PINNs through Aligned Constraint}

  % It is OKAY to include author information, even for blind submissions: the
  % style file will automatically remove it for you unless you've provided
  % the [accepted] option to the icml2026 package.

  % List of affiliations: The first argument should be a (short) identifier you
  % will use later to specify author affiliations Academic affiliations
  % should list Department, University, City, Region, Country Industry
  % affiliations should list Company, City, Region, Country

  % You can specify symbols, otherwise they are numbered in order. Ideally, you
  % should not use this facility. Affiliations will be numbered in order of
  % appearance and this is the preferred way.
  \icmlsetsymbol{equal}{*}

  \begin{icmlauthorlist}
    \icmlauthor{Yichen Luo}{kth}
    \icmlauthor{Peiyu Zhu}{pku}
    \icmlauthor{Dongxiao Hu}{xjtlu}
    \icmlauthor{Jia Wang}{xjtlu}
    \icmlauthor{Tailin Wu}{wlu}
    \icmlauthor{Dapeng Lan}{tech}
    \icmlauthor{Yu Liu}{tech}
    \icmlauthor{Zhibo Pang}{pku}
  \end{icmlauthorlist}

  \icmlaffiliation{kth}{Department of Information Science and Engineering, KTH Royal Institute of Technology, Stockholm, Sweden}
  \icmlaffiliation{pku}{School of Advanced Manufacturing and Robotics, Peking University, Beijing, China}
  \icmlaffiliation{xjtlu}{School of Advanced Technology, Xi’an Jiaotong-Liverpool University, Suzhou, China}
  \icmlaffiliation{wlu}{Department of AI, School of Engineering, Westlake University, Hangzhou, China}
  \icmlaffiliation{tech}{Techforgood AS, Oslo, Norway}

  \icmlcorrespondingauthor{Zhibo Pang}{zhibo.pang@pku.edu.cn}

  % You may provide any keywords that you find helpful for describing your
  % paper; these are used to populate the "keywords" metadata in the PDF but
  % will not be shown in the document
  \icmlkeywords{Machine Learning, ICML}

  \vskip 0.3in
]

% this must go after the closing bracket ] following \twocolumn[ ...

% This command actually creates the footnote in the first column listing the
% affiliations and the copyright notice. The command takes one argument, which
% is text to display at the start of the footnote. The \icmlEqualContribution
% command is standard text for equal contribution. Remove it (just {}) if you
% do not need this facility.

% Use ONE of the following lines. DO NOT remove the command.
% If you have no special notice, KEEP empty braces:
\printAffiliationsAndNotice{}  % no special notice (required even if empty)
% Or, if applicable, use the standard equal contribution text:
% \printAffiliationsAndNotice{\icmlEqualContribution}

\begin{abstract}
While Physics-Informed Neural Networks (PINNs) are powerful for solving Partial Differential Equations (PDEs), their training is often paralyzed by gradient pathology. The gradients from the PDE residuals and boundary constraints oppose each other, trapping the model in local minima. Current solutions, such as adaptive weighting or hard constraints, either fail to fundamentally resolve this ill-conditioning or are limited to simple geometries. In this study, we systematically analyze the possible causes of this gradient pathology from the perspectives of loss landscapes and optimization dynamics. Based on the obtained conclusion, we propose Constraint-Aligned loss with Manifold Lifting (CAML). By reformulating all zeroth-order terms into aligned constraints, our method effectively mitigates gradient conflicts. In addition, we introduce a delay factor to help the optimizer skip the high-curvature area. Experiments demonstrate that our CAML significantly enhances numerical stability and efficiency in highly complex PINN problems. Our code is open-sourced on \href{https://github.com/YichenLuo-0/CAML}{CAML}.
\end{abstract}

\section{Introduction}

Physics-Informed Neural Networks (PINNs)~\cite{RAISSI2019686} have established themselves as a transformative framework in scientific computing, offering a robust deep learning strategy for solving Partial Differential Equations (PDEs). By embedding PDE residuals and boundary conditions directly into the loss function, this approach significantly mitigates the dependence on labeled training data. Notably, PINNs that rely exclusively on physical prior functions can be considered an innovative meshless numerical method. This paradigm enables the completely self-supervised reconstruction of target physical fields, yielding neural models that generalize effectively without the need for costly data acquisition. In domains such as linear elasticity~\cite{HAGHIGHAT2021113741,SAMANIEGO2020112790}, fluid mechanics~\cite{aaw4741,JIN2021109951}, heat conduction~\cite{10.1115/1.4050542,fluids10120322}, and electromagnetism~\cite{Chen:20,2021MaxwellNetPD}, such a well-trained model is poised to supersede traditional numerical methods, providing nearly real-time solutions for previously computationally-intensive geometric problems that require an extremely long period of time.

However, compared to traditional data-driven paradigms, employing PINNs to approximate PDE solutions remains non-trivial. A salient limitation is the ill-conditioning of the loss function, which severely hinders convergence, even in relatively elementary scenarios~\cite{NEURIPS2021_df438e52}. Literature indicates that the gradients derived from the PDE residuals and the boundary constraints often exhibit conflicting directions, causing the optimization process to be trapped in local minima~\cite{wang2021understanding,wang2022and,10.5555/3692070.3693785}. Consequently, in problems involving complex geometries or PDE configurations, this pathology prevents the model from accurately recovering the ground truth.

Recent work has addressed this challenge from several distinct perspectives. For example, \citet{YU2022114823,wang2022and} and~\citet{park2024beyond} focus on designing adaptive loss functions to equilibrate the pathology between boundary conditions and PDE residuals. In contrast, \citet{pmlr-v202-muller23b,NEURIPS2024_b2b781ba} and~\citet{liu2025config} attempt to enhance the optimizer by dynamically adjusting the optimization trajectory to navigate the non-convex landscape. Another line of work, \citet{JMLR:v25:24-0313} and~\citet{10.24963/ijcai.2024/647}, enhances performance through a unique configuration of the model. Nevertheless, these methods mainly alleviate convergence issues stemming from gradient ill-conditioning, without fundamentally mitigating the ill-conditioning itself. Alternatively, model-based improvements, such as~\citet{SUN2020112732} and~\citet{SUKUMAR2022114333}, eliminate the boundary loss term by enforcing hard constraints. However, such methods typically rely on strict Dirichlet boundary conditions and distance functions, which severely restrict their generalizability to complex geometries and combined boundary conditions.

In this work, we provide the explicit geometric characterization of the PDE-residual loss landscape in PINNs. We show that operator non-uniqueness induces a connected manifold of global minimizers for the PDE residual, forming an ill-conditioned valley in parameter space and intrinsically causing gradient conflicts with boundary losses. Based on this insight, we propose \textbf{C}onstraint-\textbf{A}ligned loss with \textbf{M}anifold \textbf{L}ifting (\textbf{CAML}), which directly modifies the geometry of the optimization landscape to mitigate gradient pathology. Our contributions are summarized as follows:

\begin{enumerate}
	\item We formally prove that the PDE residual loss admits a connected set of global minimizers in function space, and show how this manifold induces possible gradient conflict after mapping to parameter space.
	\item We reformulate all zeroth-order terms as explicitly solvable aligned constraints, enlarge the intersection between PDE and boundary admissible sets, thereby fundamentally reducing gradient inconsistency.
	\item We introduce a delay factor for residual loss to prevent premature descent into unfavorable regions of the residual-induced valley, thus shortening the optimization trajectory and improving stability.
\end{enumerate}

Experiments across various physical fields demonstrate that these innovative methods significantly enhance the stability and efficiency of the model when solving geometrically complex problems or nonlinear boundary conditions.

\textbf{Conflict of Interest.} The authors declare that there are no financial or other conflicts of interest related to this work.

\section{Related Work}
\subsection{General Optimization Strategies}

A series of studies has systematically investigated the pathology of PINN loss functions from multiple perspectives, including gradient conflicts between PDE residuals and boundary conditions \cite{wang2021understanding}, the ruggedness of the loss landscape \cite{NEURIPS2021_df438e52,10.5555/3692070.3693785}, and derivative pathologies \cite{park2024beyond,10.5555/3737916.3737919}. These challenges have spurred the development of various mitigation strategies, including model architectures, loss formulations, and optimization methods.

Loss function modifications provide the most direct remedy. Representative approaches include approaches that incorporate gradient information of PDE residuals to regularize steep solution landscapes~\cite{YU2022114823}, adversarial $L^\infty$-based PINNs for high-dimensional Hamilton-Jacobi-Bellman (HJB) equations~\cite{10.5555/3600270.3600872}, meta-learning formulations~\cite{PSAROS2022111121}, neural spectral methods~\cite{du2024neural}, and auxiliary-variable decompositions for high-order derivatives~\cite{park2024beyond}. In addition, there are a series of methods \cite{ZHANG2024117042,CHEN2025114226,GAO2025216,zhou2025dual} that provide implicit regularization through adaptive weights.

Optimizer-based methods attempt to navigate in the non-convex landscape more effectively. DCGD~\cite{NEURIPS2024_b2b781ba} and ConFIG~\cite{liu2025config} dynamically adjust the optimization trajectories to maintain consistency between update directions and individual loss gradients. \citet{pmlr-v202-muller23b} leverage Hessian-induced Riemannian metrics to exploit curvature information. \citet{10.5555/3692070.3693785} and \citet{boudec2025learning} propose optimizers that target pathology induced by differential operators.

Architecture modifications provide implicit regularization. PirateNets~\cite{JMLR:v25:24-0313} introduce adaptive residual connections to stabilize PDE residual minimization under suboptimal initialization. \citet{10.24963/ijcai.2024/647} analyzes critical points through a wide neural network theory. \citet{li2020neuraloperatorgraphkernel,li2021fourier,santos2023physicsinformed} and \citet{zhao2024pinnsformer} attempted to enhance the prediction accuracy by using advanced architectures. 

Despite these advances, existing methods primarily balance gradient magnitudes or adjust optimization trajectories without fundamentally eliminating the directional conflicts between boundary and interior loss terms. The underlying ill-conditioned loss landscape persists.

\subsection{Domain-Specific Hard Constraint Methods}

Another class of methods \cite{SUN2020112732,liu2022unified} enforces the calculation of boundary conditions through an ansatz-based hard constraint. By removing boundary terms from the objective, these methods fundamentally avoid gradient conflicts. \citet{SHENG2021110085,article123,SUKUMAR2022114333,XIE2023116139} and~\citet{zhou2025unisolver} further expanded on this and applied this hard constraint method to various fields. Nevertheless, these methods demand careful construction of distance functions and geometry-aware representations for each specific domain, posing significant challenges for generalization to arbitrary complex geometries.

\subsection{Training Enhancements}

Beyond gradient-based techniques, several optimization-oriented strategies have been proposed to improve convergence efficiency. Curriculum learning~\cite{NEURIPS2021_df438e52} reduces training difficulty by progressively organizing samples, while adaptive sampling~\cite{lau2024pinnacle} dynamically reallocates collocation points to enhance efficiency. Complementarily, recent work has analyzed PINN training failures from a gradient-dynamics perspective: \citet{wang2021understanding} employed the annealing algorithm to balance the interactions among different terms in the loss function, and \citet{10.5555/3600270.3600872} further formalized this phenomenon via neural tangent kernel analysis, proposing adaptive weighting to balance convergence across loss components. While orthogonal to gradient-aware approaches and easily integrated as auxiliary components, these enhancements do not directly resolve the fundamental conflict between boundary conditions and PDE residual gradients.

\section{Preliminaries and Problem Setup}

We consider a general steady-state PDE of the form
\begin{equation}
	\mathcal{N}[u](x)=f(x), \quad x\in\Omega,
    \label{eq:pde}
\end{equation}
where $u$ denotes the unknown solution, $\mathcal{N}$ is the differential operator defining the PDE, $f$ is the source term, and $\Omega$ is the computational domain. To fully specify a problem instance, boundary conditions are imposed on $\Gamma=\partial\Omega$, which may consist of Dirichlet, Neumann, and Robin components:
\begin{equation}
	\begin{aligned}
        u &= g_\mathrm{d}(x), \quad &x &\in \Gamma_\mathrm{d},\\
        \nabla u \cdot \mathbf{n} &= g_\mathrm{n}(x), \quad &x &\in \Gamma_\mathrm{n},\\
        \alpha u + \beta \nabla u \cdot \mathbf{n} &= g_\mathrm{r}(x), \quad &x &\in \Gamma_\mathrm{r},
	\end{aligned}
\end{equation}
where $\mathbf{n}$ denotes the normal outward unit vector, and $g_\mathrm{d}$, $g_\mathrm{n}$, and $g_\mathrm{r}$ are prescribed boundary value functions. In the Robin condition, $\alpha$ and $\beta$ are coefficients of the zeroth- and first-order terms, respectively.

For problems with composite boundary conditions, the standard PINN loss is defined as
\begin{equation}
	\mathcal{L}(\theta)
	=
	w_\mathrm{res}\mathcal{L}_\mathrm{res}
	+
	w_\mathrm{bc}\mathcal{L}_\mathrm{bc},
    \label{eq:loss0}
\end{equation}
where $w_\mathrm{res}$ and $w_\mathrm{bc}$ balance the PDE residual and boundary condition losses. The PDE residual loss is given by
\begin{equation}
	\mathcal{L}_\mathrm{res}
	=
	\frac{1}{N_\mathrm{res}}
	\sum_{i=1}^{N_\mathrm{res}}
	\left|\mathcal{N}[u_\theta](x_i)-f(x_i)\right|^2,
\end{equation}
where $\{x_i\}_{i=1}^{N_\mathrm{res}} \subset \Omega$ are interior collocation points and $u_\theta$ denotes the model approximation parameterized by $\theta$.

The boundary condition loss $\mathcal{L}_\mathrm{bc}$ is defined as the sum of individual boundary losses, $\mathcal{L}_\mathrm{bc}=\mathcal{L}_\mathrm{d}+\mathcal{L}_\mathrm{n}+\mathcal{L}_\mathrm{r}$, with
\begin{equation}
	\begin{aligned}
        \mathcal{L}_\mathrm{d} &=
        \frac{1}{N_\mathrm{d}}
        \sum_{i=1}^{N_\mathrm{d}}
        \left|u_\theta(x_i)-g_\mathrm{d}(x_i)\right|^2,\\
        \mathcal{L}_\mathrm{n} &=
        \frac{1}{N_\mathrm{n}}
        \sum_{i=1}^{N_\mathrm{n}}
        \left|\nabla u_\theta(x_i)\cdot\mathbf{n}_i-g_\mathrm{n}(x_i)\right|^2,\\
        \mathcal{L}_\mathrm{r} &=
        \frac{1}{N_\mathrm{r}}
        \sum_{i=1}^{N_\mathrm{r}}
        \left|\alpha u_\theta(x_i)+\beta\nabla u_\theta(x_i)\cdot\mathbf{n}_i-g_\mathrm{r}(x_i)\right|^2,
	\end{aligned}
\end{equation}
where $\{x_i\}$ are collocation points sampled on $\Gamma_\mathrm{d}$, $\Gamma_\mathrm{n}$, and $\Gamma_\mathrm{r}$, respectively.

Recent studies \cite{wang2021understanding} have shown that the gradient directions associated with the PDE residual, $\nabla_\theta \mathcal{L}_{\mathrm{res}}$, and those induced by boundary conditions, $\nabla_\theta \mathcal{L}_{\mathrm{bc}}$, often exhibit a negative inner product. This gradient conflict leads to competing optimization trajectories and can significantly impede convergence.

\section{Theoretical Analysis}
\label{sec:theory}
\subsection{Loss Landscape Perspective}

Previous studies \cite{10.5555/3327345.3327535,10.5555/3692070.3693785} have primarily investigated the geometry of the loss landscape in the parameter space. In this section, we instead adopt a complementary perspective by simultaneously considering the loss geometry in both the \textbf{function space} and the \textbf{parameter space}. Let $\mathcal{F}(\Omega)$ denote an appropriate Sobolev space defined in the domain $\Omega$. We define the loss functional in the function space as
\begin{equation}
    \mathcal{L}_{\mathrm{func}}: \mathcal{F}(\Omega) \rightarrow \mathbb{R}, 
    \; 
    u \mapsto \mathcal{L}_{\mathrm{func}}(u) = \|\mathcal{N}[u]-f\|_{\mathcal{L}^2(\Omega)}^2.
    \label{eq:lfunc}
\end{equation}
Unlike the complete loss function defined in the parameter space that incorporates boundary constraints in Equation~(\ref{eq:loss0}), this equation is a theoretical functional for the PDE residual.

\subsubsection{PDE Residual Term}
\label{sec:pde}

% \begin{figure}[t]
% 	\centering
%         \begin{tabular}{@{}c@{}}
%             \multicolumn{1}{@{}r@{}}{\includegraphics[width=1\linewidth, trim=15 15 15 15, clip]{fig/fig21.pdf}} \\[0.0ex]

%             \includegraphics[width=0.7\linewidth, trim=10 10 15 0, clip]{fig/fig2a.pdf} \\[0mm]
%             \makebox[\linewidth]{\small (a) Unique Solution (Well-Posed)} \\
            
%             \includegraphics[width=0.7\linewidth, trim=10 10 15 0, clip]{fig/fig2b.pdf} \\[0mm]

%             \makebox[\linewidth]{\small (b) Non-unique Solution (Ill-Posed)} \\[-0.5ex]
%         \end{tabular}
% 	\caption{Schematic illustration of the intersection between PDE residual loss landscape and boundary-condition constraint landscape in \textbf{function space}. The zero-residual solutions of the governing PDE form a manifold in the function space, while boundary conditions impose additional constraints, restricting the admissible solutions to a subset. Their intersections correspond to valid solutions, which can be (a) unique or (b) non-unique.}
% 	\label{fig2}
% \end{figure}

\begin{theorem}[Existence of Loss Valleys in PDE Residual Term]
\label{the:lossvalley}
If the PDE constraint $\mathcal{N}[u]=f$ does not uniquely determine a solution in function space, then the PDE residual loss admits a flat and connected set 
$
    \mathcal{S}_\mathrm{res} = \{u \in \mathcal{F}(\Omega) \mid \mathcal{N}[u] = f\}
$ 
of global minimizers. When represented by an over-parameterized neural network, this set induces a connected manifold of zero residual solutions in parameter space with at least one flat direction.
\end{theorem}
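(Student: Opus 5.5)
Our plan is to argue in two stages, first in function space and then in parameter space, using only the structure of the residual functional $\mathcal{L}_{\mathrm{func}}$ in Equation~(\ref{eq:lfunc}).

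\textbf{Function space.} Since $\mathcal{L}_{\mathrm{func}}(u)\ge 0$ for every $u$ and $\mathcal{L}_{\mathrm{func}}(u)=0$ exactly when $\mathcal{N}[u]=f$, the set $\mathcal{S}_\mathrm{res}$ is precisely the set of global minimizers. It is \emph{flat} in the sense that $\mathcal{L}_{\mathrm{func}}$ is identically zero on it. The hypothesis of non-uniqueness gives two distinct elements $u_0,u_1\in\mathcal{S}_\mathrm{res}$.

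For connectedness we will rely on an assumption that should be stated explicitly: that $\mathcal{N}$ is linear, or affine after linearization about a solution, as for the elliptic operators considered later. Then
\[
\mathcal{S}_\mathrm{res}=u_0+\ker\mathcal{N},
\]
which is an affine subspace and hence path-connected. In particular the segment $u_t=(1-t)u_0+tu_1$ stays in $\mathcal{S}_\mathrm{res}$. For nonlinear $\mathcal{N}$ we would either restrict to a connected component of $\mathcal{S}_\mathrm{res}$ or assume that $\mathcal{S}_\mathrm{res}$ is a Banach submanifold. The latter follows from the implicit function theorem when $D\mathcal{N}[u]$ is surjective with a nontrivial kernel. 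Its tangent space $\ker D\mathcal{N}[u]\neq\{0\}$ then supplies the flat directions of $\mathcal{L}_{\mathrm{func}}$, because the Hessian $2\,D\mathcal{N}^*D\mathcal{N}$ vanishes on that kernel.

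\textbf{Parameter space.} Let $\Phi:\theta\mapsto u_\theta$ be the network realization map, so that the parametric residual loss is $\mathcal{L}_{\mathrm{func}}\circ\Phi$. We use over-parameterization in the standard sense: $\Phi$ is smooth, and on the relevant region its image contains a neighbourhood of $\mathcal{S}_\mathrm{res}$, or at least the path $u_t$ in its finite-dimensional (collocation) discretization, with $D\Phi(\theta)$ of full rank onto that image.

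The zero set is $\mathcal{M}=\Phi^{-1}(\mathcal{S}_\mathrm{res})$. To show it is connected, we lift the path $u_t$ to a path $\theta(t)$ with $\Phi(\theta(t))=u_t$. This is done by solving the ODE
\[
D\Phi(\theta)\dot\theta=u_1-u_0
\]
using the minimal-norm pseudoinverse, which is possible because $D\Phi$ is surjective. This proves that $\mathcal{M}$ contains a connected curve of minimizers, and by the same argument it is connected.

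For the flat direction we use the chain rule:
\[
\nabla^2_\theta(\mathcal{L}_{\mathrm{func}}\circ\Phi)=D\Phi^{\top}\,\nabla^2\mathcal{L}_{\mathrm{func}}\,D\Phi
\]
at a zero-residual point, where the term involving $\nabla\mathcal{L}_{\mathrm{func}}$ vanishes because $\nabla\mathcal{L}_{\mathrm{func}}=0$ there. Any $v=\dot\theta(t)$ with $D\Phi\, v\in\ker D\mathcal{N}$ then lies in the Hessian's null space, and the loss is constant along $v$. Separately, when the number of parameters exceeds the image dimension, $\ker D\Phi\neq\{0\}$ already supplies additional flat directions.

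The main obstacle is this lifting step. It requires rigorous surjectivity of $D\Phi$ onto the relevant directions, which is not true for a general finite network in an infinite-dimensional space. We would handle it by working with the collocation discretization, that is, the finite vector of residuals, where a width-based NTK or full-rank Jacobian argument makes $D\Phi$ surjective. The submersion theorem then makes $\mathcal{M}$ a smooth manifold of codimension at most $N_\mathrm{res}$ containing the lifted path.
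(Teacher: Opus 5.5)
Your function-space stage is essentially the paper's. In the linear case $\mathcal{S}_\mathrm{res}=u_0+\ker\mathcal{N}$. In the nonlinear case, the implicit function theorem at a regular solution (surjective $D\mathcal{N}$ with nontrivial, complemented kernel) gives only a local continuously parameterized family, and that is all the paper establishes too. Your restriction to a component is therefore right, but drop ``affine after linearization,'' which does not make $\mathcal{S}_\mathrm{res}$ affine. Your flat-direction computation via the Gauss--Newton form $D\Phi^{\top}\,(2D\mathcal{N}^{*}D\mathcal{N})\,D\Phi$ of the Hessian at a zero is also fine. It is equivalent to the paper's route: differentiate $\mathcal{L}(\phi(\mathbf{c}^*+tv))\equiv 0$ twice, then use positive semidefiniteness at a minimizer.

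The gap is the lifting step, and it goes beyond the surjectivity issue you flag.
\begin{enumerate}
\item Even granting that $u_1-u_0$ lies in the range of $D\Phi(\theta)$ at every point, the minimal-norm ODE $\dot\theta=D\Phi(\theta)^{+}(u_1-u_0)$ need not have a solution on all of $[0,1]$. Without a uniform bound on $\|D\Phi(\theta)^{+}\|$ along the trajectory, it can blow up or run into the rank-deficient set. Pointwise surjectivity therefore does not give a lift.
\item ``By the same argument $\mathcal{M}$ is connected'' does not follow. Lifting joins $\theta_0$ to \emph{some} preimage of each $u_t$. It says nothing about joining distinct preimages of the same function, e.g.\ parameters related by hidden-unit permutations or $\tanh$ sign flips. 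You only obtain a connected subset of $\mathcal{M}$; that is what the theorem needs and all the paper proves.
\item The collocation fallback changes the statement. Under a full-rank Jacobian, the submersion theorem makes the zero set of the \emph{discrete} residual a manifold with $P-N_\mathrm{res}$ flat directions whether or not the PDE solution is unique. That is exactly the parameter-redundancy flatness the paper explicitly distinguishes its valley from. Only your lifted direction still carries the non-uniqueness, and it now concerns the collocation loss rather than $\mathcal{L}_\mathrm{func}\circ\Phi$.
\end{enumerate}

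The paper does not construct the lift at all; it takes it as a hypothesis. It assumes a $\mathcal{C}^1$ map $\phi:U\subset\mathbb{R}^k\to\mathbb{R}^P$ with $u_{\phi(\mathbf{c})}=u(\cdot;\mathbf{c})$ for the solution family and $J\phi(\mathbf{c}^*)v\neq 0$ for some $v$. It also assumes $\mathcal{C}^2$ regularity of $\theta\mapsto\mathcal{L}_\mathrm{func}(u_\theta)$. Then $\Theta_\mathrm{valley}=\phi(U)$ is connected by continuity and consists of zeros, and the zero eigenvalue follows from the curve $\phi(\mathbf{c}^*+tv)$. This hypothesis is exactly what your ODE was meant to produce, and it needs no surjectivity of $D\Phi$, which a finite network cannot have. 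If you replace your surjectivity assumption and ODE by this representability assumption, the rest of your argument goes through. Your construction would only buy you the nondegeneracy $\dot\theta\neq 0$ for free, from $D\Phi\,\dot\theta=u_1-u_0\neq 0$; the paper instead assumes it via $J\phi(\mathbf{c}^*)v\neq 0$.
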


\begin{figure}[t]
	\centering
        \begin{tabular}{@{}m{0.5\linewidth}@{}m{0.5\linewidth}@{}}
            \includegraphics[width=1\linewidth, trim=10 15 15 20, clip]{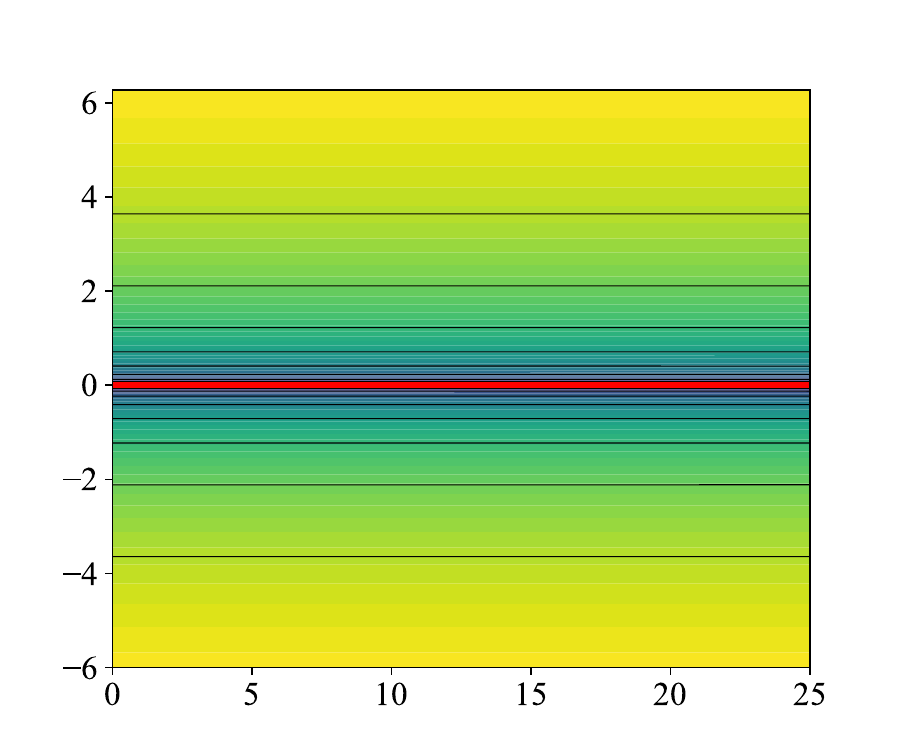} & \includegraphics[width=1\linewidth, trim=10 15 15 20, clip]{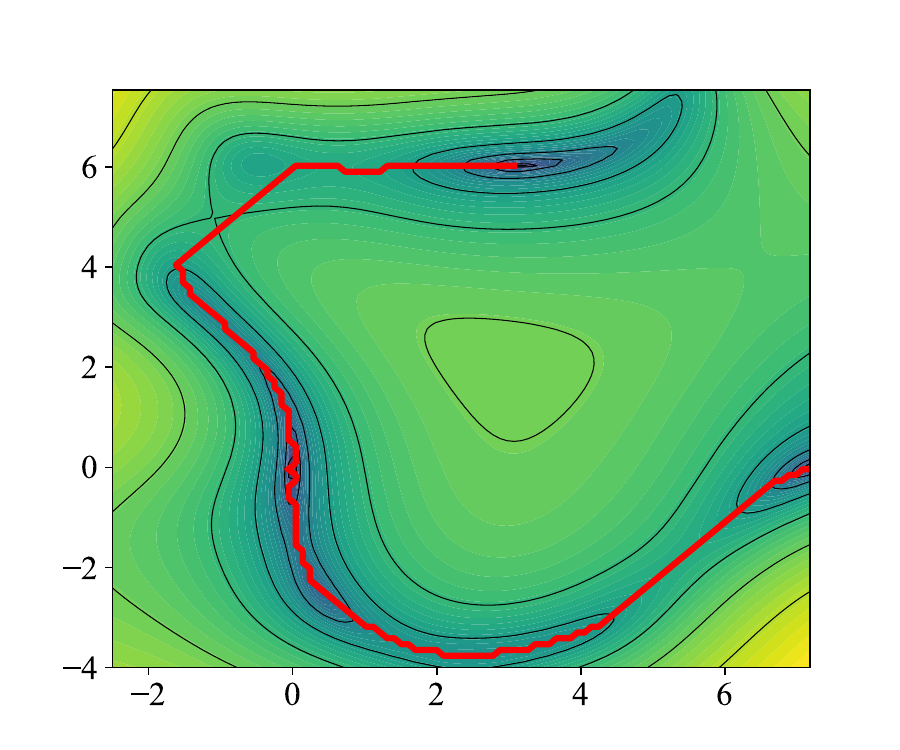}\\[0mm]
            \makebox[\linewidth]{\footnotesize (a) Function Space} & \makebox[\linewidth]{\footnotesize (b) Parameter Space} \\
        \end{tabular}
	\caption{Visualization of the PDE residual loss landscape in both the function space and the parameter space, projected onto a 2D subspace. The red curve indicates the loss valley. While the landscape is relatively simple in the function space, it becomes highly distorted and non-convex when mapped to the parameter space by the neural network. More details are in Appendix~\ref{exp1}.}
	\label{fig1}
\end{figure}

We provide the strict proof in Appendix~\ref{proof1}. This characteristic reflects the intrinsic non-uniqueness of the differential operator $\mathcal{N}$, which is parameterized by a small number of free modes in classical PDEs. Although the zero-residual set may correspond to a simple hyperplane in the function space, its preimage in the parameter space is generally mapped to a highly distorted and non-convex manifold due to the strong nonlinearity of this mapping, as visualized in Figure~\ref{fig1}.

This characteristic is fundamentally distinct from the Mean Squared Error (MSE) constraint commonly encountered in classic regression problems, where its objective function admits only a single global optimum. In contrast, the continuous zero-residual set $\mathcal{S}_{\mathrm{res}}$ of PINN loss yields a ‘valley-like’ loss landscape in the function space. The bottom of this loss valley consists of multiple distinct functions that exactly satisfy the governing equation, yet differ substantially in their absolute values. Consequently, unlike the ‘flat minima’ widely discussed in supervised regression contexts \cite{keskar2017on,10.5555/3327546.3327556,foret2021sharpnessaware}, this loss valley originates from the intrinsic non-uniqueness of the PDE solution under insufficient constraints, rather than the redundancy of network parameters. Importantly, even when the PDE loss is nearly zero, a significant optimization trajectory may still be required to reach the physically meaningful solution selected by the boundary conditions.

\citet{10.5555/3692070.3693785} provided a quantitative analysis of the PINN loss landscape by estimating the Hessian spectral density, revealing severe pathology with large outlier eigenvalues and a high density of near-zero eigenvalues, consistent with our loss valley hypothesis. Following their work, we further analyze the Hessian in both function and parameter space at converged solutions, as shown in Table~\ref{tab:results0}. We observe that the zero-loss solutions in the function space exhibit an infinite Hessian condition number, confirming the existence of the strict zero-residual manifold predicted by Theorem~\ref{the:lossvalley}. Furthermore, the subspace similarity in the function space reaches $100\%$, verifying the linear solution set shown in Figure~\ref{fig1}.
 After being mapped onto the parameter space, this manifold forms a sharp valley, with a condition number over $10^{22}$. In addition, the low-curvature eigenspaces in parameter space only partially align across different additive constants ($\approx50\%$), indicating that the loss valley forms a broad and geometrically complex manifold, rather than a fixed-dimensional mode. More details about this experiment are in Appendix~\ref{exp1}.

\begin{table}[t]
\caption{Hessian-based analysis of the loss landscape under three different additive constants in both function and parameter space.}
\label{tab:results0}
\centering

\resizebox{\linewidth}{!}{
\begin{tabular}{c|ccccc}
    \toprule
    
    \multirow{2}{*}{\textbf{Const.}} &
    \multirow{2}{*}{\textbf{Final Loss}} &
    \multicolumn{2}{c}{\textbf{Condition Number}} &
    \multicolumn{2}{c}{\textbf{Subspace Similarity}} \\

    \cmidrule(lr){3-4} \cmidrule(lr){5-6} &
    & \textbf{Func. Space} & \textbf{Param Space}
    & \textbf{Func. Space} & \textbf{Param Space} \\
    
    \midrule
    0  & 3.97e-7 & $\infty$ & $1.50 \times 10^{23}$ & $-$     & $-$ \\
    1  & 1.07e-5 & $\infty$ & $5.97 \times 10^{22}$ & 100.0\% & 50.8\% \\
    -1 & 5.24e-8 & $\infty$ & $1.05 \times 10^{24}$ & 100.0\% & 44.8\% \\
    \bottomrule
\end{tabular}
}
\end{table}

\subsubsection{Boundary Condition Terms}

Boundary conditions can be viewed as constraints in the function space $\mathcal{S}_\mathrm{bc} = \{u \in \mathcal{F}(\Omega) \mid \alpha u + \beta\nabla u \cdot \mathbf{n}|_{\Gamma} = g\}$ that intersect the PDE zero-residual solution manifold $\mathcal S_{\mathrm{res}}$. The well-posedness of the resulting boundary value problem depends on whether these constraints are sufficient to eliminate the intrinsic invariances of the differential operator. We have visualized this in Figure~\ref{fig2}.

\textbf{Unique Solution.} When the zeroth-order term is present (i.e., $\alpha > 0$, encompassing Dirichlet and Robin conditions) and standard regularity and coercivity assumptions hold, the constraint removes the additive invariance associated with the PDE operator. Consequently, the intersection $\mathcal{S}_\mathrm{bc} \cap \mathcal{S}_\mathrm{res}$ reduces to a singleton, yielding a unique solution.

\textbf{Non-Unique Solution.} When only the derivative term is present (i.e., $\alpha = 0$, corresponding to the Neumann condition), the constraint does not eliminate the additive constant mode for differential operators depending only on derivatives. As a result, if $u \in \mathcal{S}_\mathrm{bc} \cap \mathcal{S}_\mathrm{res}$, then $u + c$ also belongs to this intersection for any constant $c \in \mathbb{R}$. Therefore, the solution is not unique unless normalization is imposed.

\begin{figure}[t]
	\centering
        \begin{tabular}{@{}m{0.5\linewidth}@{}m{0.5\linewidth}@{}}
            \multicolumn{2}{@{}r@{}}{\includegraphics[width=1\linewidth, trim=15 15 15 15, clip]{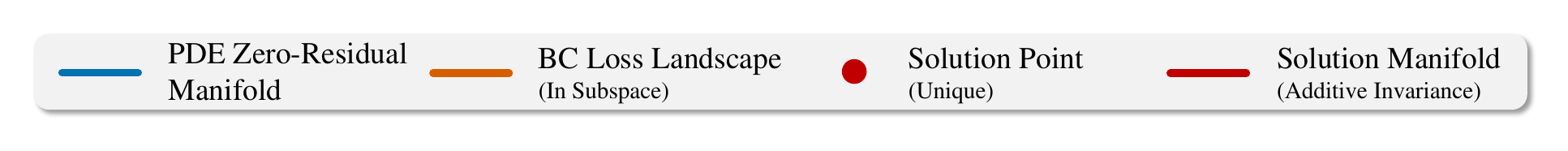}} \\[0.0ex]

            \includegraphics[width=1\linewidth, trim=10 10 15 0, clip]{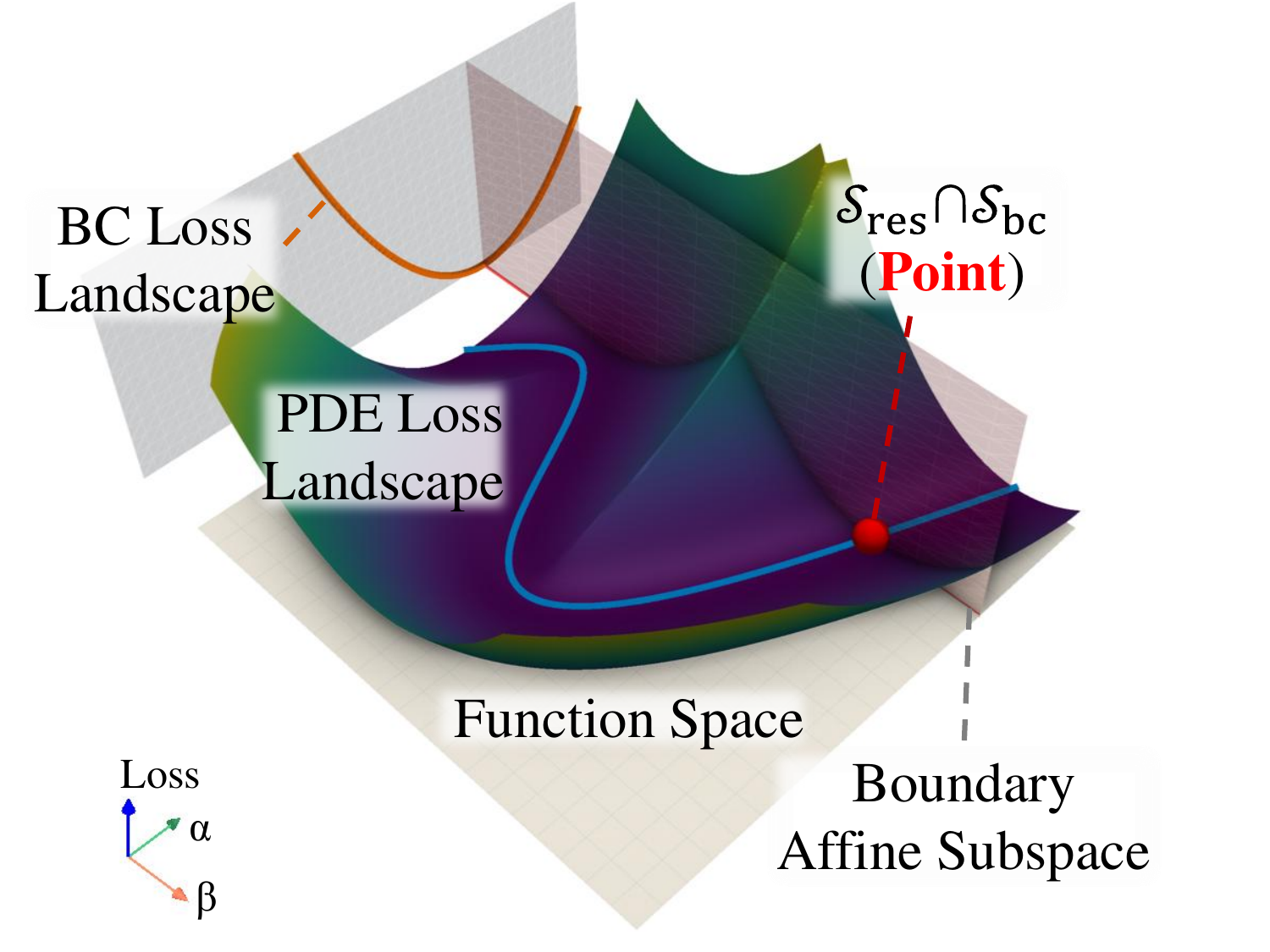} &
            \includegraphics[width=1\linewidth, trim=10 10 15 0, clip]{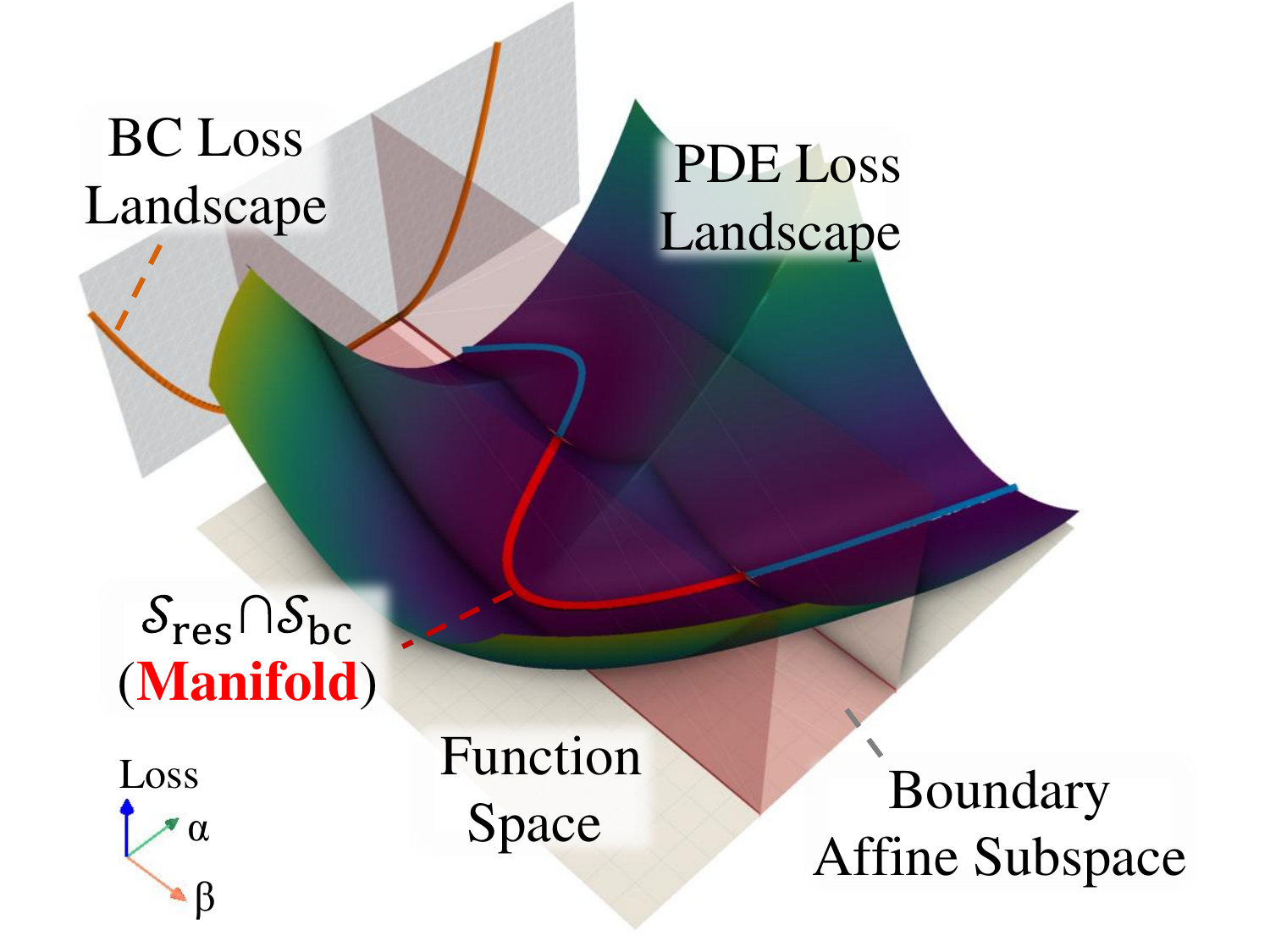} \\[0mm]

            \makebox[\linewidth]{\footnotesize (a) Unique (Well-Posed)} & \makebox[\linewidth]{\footnotesize (b) Non-Unique (Ill-Posed)}\\[-0.5ex]
        \end{tabular}
	\caption{Schematic illustration of the intersection between PDE residual loss landscape and boundary-condition constraint landscape in \textbf{function space}. The zero-residual solutions of the governing PDE form a manifold, while boundary conditions restricting the admissible solutions to a subset. Their intersections correspond to valid solutions, which can be (a) unique or (b) non-unique.}
	\label{fig2}
\end{figure}

\begin{figure*}[t]
	\centering
        \begin{tabular}{@{}m{0.3\linewidth}m{0.3\linewidth}m{0.3\linewidth}@{}}  
            \parbox[t]{\linewidth}{\centering\footnotesize
            (a) \textit{Phase I}} &
            \parbox[t]{\linewidth}{\centering\footnotesize
            (b) \textit{Phase II} \textcolor{green}{(Success)}} &
            \parbox[t]{\linewidth}{\centering\footnotesize
            (c) \textit{Phase II} \textcolor{red}{(Failure)}}\\[0mm]

            \includegraphics[width=1\linewidth, trim=10 10 10 10, clip]{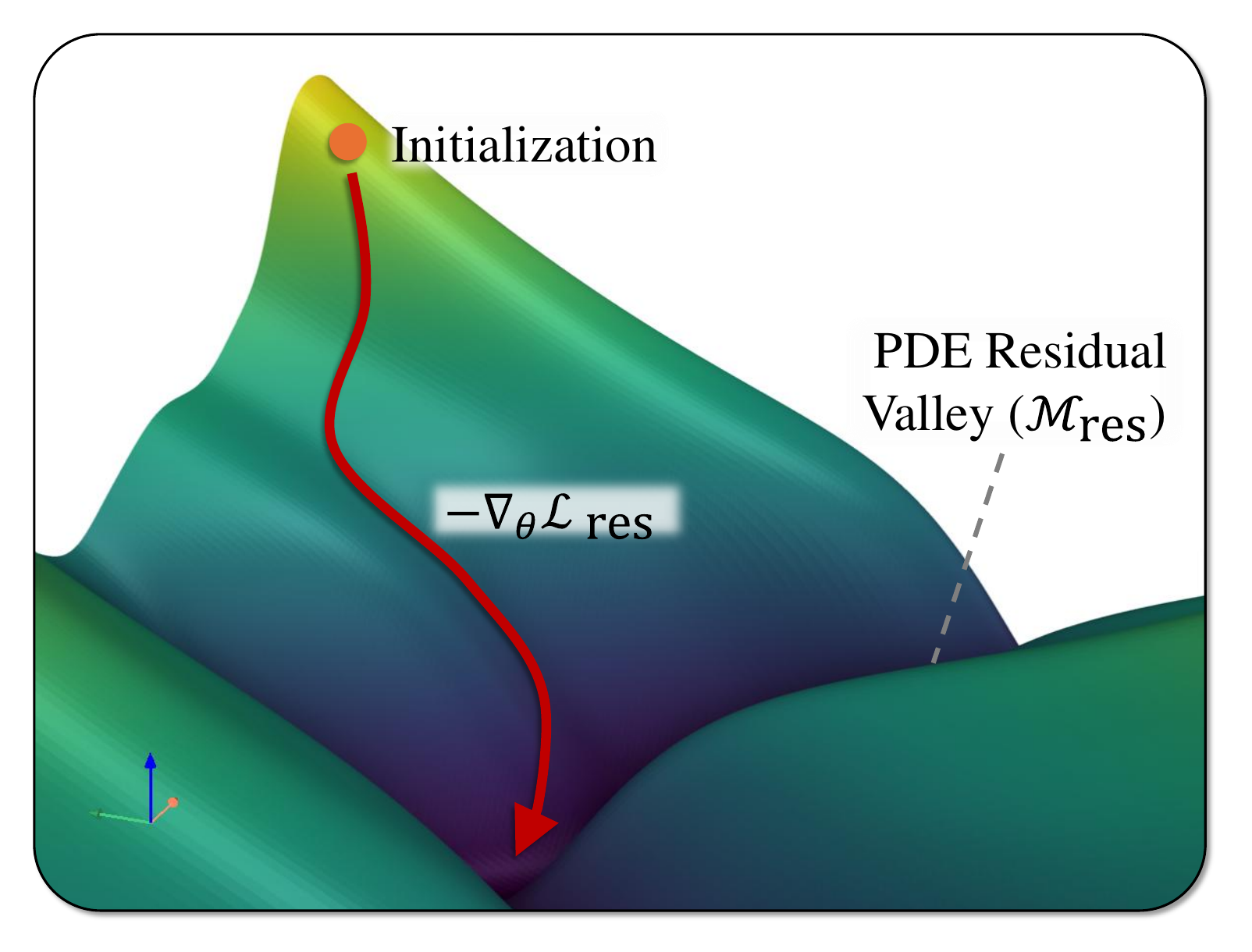} & 
            \includegraphics[width=1\linewidth, trim=10 10 10 10, clip]{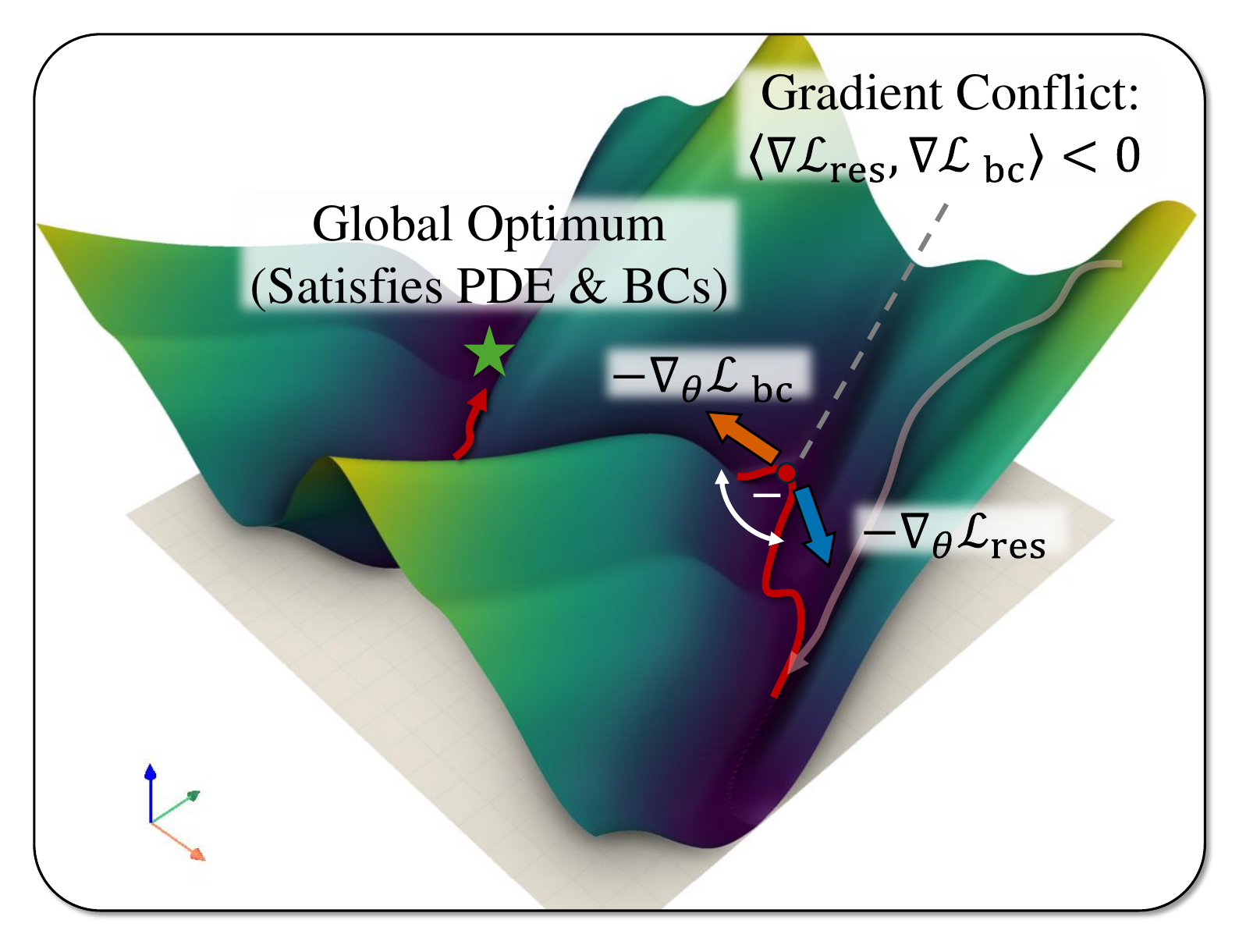} &
            \includegraphics[width=1\linewidth, trim=10 10 10 10, clip]{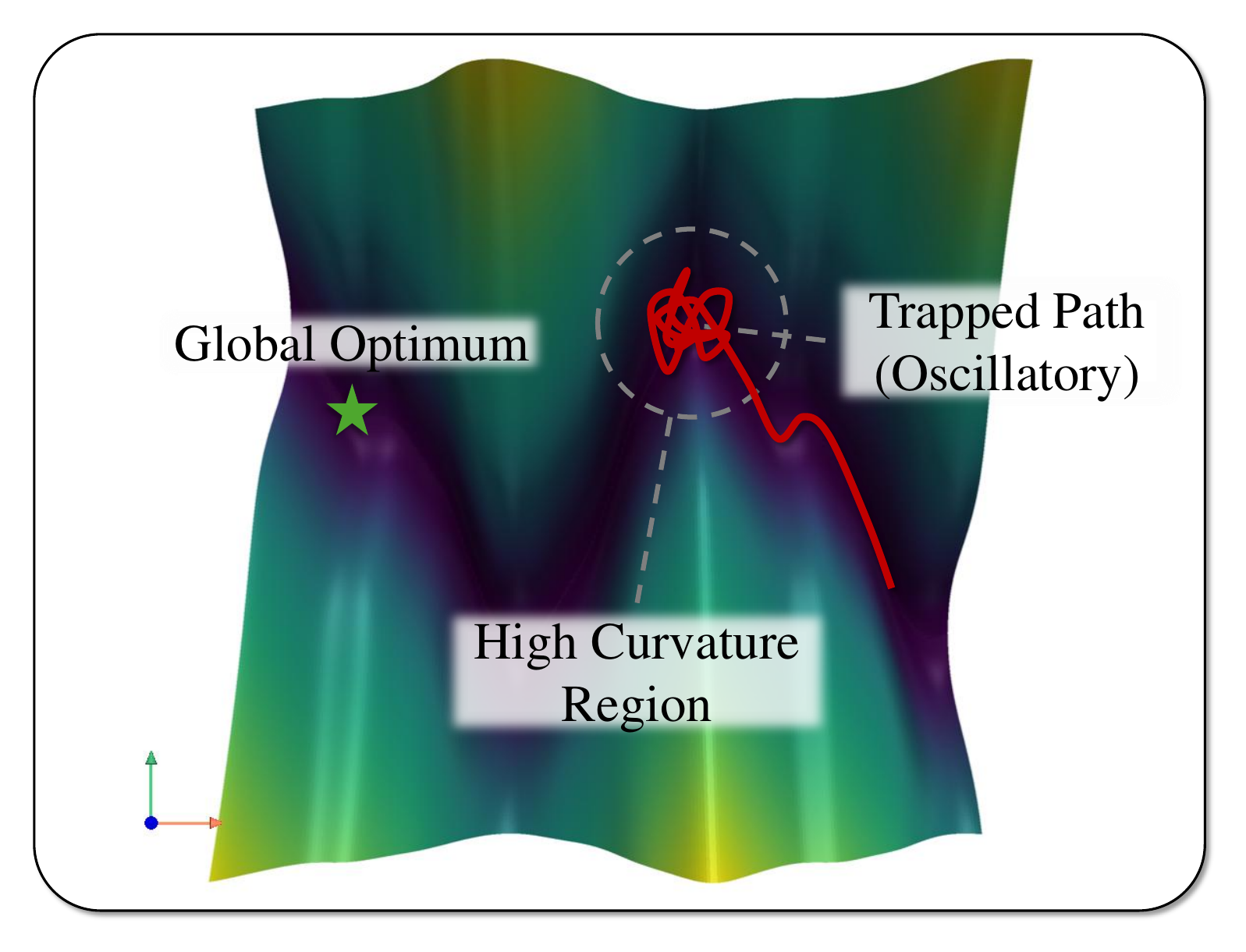}\\[-2mm]

            \parbox[t]{\linewidth}{\centering\scriptsize 
            Dominated by $\|\nabla_\theta \mathcal{L}_{\mathrm{res}}\| \gg \|\nabla_\theta \mathcal{L}_{\mathrm{bc}}\|$, reaching $\mathcal{L}_{\mathrm{res}} \leq \varepsilon$.} &
            \parbox[t]{\linewidth}{\centering\scriptsize
            Guided by $\mathcal{L}_{\mathrm{bc}}$ along the valley, slowed by conflict.} &
            \parbox[t]{\linewidth}{\centering\scriptsize
            High curvature causes oscillatory updates, trapping the parameter in sub-optimal local minima.}
        \end{tabular}
	\caption{Schematic illustration of the two-phase optimization dynamics of PINNs under gradient conflict in \textbf{parameter space}. (a) \textit{Phase I}: dominance of the PDE residual gradient leads to a rapid descent into the low-residual loss valley. (b) \textit{Phase II} (success): boundary-condition gradients steer the optimizer along the valley toward the global optimum. (c) \textit{Phase II} (failure): high-curvature regions induce oscillatory updates, trapping the parameter in suboptimal minima.}
	\label{fig3}
\end{figure*}

\subsection{Optimization Dynamics Analysis}
\label{sec:opt}
In this section, we identify a typical optimization dynamics of standard PINNs from the perspective of gradient competition, as in Figure~\ref{fig3}. Note that we do not claim such a decomposition occurs in all possible PINN optimization scenarios. However, for a PDE with large coefficients, it constitutes a representative behavioral pattern, which we demonstrate in Appendix~\ref{exp2} by an experiment.

\subsubsection{Phase I: Rapid Descent into Loss Valley}

As observed by \citet{wang2021understanding}, the gradient magnitude of the PDE residual term typically dominates that of the boundary condition term in a PDE with large coefficients, namely $\|\nabla_\theta \mathcal{L}_{\mathrm{res}}\|\gg \|\nabla_\theta \mathcal{L}_{\mathrm{bc}}\|$, especially during the early stages of training. Consequently, the optimization dynamics is initially governed by the PDE residual. Under this regime, gradient descent follows
\begin{equation}
    \theta_{t+1} \approx \theta_t - \eta \nabla_\theta \mathcal{L}_{\mathrm{res}}(\theta_t),
\end{equation}
which drives the model to rapidly fall into the low-loss set $\mathcal{M}_{\mathrm{res}}=\{\theta \mid \mathcal{L}_{\mathrm{res}}(\theta)\leq \varepsilon\}$ corresponding to the loss valley induced by the PDE residual, where the threshold $\varepsilon>0$ marks the transition from residual-dominated to boundary-influenced gradients. Since this valley is generally high-dimensional, while the current landing point is determined by both model initialization and loss landscape, the current output value may deviate substantially from the global optimum that satisfies the boundary conditions.

\subsubsection{Phase II: Meandering Within the Valley}

In the subsequent training phase, the model is confined within the low-loss valley $\mathcal{M}_{\mathrm{res}}$ by the PDE residual gradient $\nabla_\theta\mathcal{L}_{\mathrm{res}}$, while the boundary condition gradient $\nabla_\theta\mathcal{L}_{\mathrm{bc}}$ guides the parameters directly toward the global optimum. Therefore, we establish the following lemma:

\begin{lemma}[Gradient Conflict from Perspective of Loss Valley]
Near the PDE loss valley $\mathcal{M}_{\mathrm{res}}$, the residual gradient $\nabla_\theta\mathcal{L}_{\mathrm{res}}$ is normal to this manifold, whereas the boundary condition gradient $\nabla_\theta\mathcal{L}_{\mathrm{bc}}$ generically contains a normal component pointing in the opposite direction. Then the occurrence of a negative inner product
\begin{equation}
    \left\langle
    \nabla_\theta \mathcal{L}_{\mathrm{res}}(\theta),
    \nabla_\theta \mathcal{L}_{\mathrm{bc}}(\theta)
    \right\rangle < 0
\end{equation}
occurs generically during the training process. This leads to a structural gradient conflict during PINN optimization.
\end{lemma}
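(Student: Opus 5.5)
We argue near a point $\theta^\ast$ of the zero-residual manifold $\mathcal{Z}=\{\theta\mid \mathcal{L}_{\mathrm{res}}(\theta)=0\}$, whose existence and connectedness is guaranteed by Theorem~\ref{the:lossvalley}. We then transfer the conclusion to the thickened valley $\mathcal{M}_{\mathrm{res}}$ by continuity. We assume $\mathcal{Z}$ is locally a smooth submanifold, i.e.\ that the Jacobian of the residual map $\theta\mapsto r(\theta)=(\mathcal{N}[u_\theta](x_i)-f(x_i))_i$ has constant rank near $\theta^\ast$.

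\textbf{Step 1: the residual gradient is normal.} Write $\mathcal{L}_{\mathrm{res}}=\frac{1}{N}\|r(\theta)\|^2$, so that $\nabla_\theta\mathcal{L}_{\mathrm{res}}=\frac{2}{N}J_r^\top r$. On $\mathcal{Z}$ this gradient vanishes and the Hessian is $H=\frac{2}{N}J_r^\top J_r$. Its kernel is $\ker J_r$, which is exactly the tangent space $T_{\theta^\ast}\mathcal{Z}$, since tangent vectors are directions along which the residual stays zero. For a nearby point $\theta=\theta^\ast+\delta$, decompose $\delta=\delta_T+\delta_N$ with $\delta_N\in(T\mathcal{Z})^\perp=\operatorname{range}(J_r^\top)$. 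A Taylor expansion gives $\nabla\mathcal{L}_{\mathrm{res}}(\theta)=H\delta_N+O(\|\delta\|^2)$. Because $H$ is positive definite on $(T\mathcal{Z})^\perp$, this vector is normal to the manifold up to higher-order terms. It also satisfies $\langle \nabla\mathcal{L}_{\mathrm{res}},\delta_N\rangle\ge\lambda_{\min}^+\|\delta_N\|^2>0$, so it points away from the valley floor. Choosing $\theta^\ast$ as the nearest-point projection of $\theta$ onto $\mathcal{Z}$ makes $\delta=\delta_N$ exactly, which removes $\delta_T$ from the estimate.

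\textbf{Step 2: the boundary gradient has an opposing normal component.} Decompose $g_{\mathrm{bc}}=\nabla\mathcal{L}_{\mathrm{bc}}(\theta)=P_T g_{\mathrm{bc}}+P_N g_{\mathrm{bc}}$. Then
\[
\langle\nabla\mathcal{L}_{\mathrm{res}},g_{\mathrm{bc}}\rangle=\langle H\delta_N,P_N g_{\mathrm{bc}}\rangle+O(\|\delta\|^2\|g_{\mathrm{bc}}\|).
\]
The key point is that the minimizer of $\mathcal{L}_{\mathrm{bc}}$ restricted to the normal slice through $\theta^\ast$ is generically not $\theta^\ast$ itself, because the boundary loss does not care about the PDE residual. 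So $P_N\nabla\mathcal{L}_{\mathrm{bc}}(\theta^\ast)=:v\neq0$ for generic $\theta^\ast$. This is exactly the statement that $\mathcal{L}_{\mathrm{bc}}$ is not stationary in the normal directions, i.e.\ $J_r^\top$-range directions are not orthogonal to $\nabla\mathcal{L}_{\mathrm{bc}}$. Nonvanishing of $v$ is an open condition, and its failure is a closed, proper analytic condition when $u_\theta$ is analytic in $\theta$ (e.g.\ tanh networks). Hence $v\neq0$ holds on an open dense subset of $\mathcal{Z}$.

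\textbf{Step 3: the inner product is negative on a sizeable set.} Fix such a $\theta^\ast$. Consider points $\theta=\theta^\ast+s\,\delta_N$, where $\delta_N$ ranges over unit normal vectors with $\langle H\delta_N,v\rangle<0$; such vectors exist since $H$ is invertible on the normal space and $v\ne0$. For small $s>0$ the inner product equals $s\langle H\delta_N,v\rangle+O(s^2)<0$. This set of $\theta$ is an open cone, a positive-measure region of $\mathcal{M}_{\mathrm{res}}$ that a trajectory oscillating across the valley (Phase~II) visits generically. That gives the claimed negative inner product and the structural conflict.

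\textbf{Main obstacle.} I expect the hardest part to be making ``generically'' rigorous. This requires the constant-rank and smoothness assumption on $\mathcal{Z}$, since overparameterized networks can have singular strata. It also requires the argument that $v\neq0$ off a null set. I would first try the real-analytic route: show that $v\equiv0$ on an open subset of $\mathcal{Z}$ would force $\mathcal{L}_{\mathrm{bc}}$ to be locally constant along normal slices. That contradicts the fact that boundary values vary with $\theta$ whenever $\mathcal{S}_{\mathrm{res}}\not\subset\mathcal{S}_{\mathrm{bc}}$, which is the non-uniqueness hypothesis of Theorem~\ref{the:lossvalley}.
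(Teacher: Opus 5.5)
Your argument is sound at the paper's level of rigour, but it reaches the conclusion by a different route. The paper's proof in Appendix~\ref{proof2} is first-order and largely axiomatic. It assumes (Assumption~\ref{assump:manifold_approx}) that the low-residual set is an embedded submanifold with $\nabla_\theta\mathcal{L}_{\mathrm{res}}$ normal to it. It then uses the tangent/normal split of Lemma~\ref{prop:normal_tangent_decomp} to reduce the inner product to $\langle\nabla_\theta\mathcal{L}_{\mathrm{res}},\nabla^{\perp}_\theta\mathcal{L}_{\mathrm{bc}}\rangle$. Next it postulates a locally sign-consistent normal preference (Assumption~\ref{ass:D_side}). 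Finally it takes the orientation $\langle\nabla_\theta\mathcal{L}_{\mathrm{bc}},v\rangle<0$, with $v$ the unit residual gradient. The negative sign is therefore supplied by the statement's ``opposite direction'' hypothesis; the case the paper calls symmetric would in fact give a positive product.

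You share the tangent/normal reduction but derive the rest. You work on the exact zero set with a tubular neighbourhood, and use the Gauss--Newton form $H=\tfrac{2}{N}J_r^\top J_r$ with constant rank to get $\ker H=T\mathcal{Z}$. Hence $\nabla_\theta\mathcal{L}_{\mathrm{res}}=H\delta_N+O(\|\delta\|^2)$ is normal to first order and points off the valley floor. Your expansion in $s$ then shows the sign is fixed by which side of the valley the iterate is on. It is negative exactly on the open set of displacements with $\langle H\delta_N,v\rangle<0$, which contains the side toward which boundary descent $-v$ pushes, and positive on the opposite side. This explains where and why the conflict arises, and it gives ``normal'' a precise meaning that is only implicit for the paper's thickened set $\mathcal{M}_{\mathrm{res}}$. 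The cost is extra smoothness, constant-rank and tubular-neighbourhood hypotheses. The paper's version is shorter but proves little beyond what it assumes.

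Three minor points.

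\emph{Positive measure.} For a fixed $\theta^\ast$ your cone lies in a $(P-\dim\mathcal{Z})$-dimensional normal slice. Positive measure therefore comes from openness of the strict inequality in $\theta$, not from the cone itself.

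\emph{Visiting the region.} The claim that a Phase~II trajectory ``visits'' this region is cleaner if you note the following. Wherever the normal part of $w_{\mathrm{res}}\nabla_\theta\mathcal{L}_{\mathrm{res}}+w_{\mathrm{bc}}\nabla_\theta\mathcal{L}_{\mathrm{bc}}$ is nearly balanced, $H\delta_N\approx-\tfrac{w_{\mathrm{bc}}}{w_{\mathrm{res}}}v$. The inner product is then $\approx-\tfrac{w_{\mathrm{bc}}}{w_{\mathrm{res}}}\|v\|^2<0$, so valley-tracking iterates are forced into the conflict side.

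\emph{Genericity of $v\neq0$.} Your sketched route does not work as stated. If $v\equiv0$ on an open piece of $\mathcal{Z}$, this only makes $\mathcal{L}_{\mathrm{bc}}$ stationary at the base of each normal slice, not constant along it. Moreover, if the residual and boundary losses depended on disjoint parameter blocks, one would have $v\equiv0$ identically. The actual condition is $J_rJ_{\mathrm{bc}}^\top(s_b)_b\neq0$, where $J_{\mathrm{bc}}$ is the Jacobian of the boundary residuals; this is the residual--boundary cross-kernel applied to the boundary residuals. Analyticity only reduces this to exhibiting a single point where it holds. Since both the statement and Lemma~\ref{prop:generic_conflict} take $\nabla^{\perp}_\theta\mathcal{L}_{\mathrm{bc}}\neq0$ as a hypothesis, you may simply assume it.
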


The proof of this lemma is provided in Appendix~\ref{proof2}. As a result of this conflict, the model parameter is forced to follow a tortuous trajectory along the valley floor rather than a direct path toward the global optimum, leading to a substantial degradation in convergence speed.

A more severe difficulty arises when the PDE residual-induced loss valley exhibits high curvature along the optimization trajectory. In such cases, the gradient signal provided by the boundary conditions is insufficient to overcome the local geometric constraints from the PDE residual, leading to oscillatory and stagnant update directions. These pathological oscillations hinder the smooth traversal of the valley floor and often trap the parameters in sub-optimal local minima, ultimately slowing or preventing convergence to the true solution.

\section{Methodology}

Based on the above analysis, we redesign the PINN loss function to minimize the extent of optimization exploration during \emph{Phase II}. By expanding the range of feasible solutions and improving the position of entering the valley, we significantly reduce the likelihood of entering regions with high curvature that hinder convergence.

\subsection{Aligned Constraints}

We consider a general steady state PDE given by Equation~(\ref{eq:pde}). At the interior collocation points $\{x_i\}_{i=1}^{N_{\mathrm{res}}} \subset \Omega$, its discrete form can be written as
\begin{equation}
    \mathcal{N}[u_\theta](x_i)
    =
    \underbrace{\mathcal{Z}[u_\theta](x_i)}_{\text{zeroth-order term}}
    +
    \underbrace{\mathcal{D}[u_\theta](x_i)}_{\text{derivative terms}}
    =
    \underbrace{f(x_i)}_{\text{source term}}.
\end{equation}
Similarly, boundary conditions imposed at $\{x_b\}_{b=1}^{N_{\mathrm{bc}}} \subset \Gamma$ are expressed in a unified form as
\begin{equation}
    \mathcal{B}[u_\theta](x_b)
    =
    \underbrace{\alpha_b\, u_\theta(x_b)}_{\text{zeroth-order term}}
    +
    \underbrace{\beta_b\, \nabla u_\theta(x_b)\!\cdot\!\mathbf{n}_b}_{\text{normal derivative term}}
    =
    \underbrace{g(x_b)}_{\text{source term}}.
\end{equation}
This formulation encompasses Dirichlet ($\beta_b=0$), Neumann ($\alpha_b=0$), and Robin ($\alpha_b,\beta_b>0$) as special cases.

For a well-posed problem, a PDE and boundary condition with a nonzero zeroth-order term can remove the additive constant ambiguity. However, it also constrains the solution to a fixed point. As discussed in Section~\ref{sec:opt}, even when the model is already initialized close to the global optimum, the PDE residual term may still steer the parameter into a valley far from the true solution, thus significantly slowing convergence. To address this issue, a possible approach is to relax the constraint range, thereby making it easier for the optimizer to identify a global optimum in the loss valley that simultaneously satisfies PDE and boundary conditions.

Therefore, we introduce a solvable additive constant offset $c \in \mathbb{R}$ for all zeroth-order terms into the loss formulation, thus transforming the task into a controllable ill-posed problem. By explicitly solving (linear) or performing a small number of iterations (nonlinear) within each backpropagation step, the optimal value of $c$ is determined, which permits a controlled translation along the additive-invariant direction of the PDE solution manifold during the entire training process. Specifically, we define interior and boundary residuals as
\begin{equation}
    \begin{aligned}
        r_i &:= \mathcal{Z}[u_\theta](x_i) + \mathcal{D}[u_\theta](x_i) - f(x_i),\\
        s_b &:= \alpha_b\, u_\theta + \beta_b\, \nabla u_\theta\!\cdot\!\mathbf{n}_b - g(x_b) ,
    \end{aligned}
\end{equation}
then their aligned constraint form can be defined as
\begin{equation}
    \begin{aligned}
        \bar r_i &:= \mathcal{Z}[u_\theta + \textcolor{red}{c}](x_i) + \mathcal{D}[u_\theta](x_i) - f(x_i),\\
        \bar s_b &:= \alpha_b\, (u_\theta + \textcolor{red}{c}) + \beta_b\, \nabla u_\theta\!\cdot\!\mathbf{n}_b - g(x_b) .
    \end{aligned}
\end{equation}
Accordingly, the aligned PDE residual loss and boundary condition loss are given by
\begin{equation}
    \mathcal{L}_\mathrm{res}^\mathrm{alg}
    =
    \frac{1}{N_\mathrm{res}}
    \sum_{i=1}^{N_\mathrm{res}}
    \left| \bar r_i \right|^2,\quad
    \mathcal{L}_\mathrm{bc}^\mathrm{alg}
    =
    \frac{1}{N_\mathrm{bc}}
    \sum_{b=1}^{N_\mathrm{bc}}
    \left| \bar s_b \right|^2.
\end{equation}
The optimal offset $c$ is then obtained by solving a one-dimensional auxiliary sub-optimization problem
\begin{equation}
    \arg\min_{c \in \mathbb{R}}\;
    w_{\mathrm{res}}\mathcal{L}_\mathrm{res}^\mathrm{alg}(c)
    +
    w_{\mathrm{bc}}\mathcal{L}_\mathrm{bc}^\mathrm{alg}(c).
    \label{eq:target}
\end{equation}

\textbf{Linear Case.} For linear zeroth-order terms, this problem admits a closed-form solution:
\begin{equation}
    c
    =
    -\frac{
    \frac{w_{\mathrm{res}}}{N_{\mathrm{res}}} \sum_{i=1}^{N_{\mathrm{res}}} \gamma_i r_i
    +
    \frac{w_{\mathrm{bc}}}{N_{\mathrm{bc}}} \sum_{b=1}^{N_{\mathrm{bc}}} \alpha_b s_b
    }{
    \frac{w_{\mathrm{res}}}{N_{\mathrm{res}}} \sum_{i=1}^{N_{\mathrm{res}}} \gamma_i^{2}
    +
    \frac{w_{\mathrm{bc}}}{N_{\mathrm{bc}}} \sum_{b=1}^{N_{\mathrm{bc}}} \alpha_b^{2}
    } ,
\end{equation}
where $\gamma_i$ denotes the coefficient of the zeroth-order terms in PDE, i.e., $\mathcal{Z}[u](x_i)=\gamma_i\,u(x_i)$. We derived this in Appendix~\ref{app:E1_linear_plain}. In practice, $u_\theta$ is explicitly detached from $c$ to prevent gradient propagation through this offset.

\textbf{Nonlinear Case.} For nonlinear zeroth-order terms, deriving an optimal value for $c$ generally requires iterative solvers. Here, in the early iteration steps, we recommend using Newton's method several times to update $c$. For more complex PDEs, first-order methods such as gradient descent can also be used as alternatives. We provide details in Appendix~\ref{app:E2_nonlinear_plain}.

By introducing this additive constant $c$, the network output is no longer constrained to a fixed solution but is instead allowed to translate along this constant direction. Therefore, when training converges,
\begin{equation}
    u^*(x_i)=u_\theta(x_i) + \textcolor{red}{c}.
\end{equation}
The true solution $u^*$ can be obtained from $u_\theta$ by explicitly eliminating the learned offset $c$.

From an optimization standpoint, this aligned constraint formulation substantially enlarges the overlap between the set of boundary-admissible solutions and the PDE-admissible solution manifold. As a result, even if the parameter falls into a loss valley that is far from the original global optimum at \emph{Phase~I}, the optimizer can still efficiently locate an additive shift that is compatible with both constraints within a higher-dimensional expansion. We rigorously prove and discuss this enlargement in Appendix~\ref{app:C}, and provide the full pipeline in Algorithm~\ref{alg:caml} in Appendix~\ref{app:caml}.

\begin{figure}[t]
	\centering
    \begin{tabular}{@{}m{0.48\linewidth}@{\hspace{0.04\linewidth}}m{0.48\linewidth}@{}}
        \multicolumn{2}{@{}r@{}}{\includegraphics[scale=0.35, trim=0 15 15 15, clip]{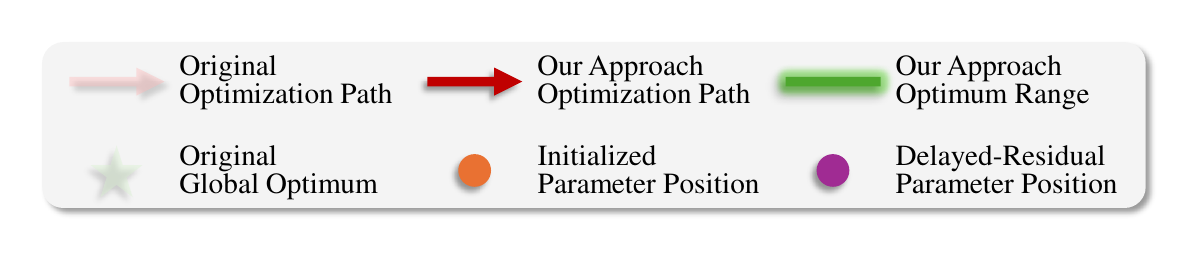}} \\[0.0ex]

        \includegraphics[width=1\linewidth, trim=10 8 8 10, clip]{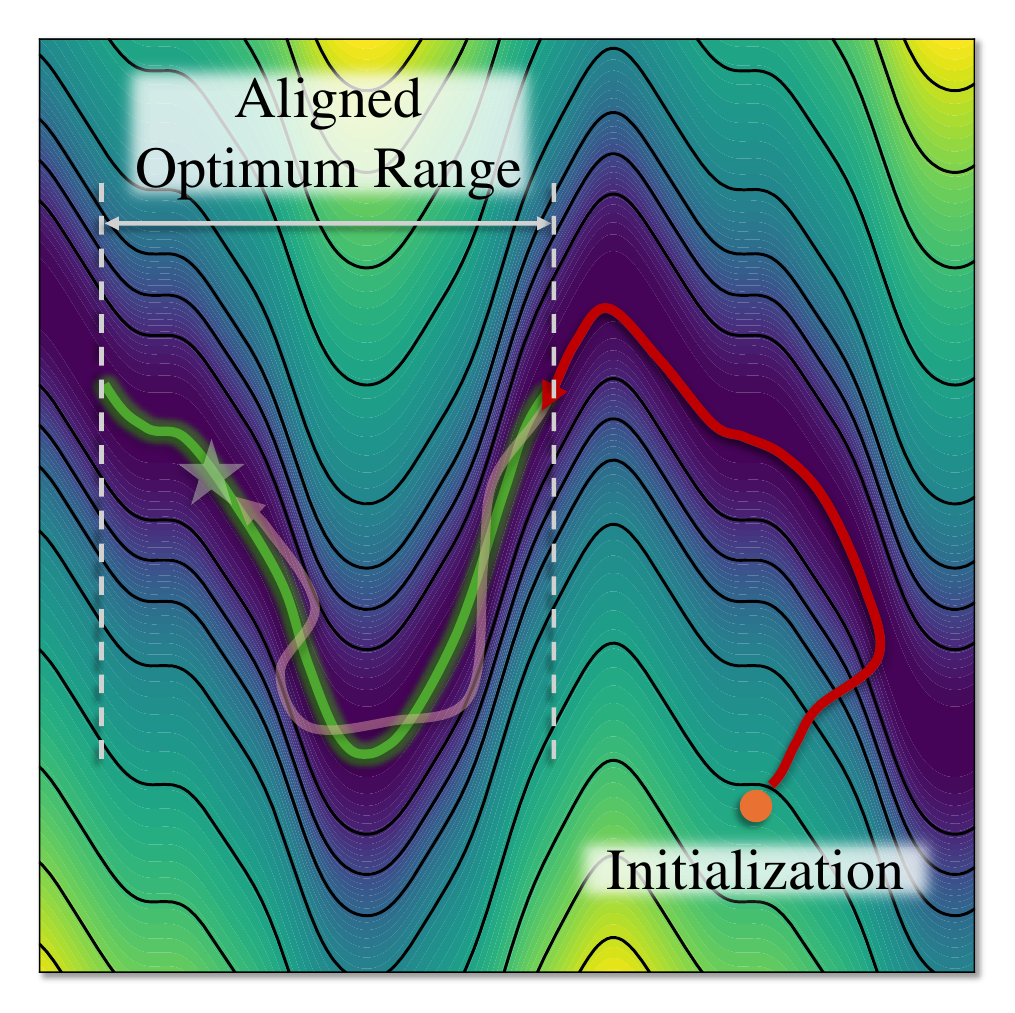} & 
        \includegraphics[width=1\linewidth, trim=10 8 8 10, clip]{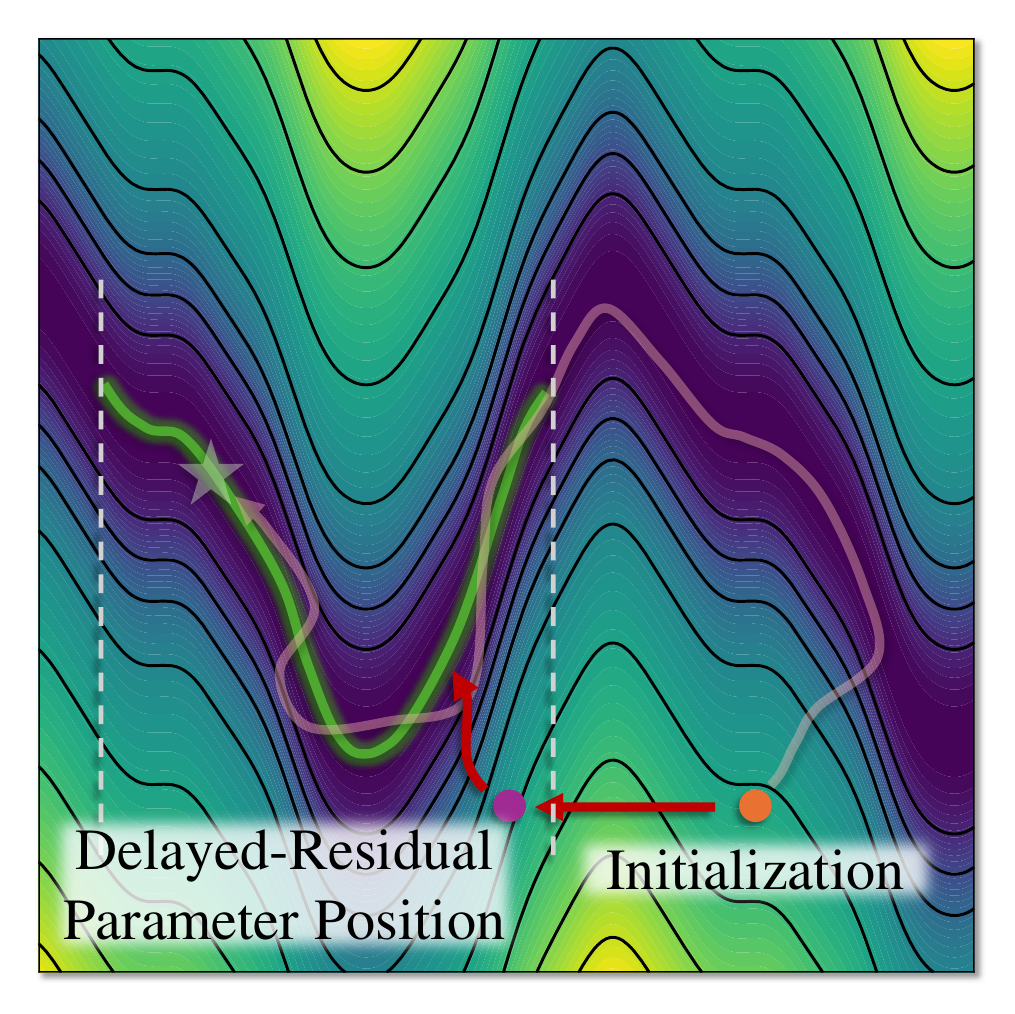}\\[0mm]
        
        \makebox[\linewidth]{\footnotesize (a) Aligned Constraints} & \makebox[\linewidth]{\footnotesize (b) Delay-Residual Loss}\\
    \end{tabular}
	\caption{Conceptual illustration of the proposed optimization mechanism. (a) Aligned constraints enlarge the admissible solution range, while (b) delaying the residual loss guides the optimizer into the PDE loss valley from a favorable region, enabling smoother convergence to the global optimum.}
	\label{fig4}
\end{figure}

\subsection{Delay Factor for Residual Loss}

\begin{table*}[t]
\caption{Experimental results on different PDEs with various backbone architectures.
Stp: the first iteration step at which relative $L_2$ falls below the threshold (lower is better);
$L_2$: mean relative $L_2$ error with standard deviation at a fixed iteration $T_\mathrm{min}$ (lower is better);
The best results are highlighted in \textbf{bold}, and the second-best results are \underline{underlined}. If the performance of any one of the five seeds does not reach the threshold within the fixed iteration budget $T_\mathrm{max}$, it is treated as unsuccessful (reported as ‘$-$’).}
\label{tab:results}
\centering

\resizebox{\textwidth}{!}{
\begin{tabular}{ll|llll|llll|llll}
    \toprule
    & 
    & \multicolumn{4}{c|}{\textbf{MLP}} 
    & \multicolumn{4}{c|}{\textbf{PirateNets}} 
    & \multicolumn{4}{c}{\textbf{PINNsFormer}} \\
    
    \cmidrule(lr){3-6} \cmidrule(lr){7-10} \cmidrule(lr){11-14}
    & 
    & \makecell[c]{\textbf{Heat}} & \makecell[c]{\textbf{Pois.}} & \makecell[c]{\textbf{NS}} & \makecell[c]{\textbf{Helm.}}
    & \makecell[c]{\textbf{Heat}} & \makecell[c]{\textbf{Pois.}} & \makecell[c]{\textbf{NS}} & \makecell[c]{\textbf{Helm.}}
    & \makecell[c]{\textbf{Heat}} & \makecell[c]{\textbf{Pois.}} & \makecell[c]{\textbf{NS}} & \makecell[c]{\textbf{Helm.}} \\
    \midrule
    
    \multirow{2}{*}{\makecell[l]{PINN \\ \citeyearpar{RAISSI2019686}}}
    & Stp
    & 10127$_{\pm 2839}$           & \makecell[c]{\multirow{2}{*}{$-$}} & 8005$_{\pm 1191}$ & 15375$_{\pm 3583}$
    & \underline{1141}$_{\pm 11}$  & \makecell[c]{\multirow{2}{*}{$-$}} & 8678$_{\pm 5523}$ & 3875$_{\pm 2445}$
    & 1599$_{\pm 99}$              & 3197$_{\pm 296}$                   & 389$_{\pm 17}$    & \makecell[c]{\multirow{2}{*}{$-$}} \\
    & $L_2$
    & 7.52e-3$_{\pm 4\text{e-}3}$             &                      & 1.11e-2$_{\pm 5\text{e-}3}$ & 2.21e-2$_{\pm 1\text{e-}2}$
    & \underline{8.56e-3}$_{\pm 5\text{e-}3}$ &                      & 8.48e-2$_{\pm 6\text{e-}2}$ & 7.29e-3$_{\pm 1\text{e-}2}$
    & 3.40e-2$_{\pm 3\text{e-}2}$             & 2.86e-1$_{\pm 7\text{e-}2}$ & 4.49e-3$_{\pm 1\text{e-}3}$ & \\[1ex]
    
    \multirow{2}{*}{\makecell[l]{$L^\infty$-PINN \\ \citeyearpar{10.5555/3600270.3600872}}}
    & Stp
    & \makecell[c]{\multirow{2}{*}{$-$}} & 12648$_{\pm 6212}$                 & 4768$_{\pm 552}$              & \underline{2801}$_{\pm 205}$
    & 7721$_{\pm 2881}$                  & \makecell[c]{\multirow{2}{*}{$-$}} & \underline{5545}$_{\pm 5413}$ & \underline{1501}$_{\pm 595}$
    & 1912$_{\pm 290}$                   & 3142$_{\pm 231}$                   & 415$_{\pm 97}$                & \underline{3435}$_{\pm 1058}$ \\
    & $L_2$
    &                      & 5.07e-1$_{\pm 2\text{e-}1}$ & 8.02e-3$_{\pm 3\text{e-}3}$ & \underline{1.34e-3}$_{\pm 4\text{e-}4}$
    & 1.01e-1$_{\pm 9\text{e-}2}$ &                      & 9.12e-2$_{\pm 7\text{e-}2}$ & 4.92e-3$_{\pm 4\text{e-}3}$
    & 1.41e-1$_{\pm 1\text{e-}1}$ & 2.00e-1$_{\pm 2\text{e-}2}$ & 1.38e-2$_{\pm 1\text{e-}2}$ & \underline{1.97e-3}$_{\pm 5\text{e-}4}$ \\[1ex]

    \multirow{2}{*}{\makecell[l]{SA-PINN \\ \citeyearpar{MCCLENNY2023111722}}}
    & Stp
    & 8958$_{\pm 2501}$ & 8355$_{\pm 3867}$         & 6888$_{\pm 2902}$         & 8959$_{\pm 2507}$
    & 4713$_{\pm 1043}$ & 9501$_{\pm 3755}$         & 8930$_{\pm 5914}$         & 4598$_{\pm 3340}$
    & 1649$_{\pm 227}$  & \textbf{1888}$_{\pm 118}$ & \underline{330}$_{\pm 9}$ & \makecell[c]{\multirow{2}{*}{$-$}} \\
    & $L_2$
    & \underline{4.71e-3}$_{\pm 2\text{e-}3}$ & 1.95e-1$_{\pm 2\text{e-}1}$          & 9.79e-3$_{\pm 4\text{e-}3}$ & 8.07e-3$_{\pm 5\text{e-}3}$
    & 8.64e-2$_{\pm 1\text{e-}1}$             & 6.13e-1$_{\pm 5\text{e-}1}$          & 8.71e-2$_{\pm 6\text{e-}2}$ & 6.44e-3$_{\pm 4\text{e-}3}$
    & 1.23e-2$_{\pm 1\text{e-}3}$             & \textbf{4.41e-2}$_{\pm 1\text{e-}2}$ & 6.76e-3$_{\pm 5\text{e-}3}$ & \\[1ex]
    
    \multirow{2}{*}{\makecell[l]{BRDR \\ \citeyearpar{CHEN2025114226}}}
    & Stp
    & 15213$_{\pm 2175}$ & \underline{6137}$_{\pm 795}$  & 5462$_{\pm 605}$  & 13131$_{\pm 2505}$
    & 5245$_{\pm 1058}$  & \underline{8276}$_{\pm 1326}$ & 9427$_{\pm 4083}$ & 5831$_{\pm 6922}$
    & 1941$_{\pm 203}$   & 2333$_{\pm 214}$              & 367$_{\pm 13}$    & \makecell[c]{\multirow{2}{*}{$-$}} \\
    & $L_2$
    & 5.58e-3$_{\pm 2\text{e-}3}$ & \underline{8.51e-2}$_{\pm 7\text{e-}2}$ & 8.66e-3$_{\pm 3\text{e-}3}$ & 2.05e-2$_{\pm 1\text{e-}2}$
    & 5.51e-1$_{\pm 3\text{e-}1}$ & \underline{5.20e-1}$_{\pm 3\text{e-}1}$ & 9.26e-2$_{\pm 6\text{e-}2}$ & 6.81e-3$_{\pm 1\text{e-}2}$ 
    & 1.67e-2$_{\pm 1\text{e-}2}$ & 8.65e-2$_{\pm 4\text{e-}2}$             & 6.25e-3$_{\pm 1\text{e-}3}$ & \\[1ex]
    
    \multirow{2}{*}{\makecell[l]{DB-PINN \\ \citeyearpar{zhou2025dual}}}
    & Stp
    & \underline{6988$_{\pm 329}$} & 9451$_{\pm 6738}$                  & \underline{3981}$_{\pm 294}$ & 11005$_{\pm 1662}$
    & 12689$_{\pm 6402}$           & \makecell[c]{\multirow{2}{*}{$-$}} & 6054$_{\pm 4412}$            & 2304$_{\pm 1113}$
    & \underline{1159}$_{\pm 132}$ & \makecell[c]{\multirow{2}{*}{$-$}} & 331$_{\pm 36}$               & 4882$_{\pm 245}$ \\
    & $L_2$
    & 6.82e-2$_{\pm 1\text{e-}2}$          & 4.56e-1$_{\pm 2\text{e-}1}$ & \underline{7.89e-3}$_{\pm 2\text{e-}3}$ & 1.22e-2$_{\pm 1\text{e-}2}$
    & 3.12e-1$_{\pm 2\text{e-}1}$          &                      & \underline{3.47e-2}$_{\pm 2\text{e-}2}$ & \underline{3.69e-3}$_{\pm 2\text{e-}3}$
    & \textbf{1.67e-3}$_{\pm 8\text{e-}3}$ &                      & \underline{5.69e-3}$_{\pm 2\text{e-}3}$ & 5.37e-3$_{\pm 2\text{e-}3}$ \\
    \midrule
    \rowcolor{blue!10}
    & Stp
    & \textbf{1577}$_{\pm 130}$ & \textbf{3868}$_{\pm 1932}$  & \textbf{1269}$_{\pm 966}$ & \textbf{1004}$_{\pm 450}$
    & \textbf{713}$_{\pm 496}$  & \textbf{4880}$_{\pm 726}$   & \textbf{2597}$_{\pm 748}$ & \textbf{410}$_{\pm 401}$
    & \textbf{746}$_{\pm 115}$  & \underline{1936}$_{\pm 78}$ & \textbf{201}$_{\pm 54}$   & \textbf{1659}$_{\pm 1604}$ \\
    
    \rowcolor{blue!10}
    \multirow{-2}{*}{\makecell[l]{\textbf{CAML} \\ \textbf{(Ours)}}}
    & $L_2$
    & \textbf{1.16e-3}$_{\pm 8\text{e-}5}$    & \textbf{5.00e-3}$_{\pm 9\text{e-}4}$    & \textbf{4.73e-3}$_{\pm 3\text{e-}4}$ & \textbf{1.56e-4}$_{\pm 5\text{e-}5}$
    & \textbf{4.98e-3}$_{\pm 4\text{e-}3}$    & \textbf{3.24e-2}$_{\pm 1\text{e-}2}$    & \textbf{2.79e-2}$_{\pm 2\text{e-}2}$ & \textbf{1.19e-3}$_{\pm 5\text{e-}4}$
    & \underline{4.04e-3}$_{\pm 9\text{e-}4}$ & \underline{8.29e-2}$_{\pm 1\text{e-}2}$ & \textbf{4.07e-3}$_{\pm 2\text{e-}3}$ & \textbf{9.10e-4}$_{\pm 3\text{e-}4}$ \\
    \bottomrule
\end{tabular}
}
\end{table*}

From the perspective of optimization dynamics, another way to shorten the optimization path is to directly bypass the high-curvature regions of the PDE loss landscape. Consequently, we define a time-dependent gating function $\lambda(t)\in[0,1]$ to guide the parameters to rapidly approach the affine subspace that satisfies the boundary conditions at the early stage of training. The total loss can be written as
\begin{equation}
    \mathcal{L}(\theta;t) = w_{\mathrm{res}}\,\lambda(t)\,\mathcal{L}_{\mathrm{res}}^\mathrm{alg} + w_{\mathrm{bc}}\,\mathcal{L}_{\mathrm{bc}}^{\mathrm{alg}}.
    \label{eq:loss1}
\end{equation}
A simple and reproducible choice for $\lambda(t)$ is a piecewise linear schedule:
\begin{equation}
    \lambda(t)=
    \begin{cases}
        0, & t < t_d,\\
        (t-t_d)/t_r, & t_d \le t < t_d+t_r,\\
        1, & t \ge t_d+t_r,
    \end{cases}
\end{equation}
where $t_d$ is the delay step number and $t_r$ is the ramp length.

Although Equation~(\ref{eq:loss1}) resembles a dynamic reweighting scheme, the proposed delay factor fundamentally alters the optimization dynamics. Fixed or adaptive weighting strategies still allow the residual term to dominate early training, thereby pulling the parameters into an unfavorable location within the residual valley. In contrast, the delay factor explicitly prevents this, thus reducing the probability of entering a valley that is far from the global optimum.

\section{Experiments and Results}

Our experiments aim to answer the following five questions: \emph{RQ1}. How does the proposed approach compare with state-of-the-art PINN loss variants in terms of efficiency, stability, and sensitivity to initialization across different physical problems? \emph{RQ2}. Can the proposed aligned constraint formulation effectively mitigate gradient conflicts between PDE residuals and boundary conditions during PINN training and shorten the optimization path? \emph{RQ3}. What is the contribution of aligned constraints and delay-residual optimization? \emph{RQ4}. How does CAML integrate with the existing conflict-aware optimizers? And \emph{RQ5}. To which scenarios is CAML applicable?

\subsection{Benchmarks and Experiments Setup}

We selected four representative and challenging benchmarks across thermodynamics, fluid mechanics and electromagnetism, including: \emph{(i)} Heat, a heat conduction problem with composite boundary conditions, \emph{(ii)} Poisson, a Poisson problem with complex nonlinear boundary conditions, \emph{(iii)} NS, a steady-state Navier-Stokes problem with a nonlinear operator, and \emph{(iv)} Helmholtz, a Helmholtz reaction-diffusion problem with complex geometry and high frequency. As baselines, we compare with the following five \textbf{loss function-based} PINN variants: \emph{(i)} classic PINN~\cite{RAISSI2019686}, \emph{(ii)} $L^\infty$-PINN~\cite{10.5555/3600270.3600872}, \emph{(iii)} SA-PINN~\cite{MCCLENNY2023111722}, \emph{(iv)} BRDR~\cite{CHEN2025114226}, and \emph{(v)} DB-PINN~\cite{zhou2025dual}.

To systematically evaluate the performance of our method across different parameter spaces, we consider three representative model architectures in all benchmarks: \emph{(i)} an MLP-based PINN, \emph{(ii)} PirateNets~\cite{JMLR:v25:24-0313}, a physics-informed ResNet, and \emph{(iii)} PINNsFormer~\cite{zhao2024pinnsformer}, a physics-informed Transformer. For all experiments, we keep the network structures and hyperparameter configurations consistent with the original implementations, and only replace the loss function. The optimizer is Adam~\cite{kingma2015adam}. We implemented the algorithms using PyTorch~\cite{paszke2019pytorch} and the experiments are conducted on an NVIDIA RTX 5090 GPU.

In each experiment, we selected five random seeds for initialization. In addition to \emph{(i)} relative $L_2$ error at the same iteration step $T_{\min}$ and \emph{(ii)} the number of iterations required for the relative $L_2$ error to reach a predefined accuracy threshold, we further introduce the following statistic indicator in part of the experiments to evaluate the stability of optimization: \emph{(iii)} full-progress gradient cosine similarity
\begin{equation}
    \cos(\phi)
    =
    \frac{
    \langle \nabla_\theta \mathcal{L}_{\mathrm{res}},
        \nabla_\theta \mathcal{L}_{\mathrm{bc}} \rangle
    }{
        \|\nabla_\theta \mathcal{L}_{\mathrm{res}}\|
    \|\nabla_\theta \mathcal{L}_{\mathrm{bc}}\|
    }
\end{equation}
following previous work~\cite{du2020adaptingauxiliarylossesusing,wang2021understanding}, which characterizes the degree of conflict between the residual and boundary-condition gradients. The higher this value, the more similar the directions of the two gradients will be. We report the fraction of training iterations (over a fixed total number of iterations) for which $\cos(\phi)>0$. All details of the experimental setup are in Appendix~\ref{sec:setup}.

\begin{table}[t]
\caption{Full-process gradient cosine similarity $\cos(\phi)$ on Heat and Poisson benchmarks. $\cos(\phi)$ is only a diagnostic statistic, and should not be interpreted as a performance metric.}
\label{tab:results1}
\centering

\resizebox{\linewidth}{!}{
\begin{tabular}{ll|>{\centering\arraybackslash}p{1.2cm}>{\centering\arraybackslash}p{1.2cm}|>{\centering\arraybackslash}p{1.2cm}>{\centering\arraybackslash}p{1.2cm}|>{\centering\arraybackslash}p{1.2cm}>{\centering\arraybackslash}p{1.2cm}}
    \toprule
    &
    & \multicolumn{2}{c|}{\textbf{MLP}} 
    & \multicolumn{2}{c|}{\textbf{PirateNets}} 
    & \multicolumn{2}{c}{\textbf{PINNsFormer}} \\
    
    \cmidrule(lr){3-4} \cmidrule(lr){5-6} \cmidrule(lr){7-8}
    &
    & \textbf{Heat} & \textbf{Pois.}
    & \textbf{Heat} & \textbf{Pois.}
    & \textbf{Heat} & \textbf{Pois.} \\
    \midrule
    
    PINN & \citeyearpar{RAISSI2019686}
    & 5.89\%  & 3.85\%
    & 1.44\%  & 4.09\%
    & 5.13\%  & 7.07\% \\
    
    $L^\infty$-PINN & \citeyearpar{10.5555/3600270.3600872}
    & 11.98\% & 62.81\%
    & 40.22\% & 84.08\%
    & 54.68\% & 87.78\% \\
    
    SA-PINN & \citeyearpar{MCCLENNY2023111722}
    & 4.95\% & 9.73\% 
    & 5.72\% & 11.18\%
    & 5.27\% & 6.98\%  \\
    
    BRDR & \citeyearpar{CHEN2025114226}
    & 4.57\% & 21.28\%
    & 6.56\% & 4.63\%             
    & 4.02\% & 8.00\%              \\
    
    DB-PINN & \citeyearpar{zhou2025dual}
    & 0.02\% & 3.19\%
    & 7.82\% & 10.01\%
    & 5.17\% & 16.32\% \\
    \midrule
    
    \rowcolor{blue!10}
    \textbf{CAML} & \textbf{(Ours)}
    & 39.16\% & 52.75\%
    & 39.41\% & 31.58\%
    & 41.45\%  & 55.50\% \\
    
    \bottomrule
\end{tabular}
}
\end{table}

\subsection{Results and Discussions}

\textbf{RQ1: Performance Analysis.} Table~\ref{tab:results} presents quantitative comparisons across four PDE benchmarks and three backbone architectures. CAML consistently achieves the best or second-best $L_2$ accuracy in nearly all settings, while requiring substantially fewer iterations to reach the target error. In contrast, several baselines exhibit large variance or fail entirely when some seeds do not converge within the iteration budget. CAML is successful in most trials and shows significantly reduced variance in both convergence steps and final error. This robustness aligns with the analysis in Section~\ref{sec:opt}: by enlarging the admissible solution set via manifold lifting, CAML mitigates sensitivity to initialization-dependent minima in the residual valley. Moreover, its consistent performance across architectures indicates that the gains arise from improved optimization geometry induced by aligned constraints, rather than architecture-specific effects.

\begin{figure}[t]
	\centering
    \begin{tabular}{@{}m{0.49\linewidth}@{\hspace{0.02\linewidth}}m{0.49\linewidth}@{}}
        \multicolumn{2}{@{}r@{}}{\includegraphics[width=1\linewidth, trim=0 15 0 15, clip]{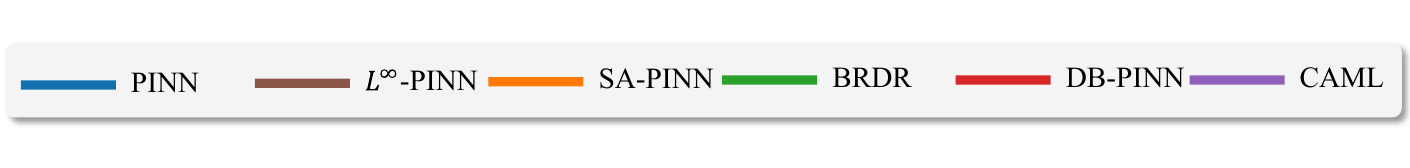}} \\[0.50ex]

        \includegraphics[width=1\linewidth, trim=0 0 0 0, clip]{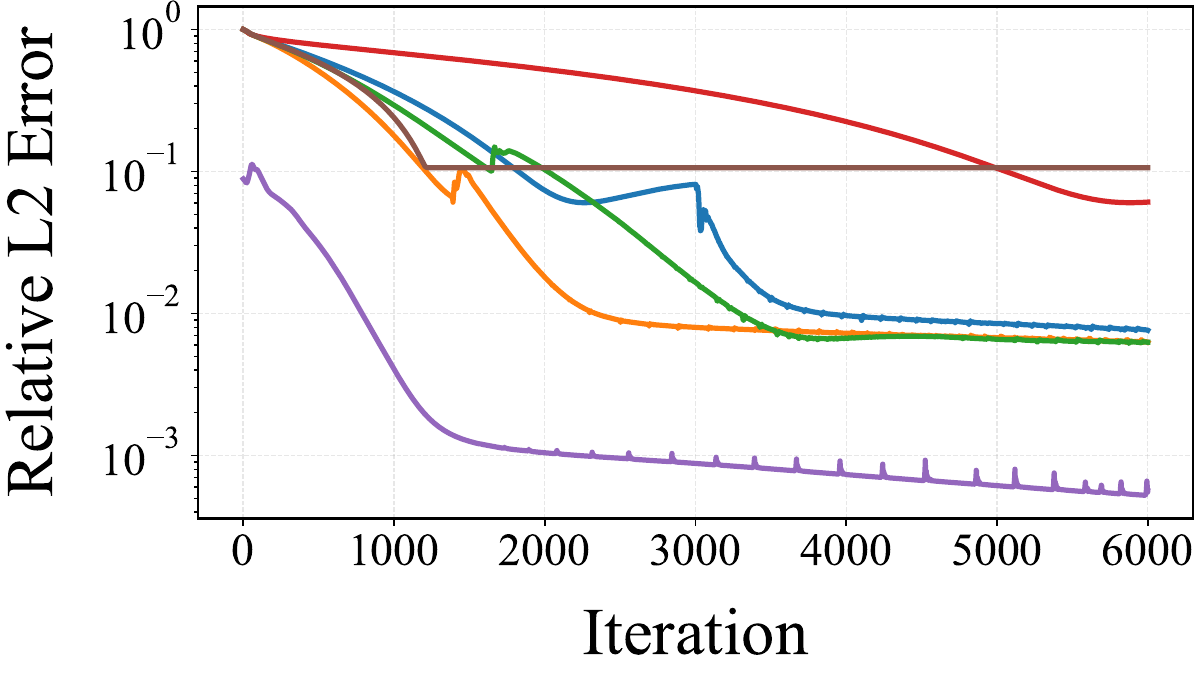} & 
        \includegraphics[width=1\linewidth, trim=0 0 0 0, clip]{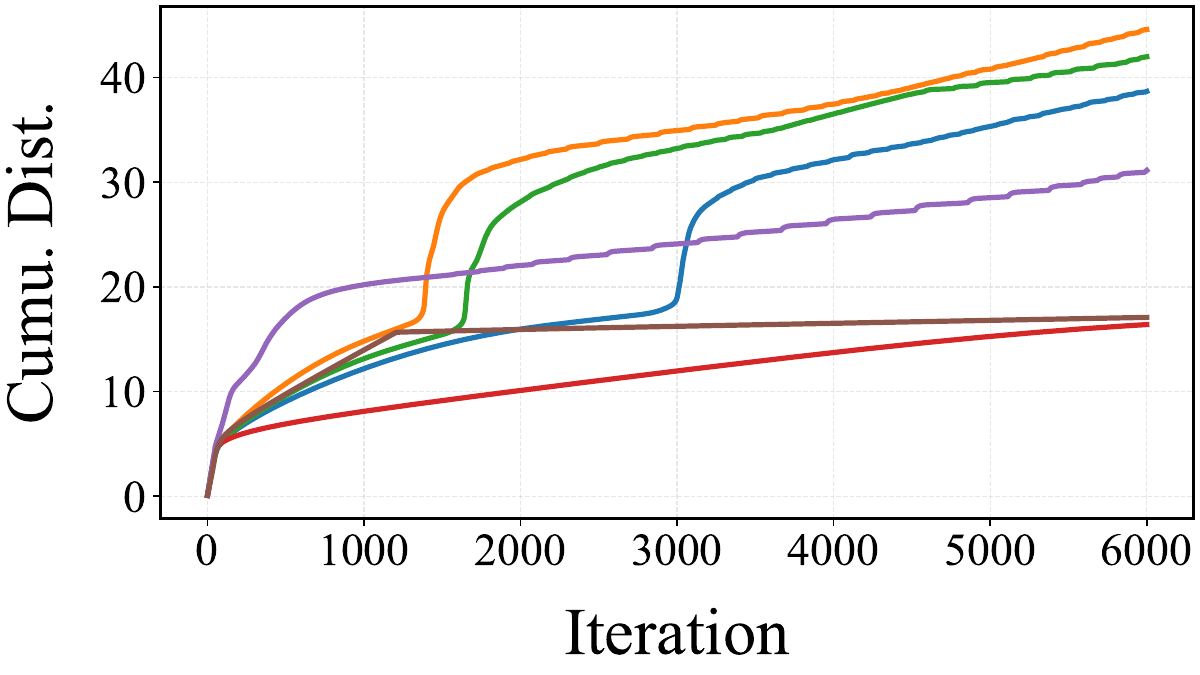}\\[0mm]
        
        \makebox[\linewidth]{\footnotesize (a) Relative $L_2$ Error} & \makebox[\linewidth]{\footnotesize (b) Param Update Distance}\\
    \end{tabular}
	\caption{Relative $L_2$ error and cumulative parameter update distance $\sum\|\theta_{t+1}-\theta_t\|_2$ versus training iterations on Heat benchmark (MLP backbone).}
	\label{fig51}
\end{figure}

\textbf{RQ2: Gradient Conflict Mitigation.} Table~\ref{tab:results1} reports the fraction of training iterations with positive cosine similarity between residual and boundary gradients. For standard PINNs, this fraction is typically less than 10\%, indicating persistent gradient conflict throughout training. In contrast, CAML achieves an order-of-magnitude increase across all backbone architectures, demonstrating its effectiveness in alleviating gradient conflicts. Notably, $L^\infty$-PINN forces gradient alignment at the most critical points. As a result, the probability of gradient consistency can be statistically higher. However, this alignment is local and passive, and does not fundamentally fix the pathological structure of the loss landscape. Consequently, it is not equivalent to genuine progress toward the correct solution.

\begin{figure}[t]
	\centering
    \begin{tabular}{@{}m{0.49\linewidth}@{\hspace{0.02\linewidth}}m{0.49\linewidth}@{}}
        \includegraphics[width=1\linewidth, trim=10 0 0 0, clip]{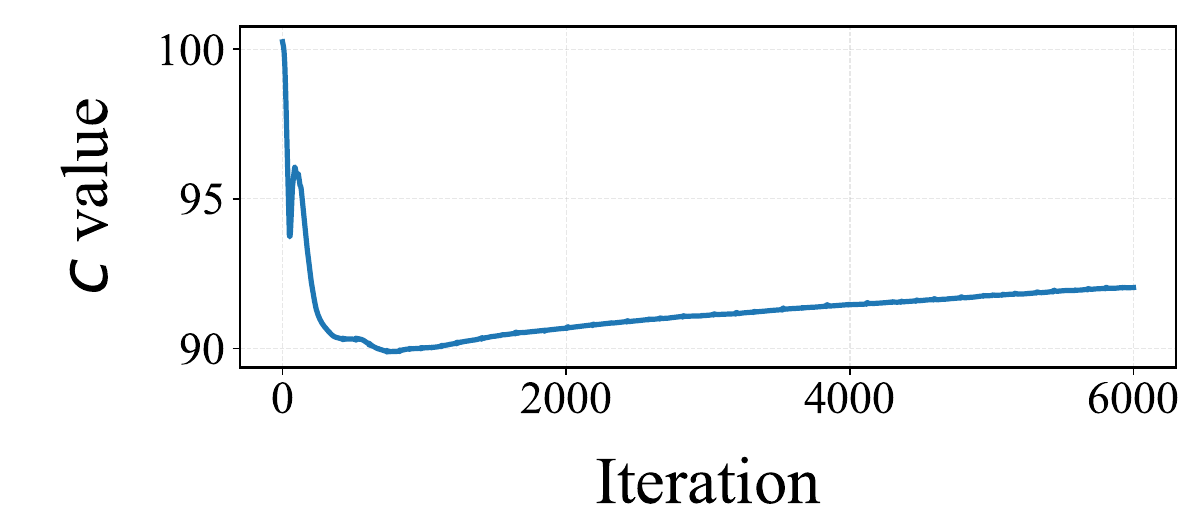} & 
        \includegraphics[width=1\linewidth, trim=10 0 0 0, clip]{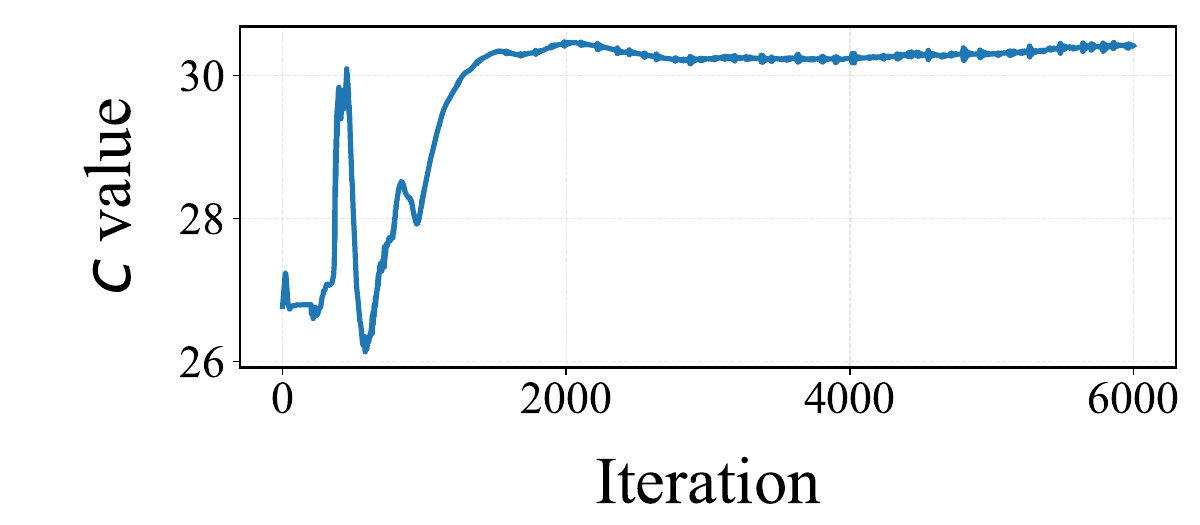}\\[-0.5mm]
        \makebox[\linewidth]{\footnotesize (a) Heat Benchmark} & \makebox[\linewidth]{\footnotesize (b) Poisson Benchmark}\\[1.5mm]

        \includegraphics[width=1\linewidth, trim=10 0 0 0, clip]{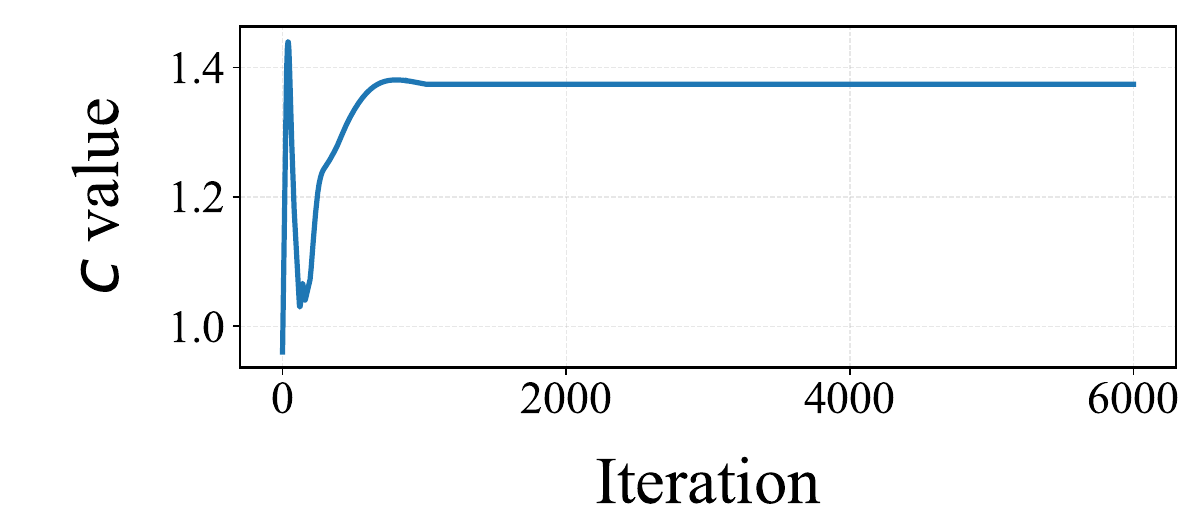} & 
        \includegraphics[width=1\linewidth, trim=10 0 0 0, clip]{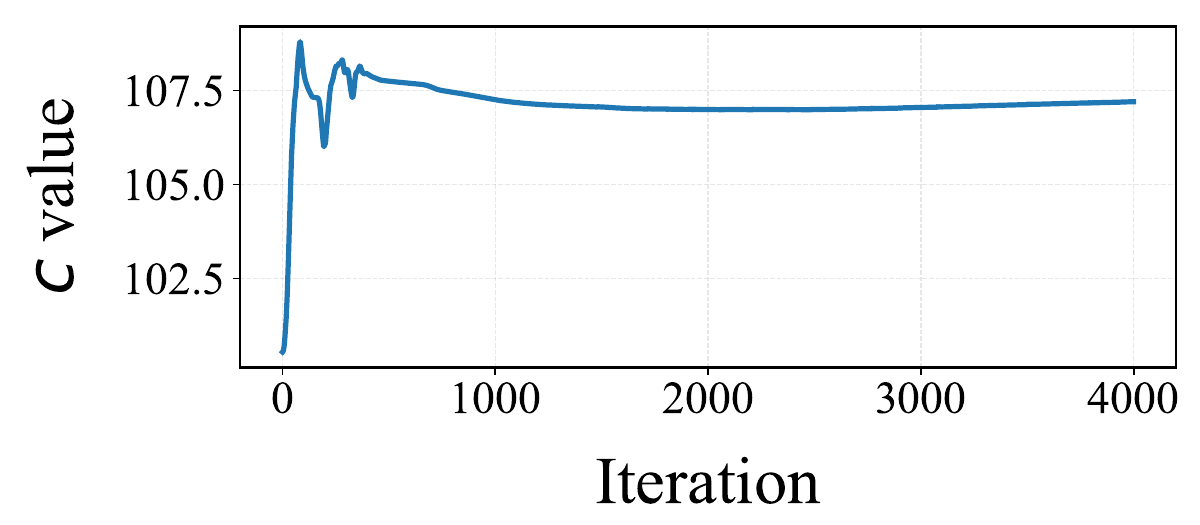}\\[-0.5mm]
        \makebox[\linewidth]{\footnotesize (c) NS Benchmark} & \makebox[\linewidth]{\footnotesize (d) Helmholtz Benchmark}\\
    \end{tabular}
	\caption{Variation of the $c$ curve across iterations on all benchmarks (MLP backbone).}
	\label{fig61}
\end{figure}

Figure~\ref{fig51} presents the relative error and parameter update distance curves of all loss functions for the MLP model on the Heat benchmark. The results indicate that CAML exhibits a substantially faster optimization speed from initialization compared to other baselines. Moreover, excluding the two suboptimal methods (DB-PINN and $L^\infty$-PINN), CAML traverses a shorter path in the parameter space.

Figure~\ref{fig61} shows the changes in $c$ during the training iterations, which gradually converge to a fixed value as training progresses. In the linear cases, $c$ admits an analytical optimum, and thus the loss is strictly reduced regardless of its value. Therefore, larger variations in $c$ at initial stage are, in fact, expected to yield greater stability, as they indicate that CAML actively aligns more pathological conflicts. In the nonlinear case, however, the small errors introduced by Newton's method can affect the accuracy of the final solution. This is why we introduce $t_c$ and fix $c$ once its variation becomes sufficiently small, as stated in Appendix~\ref{app:E2_nonlinear_plain}, thereby avoiding training noise caused by minor fluctuations.

We also analyze computational overhead, the difference between aligned constraints and learnable bias, as well as parameter sensitivity, as detailed in Appendices~\ref{sec:overhead}, \ref{sec:difference}, and \ref{sec:sens}.

\subsection{Ablation Study}

To evaluate the contribution of each component, we conducted an ablation study under four configurations: \emph{(i)} original PINN, \emph{(ii)} Aligned Constraint (AC) only, \emph{(iii)} Delay-Residual (DR) only, and \emph{(iv)} full CAML. We conducted experiments on MLP on both Heat and Poisson benchmarks, and all configurations are consistent with main experiment.

\textbf{RQ3: Ablation Study.} We report all results in Table~\ref{tab:results2}. On the Heat benchmark, AC makes the most significant contribution, markedly improving gradient consistency and accelerating convergence. When DR is applied independently, it also yields partial improvements in convergence speed. However, in the full CAML setting, no additional benefit is observed beyond that provided by AC alone. In contrast, for benchmarks such as Poisson with strongly nonlinear boundary conditions, although AC substantially enhances gradient consistency, it still fails to reach the target accuracy within the fixed epoch budget. DR, on the other hand, significantly improves convergence efficiency, and its combination with AC achieves the best overall performance. These results suggest that for problems with relatively simple boundary conditions, AC alone is sufficient to substantially enhance training stability, whereas for more complex boundary conditions, the integration of both mechanisms yields superior performance.

\begin{table}[t]
\caption{Ablation study on Heat and Poisson benchmarks (MLP backbone). The definition of Stp, $L_2$ and $\cos(\phi)$ is the same as that mentioned in previous experiment. Configurations that cannot achieve the expected accuracy within the maximum training budget $T_\mathrm{max}$ are indicated by ‘$-$’.}
\label{tab:results2}
\centering

\resizebox{\linewidth}{!}{
\begin{tabular}{l|ccc|ccc}
    \toprule
    & \multicolumn{3}{c|}{\textbf{Heat}} 
    & \multicolumn{3}{c}{\textbf{Poisson}} \\

    \cmidrule(lr){2-4} \cmidrule(lr){5-7}
    & Stp & $L_2$ & $\cos(\phi)$
    & Stp & $L_2$ & $\cos(\phi)$ \\
    \midrule

    PINN 
    & 10127$_{\pm 2839}$ & 7.52e-3$_{\pm 4\text{e-}3}$ & 5.89\%
    & $-$                & $-$                  & 3.85\% \\
    
    AC-Only 
    & 1491$_{\pm 186}$ & 1.15e-3$_{\pm 6\text{e-}5}$ & 36.12\%
    & $-$              & $-$                  & 45.22\% \\
    
    DR-Only 
    & 8367$_{\pm 991}$  & 7.16e-3$_{\pm 4\text{e-}3}$ & 4.89\%
    & 9648$_{\pm 3075}$ & 4.63e-2$_{\pm 3\text{e-}2}$ & 14.62\%  \\

    \midrule
    \rowcolor{blue!10}
    CAML 
    & 1577$_{\pm 130}$  & 1.16e-3$_{\pm 8\text{e-}5}$ & 39.16\%
    & 3868$_{\pm 1932}$ & 5.00e-3$_{\pm 9\text{e-}4}$ & 52.75\% \\
    
    \bottomrule
\end{tabular}
}
\end{table}

\subsection{Optimizer Selection}
\label{sec:optimizer}

In this section, we compare CAML's performance across four optimizers and verify its compatibility with a range of advanced optimizers. The optimizers we have chosen include: \emph{(i)} Adam~\citep{kingma2015adam}, learning rate $\eta = 1\mathrm{e}{-3}$, $\beta_1=0.9$, $\beta_2=0.999$, \emph{(ii)} L-BFGS~\citep{liu1989limited}, we use PyTorch L-BFGS with strong Wolfe line search (default), max history size $m=50$, max line-search steps 25, \emph{(iii)} DCGD~\cite{NEURIPS2024_b2b781ba}, same settings as Adam, and \emph{(iv)} ConFIG~\citep{liu2025config}, same settings as Adam. We used the Heat benchmark with the MLP backbone, following the same setup as the main experiments. Because L-BFGS steps are expensive, we limit the number of L-BFGS iterations to $T_{\min}^{\text{LBFGS}}=300$ (which typically matches or exceeds the wall-clock time of 6000 Adam steps) and $T_{\max}^{\text{LBFGS}}=1000$. In addition, the delayed residuals were disabled in L-BFGS to prevent the gradient explosion caused by the distortion of the Hessian approximation. We also compared the final precision of all optimizers, and Table~\ref{tab:optimizer-selection} provides a compact summary. 

\begin{table}[t]
\centering
    \caption{Comparison of CAML with different optimizers on Heat benchmarks (MLP backbone). The definitions of Stp, $L_2$, and $\cos(\phi)$ are the same as those mentioned in the previous experiment. We recorded the values of $L_2$ at both $T_\mathrm{min}$ and $T_\mathrm{max}$.}
    \label{tab:optimizer-selection}

    \resizebox{0.9\linewidth}{!}{
    \begin{tabular}{ll|cccc}
        \toprule
        && Stp & $L_2@T_{\min}$ & $L_2@T_{\max}$ & $\cos(\phi)$\\
        \midrule
        Adam & \citeyearpar{kingma2015adam}       & 1577$_{\pm 130}$ & 1.16e-3$_{\pm 8\text{e-}5}$ & 1.12e-3$_{\pm 4\text{e-}5}$ & 39.16\% \\
        L-BFGS & \citeyearpar{liu1989limited}     & 55$_{\pm 3}$     & 6.93e-4$_{\pm 3\text{e-}6}$ & 6.93e-4$_{\pm 2\text{e-}6}$ & 26.00\% \\
        DCGD & \citeyearpar{NEURIPS2024_b2b781ba} & 991$_{\pm 372}$  & 1.15e-3$_{\pm 8\text{e-}5}$ & 1.08e-3$_{\pm 3\text{e-}5}$ & 36.37\% \\
        ConFIG & \citeyearpar{liu2025config}      & 738$_{\pm 238}$  & 1.13e-3$_{\pm 6\text{e-}5}$ & 1.06e-3$_{\pm 2\text{e-}5}$ & 37.14\% \\
        \bottomrule
    \end{tabular}}
\end{table}

\textbf{RQ4: Optimizer Selection.} We observe that conflict-aware methods (DCGD and ConFIG) consistently provide more favorable optimization dynamics than standard first- and second-order baselines. ConFIG achieves the best trade-off between convergence speed and final accuracy, indicating that explicitly resolving gradient conflicts benefits both efficiency and stability. In addition, L-BFGS achieves the best final accuracy among all optimizers, indicating that second-order curvature information is particularly effective in refining the solution once the optimization trajectory enters a favorable region. Based on these observations, we recommend a two-stage training strategy: first, use a conflict-aware first-order optimizer for rapid convergence, and then switch to L-BFGS for fine-grained refinement.

% \subsection{Failure Discussion}

% \textbf{RQ5: Failure Mode.} For problems exclusively involving Neumann boundary conditions and lacking zeroth-order terms, the aligned constraint reduces to the standard PINN. This is because such problems are already subject to additive ill-posedness, while any additive constant will be eliminated in the derivative term. Moreover, the experiment in Appendix~\ref{sec:failure} demonstrates that similar phenomena were also observed in some simple problems when the optimization landscape is relatively benign (i.e., $c \approx 0$).

\subsection{Applicability and Failure Discussion}

\textbf{RQ5: Applicability and Failure Modes.} CAML targets the residual-induced loss valley analyzed in Section~\ref{sec:theory}, and is most effective when \emph{(i)} the PDE or BC contains at least one non-trivial zeroth-order term, \emph{(ii)} PDE coefficients are large enough to induce $\|\nabla_\theta\mathcal{L}_\text{res}\| \gg \|\nabla_\theta\mathcal{L}_\text{bc}\|$, or \emph{(iii)} BCs are composite or strongly nonlinear. Conversely, CAML degenerates to standard PINN under pure Neumann BCs without zeroth-order terms (the offset is annihilated by derivative operators), and yields limited gains when the landscape is already well-conditioned ($c\approx 0$ throughout training, see our demonstration in Appendix~\ref{sec:failure}).

\section{Conclusion}

This paper explains the gradient pathologies of PINNs under composite boundary conditions from the perspective of loss geometry, showing that gradient conflicts between PDE residuals and boundary terms hinder optimization. To address this, we propose CAML, which aligns constraints via manifold lifting and introduces a residual-delay strategy to stabilize training. Extensive experiments demonstrate that CAML consistently improves convergence and stability across diverse PDEs and network architectures, highlighting its potential as a general PINN loss framework.

Future work will extend this approach to time-dependent (dynamic) PDEs with initial conditions. This will require extending our analytical framework to address the more complex three-way gradient interaction and reformulating the optimal offset as a time-dependent function $c(t)$. To achieve this, $c(t)$ could be solved independently at each time step for linear equations, or approximated via time-domain partitioning and low-dimensional parameterizations for nonlinear cases. See Appendix~\ref{sec:future} for more future work.

% Acknowledgements should only appear in the accepted version.
\section*{Acknowledgements}

Yichen Luo is sponsored by the China Scholarship Council for his PhD study at KTH Royal Institute of Technology, Sweden (No. 202408320060). This research was also supported in part by the National Natural Science Foundation of China (Grant No. 72401232) and XJTLU Technical Service (2024-015).

\section*{Impact Statement}

This paper presents a methodological effort aimed at improving Physics-Informed Neural Networks (PINNs) by mitigating their inherent gradient pathologies. PINNs are increasingly critical in scientific computing, with broad applications ranging from fluid dynamics and climate modeling to advanced manufacturing and thermal engineering. By significantly improving the convergence, efficiency, and reliability of these models, our proposed CAML framework can accelerate complex engineering designs and scientific discoveries. As a fundamental algorithmic and theoretical contribution, this work does not present any immediate ethical concerns or negative societal consequences.

% In the unusual situation where you want a paper to appear in the
% references without citing it in the main text, use \nocite
% \nocite{langley00}

\section*{Author Contributions}

\textbf{Yichen Luo} conceptualized the research, proposed the CAML methodology, conducted the primary theoretical analysis, developed the experimental codebase, performed the majority of the numerical experiments, and wrote the original manuscript. \textbf{Peiyu Zhu} provided general support during the experimental stages of the project. \textbf{Dongxiao Hu} executed a significant portion of the numerical experiments and authored the related work section. \textbf{Jia Wang} refined the manuscript's presentation and participated in regular project meetings. \textbf{Tailin Wu} verified the formal theoretical proofs and provided critical feedback on the theoretical analysis. \textbf{Dapeng Lan} and \textbf{Yu Liu} participated in broad project discussions. \textbf{Zhibo Pang} supervised the project, acquired funding, and oversaw the final manuscript revision.

\bibliography{example_paper}
\bibliographystyle{icml2026}

%%%%%%%%%%%%%%%%%%%%%%%%%%%%%%%%%%%%%%%%%%%%%%%%%%%%%%%%%%%%%%%%%%%%%%%%%%%%%%%
%%%%%%%%%%%%%%%%%%%%%%%%%%%%%%%%%%%%%%%%%%%%%%%%%%%%%%%%%%%%%%%%%%%%%%%%%%%%%%%
% APPENDIX
%%%%%%%%%%%%%%%%%%%%%%%%%%%%%%%%%%%%%%%%%%%%%%%%%%%%%%%%%%%%%%%%%%%%%%%%%%%%%%%
%%%%%%%%%%%%%%%%%%%%%%%%%%%%%%%%%%%%%%%%%%%%%%%%%%%%%%%%%%%%%%%%%%%%%%%%%%%%%%%
\newpage
\appendix
\onecolumn

% \section*{Use of Large Language Models}
% Large Language Models (LLMs) were employed for non-substantive, low-risk tasks in preparing this manuscript. Their use was strictly limited to grammar and typographical proofreading of humandrafted text and verifying the internal consistency of names and cross-references. To ensure scientific integrity and prevent any influence from hallucinated content, LLMs had zero involvement in generating, interpreting, or refining core scientific artifacts.

\part{Theoretical Proof}
\section{Existence of Loss Valleys in PDE Residual Term}
\label{proof1}

In this section, we provide a rigorous justification for the existence of loss valleys in PINNs. The analysis proceeds in three stages: \emph{(i)} we establish the non-uniqueness of solutions in the functional space induced by the governing differential operator, \emph{(ii)} we show that this functional non-uniqueness implies the existence of flat, connected sets of global minima of the PINN loss, and \emph{(iii)} we demonstrate how such functional non-uniqueness can be mapped to the parameter space of a neural network, thereby forming a loss valley.

\subsection{Non-Uniqueness of Solutions}

We first formalize the fact that, in the absence of sufficient boundary or initial conditions, the governing differential operator admits a non-trivial solution set.

\begin{theorem}[Non-Triviality of the Solution Space]
\label{prop:solution_space}
Let $\mathcal{N}$ be a differential operator that defines the PDE constraint $\mathcal{N}[u]=f$ on a function space $\mathcal{F}(\Omega)$. If $\mathcal{N}$ does not uniquely determine a solution in $\mathcal{F}(\Omega)$ (e.g., due to missing boundary or initial conditions), then the solution set
\begin{equation}
    \mathcal{S}_{\mathrm{res}} := \{ u \in \mathcal{F}(\Omega) \mid \mathcal{N}[u] = f \}
\end{equation}
is non-trivial and contains non-isolated solutions. Moreover, around any regular solution $u^* \in \mathcal{S}_{\mathrm{res}}$, the solution set locally contains a continuously parameterized family of solutions of dimension at least one.
\end{theorem}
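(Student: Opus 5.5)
Non-uniqueness by hypothesis means there are two distinct solutions $u_1\neq u_2$ in $\mathcal{S}_{\mathrm{res}}$, so the set is non-trivial. The substance lies in the two remaining claims: solutions are non-isolated, and near a regular solution there is a continuous family of dimension at least one. The plan is to linearize $\mathcal{N}$ at a solution and lift kernel directions of the linearization to genuine solution curves via the implicit function theorem. The notion of ``regular solution'' is then pinned down so that this works.

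\textbf{Step 1 (set-up and regularity).} Assume $\mathcal{N}:\mathcal{F}(\Omega)\to\mathcal{Y}$ is $C^1$ (Fréchet) between Banach spaces, e.g.\ $\mathcal{F}=H^k(\Omega)$ and $\mathcal{Y}=L^2(\Omega)$ for an order-$k$ operator with smooth coefficients. Call $u^*\in\mathcal{S}_{\mathrm{res}}$ \emph{regular} if the linearization $L:=D\mathcal{N}[u^*]$ is surjective and its kernel $K=\ker L$ is complemented, which holds for instance when $L$ is Fredholm. Under these assumptions $\mathcal{S}_{\mathrm{res}}=\mathcal{N}^{-1}(f)$ is, near $u^*$, a $C^1$ Banach submanifold with tangent space $K$. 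This follows from the implicit function theorem applied to $(k,w)\mapsto \mathcal{N}[u^*+k+w]-f$ on $K\oplus W$, where $W$ is a complement of $K$ and $L|_W$ is an isomorphism onto $\mathcal{Y}$. The local dimension of $\mathcal{S}_{\mathrm{res}}$ near $u^*$ therefore equals $\dim K$.

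\textbf{Step 2 (the kernel is nontrivial).} This is the main obstacle: one must show $\dim K\ge 1$ from the non-uniqueness hypothesis. Two routes are available.
\begin{itemize}
\item For differential operators posed without boundary conditions, $L$ is an interior differential operator with no boundary constraint. Such operators have infinite-dimensional kernels (by Cauchy--Kovalevskaya or unique-continuation-type constructions, e.g.\ harmonic functions for the Laplacian). In the concrete setting of the paper, where $\mathcal{N}$ depends only on derivatives of $u$, the constant function $1$ lies in $K$ explicitly.
\item For the abstract statement, argue by contradiction. If $K=\{0\}$ at every regular solution, then $L$ is an isomorphism there and solutions are locally unique. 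Combined with a properness or degree argument this would contradict the assumed non-uniqueness. This route is weaker, so in the write-up I would state the clean sufficient condition $\ker D\mathcal{N}[u^*]\neq\{0\}$ as part of the meaning of ``does not uniquely determine''. For linear $\mathcal{N}$ this is exact: $u_1-u_2\in\ker\mathcal{N}=K$, and $\mathcal{S}_{\mathrm{res}}=u^*+K$ is an affine subspace, hence connected and flat.
\end{itemize}

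\textbf{Step 3 (conclusion).} Pick $0\neq k\in K$. Step 1 produces a $C^1$ curve $t\mapsto u^*+tk+w(t)$ in $\mathcal{S}_{\mathrm{res}}$ with $w(0)=0$ and $w'(0)=0$. This gives a continuously parameterized one-dimensional family through $u^*$, so $u^*$ is non-isolated. When several independent kernel directions exist, the family has dimension $\dim K$.

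The ``flat'' qualifier in Theorem~\ref{the:lossvalley} then follows immediately. Along this curve $\mathcal{L}_{\mathrm{func}}\equiv 0$, so the Hessian of $\mathcal{L}_{\mathrm{func}}$, namely $2L^*L$, annihilates $k$.
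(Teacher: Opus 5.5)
Your proposal is correct and takes essentially the same route as the paper. The linear case goes through the affine set $u^*+\ker\mathcal{N}$, and the nonlinear case applies the Banach-space implicit function theorem at a point where $D\mathcal{N}[u^*]$ is surjective with complemented kernel. As in the paper, you take $\ker D\mathcal{N}[u^*]\neq\{0\}$ as a hypothesis rather than deriving it from non-uniqueness, which is the right call: your discarded contradiction route would fail, since several isolated solutions are perfectly compatible with non-uniqueness.
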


\begin{proof}
Let $u^*$ be a solution that satisfies $\mathcal{N}[u^*]=f$.

(a) Linear case. If $\mathcal{N}$ is linear and the solution is not uniquely determined, then its kernel $\ker(\mathcal{N})$ is non-trivial. For any $v \in \ker(\mathcal{N})$ and scalar $\alpha \in \mathbb{R}$,
\begin{equation}
    \mathcal{N}[u^* + \alpha v] = \mathcal{N}[u^*] + \alpha \mathcal{N}[v] = f.
\end{equation}
Hence, the solution set contains an affine subspace passing through $u^*$ and is therefore non-isolated.

(b) Nonlinear case. For nonlinear operators, the solution set need not be globally finite-dimensional. We therefore restrict attention to a local neighborhood of a regular solution. Assume that $\mathcal{N}$ is Fr\'echet differentiable in a neighborhood of $u^*$ and that the linearized operator
\begin{equation}
    D\mathcal{N}[u^*] : \mathcal{F}(\Omega) \rightarrow \mathcal{G}(\Omega)
\end{equation}
has a non-trivial kernel. Suppose further that $u^*$ is a regular point in the sense that $D\mathcal{N}[u^*]$ satisfies a constant-rank or regular-value condition (e.g., it is surjective and its kernel is complemented in $\mathcal{F}(\Omega)$).

Under these standard assumptions, Banach-space implicit function or constant-rank theorems imply that the zero set of $\mathcal{N}$ is locally a smooth embedded submanifold of $\mathcal{F}(\Omega)$ whose dimension is equal to $\dim \ker(D\mathcal{N}[u^*]) \ge 1$. Consequently, there exists a neighborhood $V$ of $u^*$ and an open set $U \subset \mathbb{R}^k$, $k \ge 1$, such that
\begin{equation}
    \mathcal{S}_{\mathrm{res}} \cap V \supset \{ u(\cdot;\mathbf{c}) \mid \mathbf{c} \in U \},
\end{equation}
where $\mathbf{c}$ parameterizes a continuous family of solutions. This establishes the local non-uniqueness of the solution space.
\end{proof}

\subsection{Functional Non-Uniqueness and Flat Minima}

We next show that the functional non-uniqueness of solutions induces flat, connected sets of global minima of the PINN loss in function space.

\begin{proposition}[Flat Directions in Function Space]
\label{prop:functional_valley}
Any continuously parameterized family of solutions
\begin{equation}
    \{ u(\cdot;\mathbf{c}) \mid \mathbf{c} \in U \subset \mathbb{R}^k \} \subset \mathcal{S}_{\mathrm{res}}
\end{equation}
forms a flat, connected set of global minimizers of loss $\mathcal{L}_{\mathrm{func}}$ in function space.
\end{proposition}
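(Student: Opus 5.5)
The plan is to verify three properties for the family $\{u(\cdot;\mathbf{c}) \mid \mathbf{c}\in U\}$ in turn: global minimality, flatness, and connectedness. All three come directly from the definition of $\mathcal{L}_{\mathrm{func}}$ in Equation~(\ref{eq:lfunc}) together with the parameterization supplied by Theorem~\ref{prop:solution_space}.

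\textbf{Global minimality.} $\mathcal{L}_{\mathrm{func}}(u)=\|\mathcal{N}[u]-f\|^2_{\mathcal{L}^2(\Omega)}$ is a squared norm, so $\mathcal{L}_{\mathrm{func}}\ge 0$ on all of $\mathcal{F}(\Omega)$. Every member of the family lies in $\mathcal{S}_{\mathrm{res}}$, so $\mathcal{N}[u(\cdot;\mathbf{c})]=f$ and hence $\mathcal{L}_{\mathrm{func}}(u(\cdot;\mathbf{c}))=0$. Each member therefore attains the infimum and is a global minimizer. The same argument shows that the whole of $\mathcal{S}_{\mathrm{res}}$ is exactly the zero set of $\mathcal{L}_{\mathrm{func}}$.

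\textbf{Flatness.} The composite map $\mathbf{c}\mapsto \mathcal{L}_{\mathrm{func}}(u(\cdot;\mathbf{c}))$ is identically zero on $U$, so all of its directional derivatives of every order vanish. For the derivative statement I will assume, as in Theorem~\ref{prop:solution_space}(b), that $\mathbf{c}\mapsto u(\cdot;\mathbf{c})$ is $C^1$ into $\mathcal{F}(\Omega)$ and that $\mathcal{N}$ is Fr\'echet differentiable near the family. The chain rule then gives, for every tangent vector $v=\partial_{\mathbf{c}}u\,\delta\mathbf{c}$,
\[
D\mathcal{L}_{\mathrm{func}}(u)[v]=2\langle \mathcal{N}[u]-f,\; D\mathcal{N}[u]v\rangle=0 .
\]
The second variation along $v$ is
\[
2\|D\mathcal{N}[u]v\|^2+2\langle \mathcal{N}[u]-f,\cdot\rangle ,
\]
which reduces to $2\|D\mathcal{N}[u]v\|^2$ because the residual $\mathcal{N}[u]-f$ vanishes. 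Differentiating the identity $\mathcal{N}[u(\cdot;\mathbf{c})]=f$ in $\mathbf{c}$ gives $D\mathcal{N}[u]v=0$, so this term is zero as well. The Hessian is therefore degenerate along the tangent directions, and the set is flat. In the linear case (a) the tangent direction is simply $v\in\ker\mathcal{N}$, and the loss is exactly constant on the affine line $u^*+\alpha v$.

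\textbf{Connectedness.} This is the main point requiring care. The statement only asks for connectedness of the family, which is the continuous image of the parameter set $U$. I will take $U$ to be a connected open set, for example a ball around the parameter of $u^*$; Theorem~\ref{prop:solution_space} allows this, since $U$ can be shrunk to a ball inside the neighborhood it provides. The continuous image of a connected set is connected, which settles the claim. In the linear case the set $u^*+\ker\mathcal{N}$ is an affine subspace, hence convex and therefore path-connected, so connectedness holds globally there.
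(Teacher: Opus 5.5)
Your argument is correct and is essentially the paper's: nonnegativity of $\mathcal{L}_{\mathrm{func}}$ together with $\mathcal{N}[u(\cdot;\mathbf{c})]=f$ gives global minimality, the loss vanishing identically along curves in the family gives flatness, and connectedness comes from the family being a continuous image of $U$. As in the paper, that last step really adds the hypothesis that $U$ is connected, since shrinking $U$ to a ball only proves the claim for a subfamily; your second-variation computation showing $D\mathcal{N}[u]v=0$, and hence a degenerate Hessian along tangent directions, sharpens the paper's flatness step rather than taking a different route.
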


\begin{proof}
By Theorem~\ref{prop:solution_space}, the global minimizers of $\mathcal{L}_{\mathrm{func}}$ coincide with $\mathcal{S}_{\mathrm{res}} = \{ u \in \mathcal{F}(\Omega) \mid \mathcal{N}[u] = f \}$.

(a) Global minimality. For any $\mathbf{c} \in U$, we have $u(\cdot;\mathbf{c}) \in \mathcal{S}_{\mathrm{res}}$, hence $\mathcal{N}[u(\cdot;\mathbf{c})] = f$. By assumption,
\begin{equation}
    \mathcal{L}_{\mathrm{func}}(u(\cdot;\mathbf{c})) = 0.
\end{equation}
Since $\mathcal{L}_{\mathrm{func}}(u) \ge 0$ for all $u$, the global minimum value is $0$, and every element of the family is therefore a global minimizer.

(b) Connectedness. The map $\mathbf{c} \mapsto u(\cdot;\mathbf{c})$ is continuous by Equation~\ref{eq:lfunc} in the main text. If $U \subset \mathbb{R}^k$ is connected, then its continuous image $\{u(\cdot;\mathbf{c}) \mid \mathbf{c} \in U\}$ is also connected.

(c) Flatness. For any smooth curve $\mathbf{c}(t) \subset U$, define $u_t := u(\cdot;\mathbf{c}(t))$. 
From the first part, 
\begin{equation}
    \mathcal{L}_{\mathrm{func}}(u_t) \equiv 0 
    \quad \text{for all } t.
\end{equation}

Hence the loss remains constant along any such direction, so all tangent directions to the solution family are flat directions. Therefore, the continuously parameterized solution family forms a flat, connected set of global minimizers of $\mathcal{L}_{\mathrm{func}}$.
\end{proof}

\subsection{Mapping Functional Valleys to Parameter Space}

Finally, we show that functional non-uniqueness can be transferred to the parameter space of a neural network, leading to loss valleys in practice.

Let $u_\theta \in \mathcal{F}(\Omega)$ denote the neural network approximation parameterized by $\theta \in \mathbb{R}^P$, and define the parameter-space loss
\begin{equation}
    \mathcal{L}(\theta) := \mathcal{L}_{\mathrm{func}}(u_\theta).
\end{equation}

\begin{assumption}[Representability of a Solution Family]
\label{ass:representability}
There exists a $\mathcal{C}^1$ mapping $\phi : U \subset \mathbb{R}^k \rightarrow \mathbb{R}^P$ such that
\begin{equation}
    u_{\phi(\mathbf{c})} = u(\cdot;\mathbf{c}) \quad \text{for all } \mathbf{c} \in U,
\end{equation}
where $\{ u(\cdot;\mathbf{c}) \}$ is a solution family as in Proposition~\ref{prop:functional_valley}.
Moreover, $\phi$ is locally non-degenerate in the sense that there exists $\mathbf{c}^* \in U$ and a direction $v\in\mathbb{R}^k$ such that
\begin{equation}
    J\phi(\mathbf{c}^*)\,v \neq 0,
\end{equation}
where $J\phi(\mathbf{c}^*)$ denotes the Jacobian of $\phi$ at $\mathbf{c}^*$.
\end{assumption}

\begin{assumption}[Local $\mathcal{C}^2$ Regularity of the Loss]
\label{ass:c2_loss}
The parameter-space loss $\mathcal{L}(\theta)$ is twice continuously differentiable in a neighborhood of $\Theta_{\mathrm{valley}}$.
\end{assumption}

Assumption~\ref{ass:representability} only requires that the neural network can locally represent a non-isolated solution family, which is consistent with the expressive capacity and over-parameterization of modern neural networks. Assumption~\ref{ass:c2_loss} is also standard, since PINNs typically adopt globally differentiable activation functions, such as \texttt{Tanh}, \texttt{Sigmoid}, or \texttt{Sine}, together with differentiable PDE operators. Therefore, the resulting residual loss is generally smooth in the relevant parameter region.

\begin{theorem}[Existence of Loss Valleys in Parameter Space]
\label{thm:parameter_valley}
Under Assumptions~\ref{ass:representability} and~\ref{ass:c2_loss}, the parameter-space loss $\mathcal{L}(\theta)$ admits a flat, connected set of global minimizers
\begin{equation}
    \Theta_{\mathrm{valley}} := \{ \phi(\mathbf{c}) \mid \mathbf{c} \in U \}.
\end{equation}
Moreover, at any $\theta^* \in \Theta_{\mathrm{valley}}$, the Hessian $\nabla^2_\theta \mathcal{L}(\theta^*)$ has at least one zero eigenvalue corresponding to a flat direction.
\end{theorem}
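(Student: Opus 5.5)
The argument pulls the functional statements of Proposition~\ref{prop:functional_valley} back through the map $\phi$ of Assumption~\ref{ass:representability}, and then uses the $\mathcal{C}^2$ regularity of Assumption~\ref{ass:c2_loss} to turn ``constant along a curve at a minimum'' into ``zero Hessian eigenvalue''.

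\textbf{Global minimality.} For every $\mathbf{c}\in U$ we have $u_{\phi(\mathbf{c})}=u(\cdot;\mathbf{c})\in\mathcal{S}_{\mathrm{res}}$. Hence
\begin{equation}
    \mathcal{L}(\phi(\mathbf{c}))=\mathcal{L}_{\mathrm{func}}(u(\cdot;\mathbf{c}))=0 .
\end{equation}
Since $\mathcal{L}\ge 0$ everywhere, every point of $\Theta_{\mathrm{valley}}$ is a global minimizer.

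\textbf{Connectedness and flatness.} As in Proposition~\ref{prop:functional_valley}, I take $U$ connected; if it is not, I shrink it to a connected open neighbourhood of $\mathbf{c}^*$. Since $\phi$ is continuous, its image $\Theta_{\mathrm{valley}}$ is connected. Flatness follows because $\mathcal{L}\circ\phi\equiv 0$ on $U$, so the loss is constant along every curve $t\mapsto\phi(\mathbf{c}(t))$ in the valley.

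\textbf{Hessian.} Fix $\theta^*=\phi(\mathbf{c})$. Because $\theta^*$ is an interior global minimizer of a $\mathcal{C}^2$ function, $\nabla_\theta\mathcal{L}(\theta^*)=0$ and $H:=\nabla^2_\theta\mathcal{L}(\theta^*)\succeq 0$. Take a $\mathcal{C}^1$ curve $\gamma(t)=\phi(\mathbf{c}+tv)$ with tangent $w=J\phi(\mathbf{c})v$. Taylor-expanding $\mathcal{L}$ at $\theta^*$ and using the vanishing gradient gives
\begin{equation}
    0=\mathcal{L}(\gamma(t))=\tfrac12\,(\gamma(t)-\theta^*)^\top H\,(\gamma(t)-\theta^*)+o(\|\gamma(t)-\theta^*\|^2).
\end{equation}
Since $\gamma(t)-\theta^*=tw+o(t)$, dividing by $t^2$ and letting $t\to0$ yields $w^\top Hw=0$. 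For a positive semidefinite $H$, $w^\top Hw=0$ forces $Hw=0$: write $H=H^{1/2}H^{1/2}$, so $\|H^{1/2}w\|^2=0$ and hence $Hw=0$. Therefore, whenever $w\neq 0$, $H$ has a zero eigenvalue with eigenvector $w$ tangent to the valley.

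\textbf{Main obstacle.} The difficulty is that $w\neq 0$ is needed at every $\theta^*$, while Assumption~\ref{ass:representability} only gives $J\phi(\mathbf{c}^*)v\neq0$ at one point. I would first use continuity of $J\phi$ to obtain $J\phi(\mathbf{c})v\neq 0$ on a neighbourhood of $\mathbf{c}^*$, and restrict $U$ to that neighbourhood, connected and containing $\mathbf{c}^*$; the proof then holds on this restricted valley. I would note that the claim ``at any $\theta^*$'' is exactly as strong as the non-degeneracy available at that point. A fallback that avoids the issue is over-parameterization: a degenerate point of $\Theta_{\mathrm{valley}}$ can be approached by non-degenerate ones, and $H$ depends continuously on $\theta$ by Assumption~\ref{ass:c2_loss}, so the kernel persists in the limit. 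This works because a limit of singular positive semidefinite matrices is singular. I expect this to be the route that actually covers all of $\Theta_{\mathrm{valley}}$.
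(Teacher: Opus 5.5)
Your core argument is the same as the paper's. $\mathcal{L}\circ\phi\equiv 0$ on $U$ gives global minimality and flatness, and continuity of $\phi$ gives connectedness. At a minimizer with vanishing gradient and positive semidefinite Hessian, a second-order expansion along $t\mapsto\phi(\mathbf{c}+tv)$ then gives $w^\top\nabla^2_\theta\mathcal{L}(\theta^*)\,w=0$ for $w=J\phi(\mathbf{c})v\neq 0$.

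Your refinements tighten the written proof rather than change the route:
\begin{itemize}
\item Your Peano-remainder expansion needs only differentiability of $\phi$. The paper instead differentiates $\mathcal{L}(\theta(t))$ twice along a merely $\mathcal{C}^1$ curve; that step is valid here because $\nabla_\theta\mathcal{L}(\theta^*)=0$, but the paper does not say so.
\item You get $\nabla^2_\theta\mathcal{L}(\theta^*)w=0$, so $w$ itself is a null eigenvector tangent to the valley.
\item Most importantly, you catch an omission. The paper writes ``fix any $\theta^*=\phi(\mathbf{c}^*)$'' and then applies Assumption~\ref{ass:representability} at that same $\mathbf{c}^*$. This silently identifies an arbitrary point with the single point where non-degeneracy is assumed. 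Your restriction of $U$, via continuity of $J\phi$, correctly proves the Hessian claim on a connected neighbourhood of $\mathbf{c}^*$.
\end{itemize}

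The one thing to fix is the justification of your fallback. You need it to cover all of $\Theta_{\mathrm{valley}}$, as the statement demands. The claim that degenerate points are limits of non-degenerate ones does not come from over-parameterization; it comes from $U$ being open and connected.

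Let $K=\{\mathbf{c}\in U : J\phi(\mathbf{c})=0\}$. This set is relatively closed and does not contain $\mathbf{c}^*$.
\begin{itemize}
\item If $\mathbf{c}\in K$ is not interior to $K$, it is a limit of points $\mathbf{c}_n\in U\setminus K$. At each $\phi(\mathbf{c}_n)$ your main argument already gives a singular Hessian, and $\phi(\mathbf{c}_n)\to\phi(\mathbf{c})$.
\item If $\mathbf{c}$ is interior to $K$, then $\phi$ is constant on the component $W$ of $\mathrm{int}\,K$ containing $\mathbf{c}$. Since $W$ is open, nonempty and $W\neq U$, connectedness of $U$ yields some $\mathbf{e}\in U\cap\overline{W}\setminus W$. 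This $\mathbf{e}$ lies in $K$ but not in $\mathrm{int}\,K$, and $\phi(\mathbf{e})=\phi(\mathbf{c})$, so the first case applies.
\end{itemize}

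With this in place, continuity of the Hessian from Assumption~\ref{ass:c2_loss}, together with closedness of the set of singular matrices, gives a zero eigenvalue at every point of $\Theta_{\mathrm{valley}}$. At such limit points, however, the null vector is only a limit of valley tangents, not necessarily the tangent of a curve lying in the valley.
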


\begin{proof}
For any $\mathbf{c} \in U$,
\begin{equation}
    \mathcal{L}(\phi(\mathbf{c})) = \mathcal{L}_{\mathrm{func}}(u(\cdot;\mathbf{c})) = 0,
\end{equation}
so $\Theta_{\mathrm{valley}}$ consists entirely of global minimizers. Continuity of $\phi$ implies that $\Theta_{\mathrm{valley}}$ is path-connected.

Fix any $\theta^*=\phi(\mathbf{c}^*)\in \Theta_{\mathrm{valley}}$. Since $\theta^*$ is a global minimizer and $\mathcal{L}\ge 0$, we have
\begin{equation}
    \nabla_\theta \mathcal{L}(\theta^*)=0.
\end{equation}
By Assumption~\ref{ass:representability}, choose $v\in\mathbb{R}^k$ such that $w := J\phi(\mathbf{c}^*)\,v \neq 0$.
Let $\mathbf{c}(t)=\mathbf{c}^*+tv$ (for $t$ small) and define $\theta(t):=\phi(\mathbf{c}(t))$, so that $\theta(t)\subset \Theta_{\mathrm{valley}}$ and $\dot{\theta}(0)=w\neq 0$.

Then $\mathcal{L}(\theta(t))\equiv 0$ for all sufficiently small $t$, hence
\begin{equation}
    \frac{d}{dt}\mathcal{L}(\theta(t))\Big|_{t=0}
    = \nabla_\theta \mathcal{L}(\theta^*)^\top \dot{\theta}(0)=0,
\end{equation}
and by Assumption~\ref{ass:c2_loss},
\begin{equation}
    0=\frac{d^2\mathcal{L}}{dt^2}\Big|_{t=0}
    = \dot{\theta}(0)^\top\nabla^2_\theta\mathcal{L}(\theta^*)\dot{\theta}(0)
    = w^\top\nabla^2_\theta\mathcal{L}(\theta^*)w.
\end{equation}
Since $\theta^*$ is a (global) minimizer and $\mathcal{L}$ is $\mathcal{C}^2$ near $\theta^*$, the Hessian $\nabla^2_\theta\mathcal{L}(\theta^*)$ is positive semidefinite. Therefore, the existence of a nonzero $w$ such that $w^\top\nabla^2_\theta\mathcal{L}(\theta^*)w=0$ implies that $\nabla^2_\theta\mathcal{L}(\theta^*)$ has at least one zero eigenvalue.
\end{proof}

\begin{remark}[Inevitability of Loss Valley]
The above analysis shows that loss valleys in the PDE residual term arise naturally from the structural non-uniqueness of the underlying PDE when insufficient constraints are imposed. Importantly, the argument relies only on local and generic properties of the solution set and does not require global uniqueness or finite-dimensionality of the full solution manifold. This explains why flat loss landscapes are commonly observed in practical PINN training.
\end{remark}

\section{Structural Origin of Gradient Conflict in PINNs}
\label{proof2}

In this section, we provide a theoretical explanation for why gradients induced by PDE residual and boundary condition losses tend to be conflicting during PINN optimization. The analysis proceeds in two steps: \emph{(i)} we characterize the residual-induced low-loss set as a manifold in parameter space, and \emph{(ii)} we show that boundary-driven descent directions generically contain a normal component opposing the residual gradient, yielding a negative inner product.

\subsection{Geometric Decomposition near the Low-Residual Manifold}

Let $\mathcal{L}_{\mathrm{res}}(\theta)$ and $\mathcal{L}_{\mathrm{bc}}(\theta)$ denote the PDE residual loss and boundary condition loss, respectively. We define the low-residual set
\begin{equation}
    \mathcal{M}_{\mathrm{res}}
    =
    \left\{
    \theta  \in \mathbb{R}^P \;\middle|\;
    \mathcal{L}_{\mathrm{res}}(\theta) \le \varepsilon
    \;\text{such that}\;
    \|\nabla_\theta \mathcal{L}_{\mathrm{res}}(\theta)\|
    <
    \|\nabla_\theta \mathcal{L}_{\mathrm{bc}}(\theta)\|
    \right\},
\end{equation}
where $\varepsilon>0$ is a threshold that marks the transition from residual-dominated to boundary-influenced gradients. This set corresponds to the residual-induced loss valley discussed in Appendix~\ref{proof1}.

\begin{assumption}[Smooth Manifold Approximation]
\label{assump:manifold_approx}
There exists a neighborhood $\mathcal{U}$ of $\mathcal{M}_{\mathrm{res}}$ such that $\mathcal{M}_{\mathrm{res}}\cap \mathcal{U}$ is a smooth embedded submanifold of $\mathbb{R}^P$. Moreover, for every $\theta \in \mathcal{M}_{\mathrm{res}}\cap \mathcal{U}$, the gradient $\nabla_\theta \mathcal{L}_{\mathrm{res}}(\theta)$ is normal to $\mathcal{M}_{\mathrm{res}}$.
\end{assumption}

The validity of this assumption is supported from two aspects. Theoretically, Theorem~\ref{prop:solution_space} proves that the zero-residual set $\mathcal{S}_{\mathrm{res}}$ is indeed a smooth manifold in function space, and the smooth parameterization induced by overparameterized neural networks preserves this local structure. Empirically, the extremely large condition numbers of the Hessian and the presence of clear low-curvature subspaces reported in Table~\ref{tab:results0} also support the existence of such a manifold structure.

\begin{lemma}[Normal--Tangent Decomposition of Boundary Gradients]
\label{prop:normal_tangent_decomp}
Under Assumption~\ref{assump:manifold_approx}, for any $\theta \in \mathcal{M}_{\mathrm{res}} \cap \mathcal{U}$, the boundary gradient admits an orthogonal decomposition into tangent and normal components with respect to $\mathcal{M}_{\mathrm{res}}$,
\begin{equation}
    \nabla_\theta \mathcal{L}_{\mathrm{bc}}(\theta)
    =
    \nabla_\theta^{\top} \mathcal{L}_{\mathrm{bc}}(\theta)
    +
    \nabla_\theta^{\perp} \mathcal{L}_{\mathrm{bc}}(\theta),
\end{equation}
where $\nabla_\theta^{\top} \mathcal{L}_{\mathrm{bc}}(\theta) \in T_\theta \mathcal{M}_{\mathrm{res}}$ and $\nabla_\theta^{\perp} \mathcal{L}_{\mathrm{bc}}(\theta) \in N_\theta \mathcal{M}_{\mathrm{res}}$. Consequently,
\begin{equation}
    \left\langle
    \nabla_\theta \mathcal{L}_{\mathrm{res}}(\theta),
    \nabla_\theta \mathcal{L}_{\mathrm{bc}}(\theta)
    \right\rangle
    =
    \left\langle
    \nabla_\theta \mathcal{L}_{\mathrm{res}}(\theta),
    \nabla_\theta^{\perp} \mathcal{L}_{\mathrm{bc}}(\theta)
    \right\rangle.
\end{equation}
\end{lemma}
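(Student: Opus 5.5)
The argument is pure linear algebra carried out pointwise on the manifold. Everything follows from the orthogonal splitting of the ambient space at each point of the valley, together with the normality of the residual gradient from Assumption~\ref{assump:manifold_approx}.

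First, fix $\theta \in \mathcal{M}_{\mathrm{res}} \cap \mathcal{U}$. By Assumption~\ref{assump:manifold_approx}, $\mathcal{M}_{\mathrm{res}}\cap\mathcal{U}$ is a smooth embedded submanifold of $\mathbb{R}^P$. Its tangent space $T_\theta \mathcal{M}_{\mathrm{res}}$ is therefore a well-defined linear subspace of $\mathbb{R}^P$. We define the normal space as the Euclidean orthogonal complement,
\begin{equation}
N_\theta \mathcal{M}_{\mathrm{res}} := (T_\theta \mathcal{M}_{\mathrm{res}})^{\perp}.
\end{equation}
Because $\mathbb{R}^P$ is a finite-dimensional inner product space, we have $\mathbb{R}^P = T_\theta \mathcal{M}_{\mathrm{res}} \oplus N_\theta \mathcal{M}_{\mathrm{res}}$, and this direct sum is orthogonal.

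Let $P_\theta^{\top}$ be the orthogonal projector onto the tangent space and set $P_\theta^{\perp} = I - P_\theta^{\top}$. We then define the two components of the boundary gradient by
\begin{equation}
\nabla^{\top}_\theta \mathcal{L}_{\mathrm{bc}} := P^{\top}_\theta \nabla_\theta \mathcal{L}_{\mathrm{bc}}, \qquad \nabla^{\perp}_\theta \mathcal{L}_{\mathrm{bc}} := P^{\perp}_\theta \nabla_\theta \mathcal{L}_{\mathrm{bc}}.
\end{equation}
Their sum is $\nabla_\theta \mathcal{L}_{\mathrm{bc}}$, and they lie in $T_\theta\mathcal{M}_{\mathrm{res}}$ and $N_\theta\mathcal{M}_{\mathrm{res}}$ respectively. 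Uniqueness of the decomposition follows from $T_\theta \cap N_\theta = \{0\}$. Note that $\nabla_\theta \mathcal{L}_{\mathrm{bc}}$ is well defined as a vector: $\mathcal{L}_{\mathrm{bc}}$ is differentiable under the same smooth-activation reasoning that underlies Assumption~\ref{ass:c2_loss}.

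Second, we obtain the inner-product identity. By Assumption~\ref{assump:manifold_approx}, $\nabla_\theta \mathcal{L}_{\mathrm{res}}(\theta) \in N_\theta \mathcal{M}_{\mathrm{res}}$. Hence it is orthogonal to every tangent vector, and in particular
\begin{equation}
\langle \nabla_\theta \mathcal{L}_{\mathrm{res}}, \nabla^{\top}_\theta \mathcal{L}_{\mathrm{bc}}\rangle = 0.
\end{equation}
Expanding the inner product bilinearly over the decomposition from the first step and dropping this zero term gives the claimed equality.

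The only genuinely delicate point is the meaning of ``tangent'' and ``normal''. The set $\mathcal{M}_{\mathrm{res}}$ is defined as a sublevel set with a strict gradient inequality, so it is a priori a set with nonempty interior. In that case its tangent space would be all of $\mathbb{R}^P$ and the normal space would be trivial. I would therefore make explicit that the statement is taken relative to the embedded submanifold structure supplied by Assumption~\ref{assump:manifold_approx}, for instance the valley floor of lower dimension. I would not attempt to derive that structure here, since the assumption itself asserts both the submanifold property and the normality of $\nabla_\theta \mathcal{L}_{\mathrm{res}}$. With that interpretation fixed, the remaining steps are routine orthogonal projection.
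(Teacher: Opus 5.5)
Your proposal is correct and is essentially the paper's argument: you split $\nabla_\theta\mathcal{L}_{\mathrm{bc}}$ with the orthogonal projectors onto $T_\theta\mathcal{M}_{\mathrm{res}}$ and its orthogonal complement, and then use the normality of $\nabla_\theta\mathcal{L}_{\mathrm{res}}$ from Assumption~\ref{assump:manifold_approx} to make the tangential inner product vanish. Your caveat that $\mathcal{M}_{\mathrm{res}}$ may have nonempty interior is a fair interpretive remark but does not affect the proof: in that codimension-zero case the normal space is $\{0\}$, the assumption forces $\nabla_\theta\mathcal{L}_{\mathrm{res}}(\theta)=0$, and the identity holds trivially.
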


\begin{proof}
By definition of orthogonal projection, we have
\begin{equation}
    \nabla_\theta \mathcal{L}_{\mathrm{bc}}(\theta)
    =
    P_T(\theta)\nabla_\theta \mathcal{L}_{\mathrm{bc}}(\theta)
    +
    P_N(\theta)\nabla_\theta \mathcal{L}_{\mathrm{bc}}(\theta),
\end{equation}
with $P_T(\theta)\nabla_\theta \mathcal{L}_{\mathrm{bc}}(\theta)\in T_\theta \mathcal{M}_{\mathrm{res}}$ and $P_N(\theta)\nabla_\theta \mathcal{L}_{\mathrm{bc}}(\theta)\in N_\theta \mathcal{M}_{\mathrm{res}}$. Under Assumption~\ref{assump:manifold_approx}, $\nabla_\theta \mathcal{L}_{\mathrm{res}}(\theta)\in N_\theta \mathcal{M}_{\mathrm{res}}$, hence it is orthogonal to $T_\theta \mathcal{M}_{\mathrm{res}}$. Therefore,
\begin{equation}
    \left\langle \nabla_\theta \mathcal{L}_{\mathrm{res}}(\theta), P_T(\theta)\nabla_\theta \mathcal{L}_{\mathrm{bc}}(\theta) \right\rangle = 0,
\end{equation}
and the stated identity follows.
\end{proof}

\subsection{Generic Gradient Conflict}

We now formalize the fact that the boundary gradient generically contains a normal component opposing the residual gradient.

\begin{assumption}[Geometric Complexity of Loss Valleys in Parameter Space]
\label{ass:D_conpl}
We assume that the low-loss manifolds induced by the PDE residual $\mathcal{M}_{\mathrm{res}}$ form highly non-linear, curved, and geometrically complex valleys in the parameter space, as a consequence of the neural network parameterization.
\end{assumption}

\begin{assumption}[Local Normal Preference]
\label{ass:D_side}
Assume Assumptions~\ref{assump:manifold_approx} and \ref{ass:D_conpl}. For each relevant point $\bar\theta \in \mathcal M_{\mathrm{res}}\cap U$ along the \emph{Phase~II} trajectory with $\nabla_\theta \mathcal L_{\mathrm{res}}(\bar\theta)\neq 0$, define
\begin{equation}
    v(\bar\theta)
    :=\frac{\nabla_\theta \mathcal L_{\mathrm{res}}(\bar\theta)}
    {\|\nabla_\theta \mathcal L_{\mathrm{res}}(\bar\theta)\|}
    \in N_{\bar\theta}\mathcal M_{\mathrm{res}} .
\end{equation}
We assume that there exist a local neighborhood $V_{\bar\theta}\subset U$ of $\bar\theta$ and a constant $\kappa_{\bar\theta}>0$ such that, within $V_{\bar\theta}$, the boundary-loss gradient has a nonzero local preference toward one normal side of the low-residual manifold:
\begin{equation}
    \langle \nabla_\theta \mathcal L_{\mathrm{bc}}(\theta), 
    v(\bar\theta) \rangle
    \le -\kappa_{\bar\theta}
    \quad \text{for all } \theta\in V_{\bar\theta},
\end{equation}
or
\begin{equation}
    \langle \nabla_\theta \mathcal L_{\mathrm{bc}}(\theta), 
    v(\bar\theta) \rangle
    \ge \kappa_{\bar\theta}
    \quad \text{for all } \theta\in V_{\bar\theta}.
\end{equation}
In other words, near each relevant point on the \emph{Phase~II} trajectory, the boundary loss exhibits a consistent first-order tendency toward one normal side of $\mathcal M_{\mathrm{res}}$. We do not require this preferred side to remain fixed globally along the entire trajectory.
\end{assumption}

Assumption~\ref{ass:D_conpl} can be directly observed in Figures~\ref{fig1} and \ref{fig5}, as well as Table~\ref{tab:results0}: the simple linear valley in function space becomes a twisted, non-convex manifold in parameter space. Assumption~\ref{ass:D_side} follows directly from continuity: whenever $\nabla_\theta \mathcal{L}_{\mathrm{bc}}(\bar\theta)$ has a non-vanishing normal component at $\bar\theta$, the continuity of $\theta \mapsto \langle \nabla_\theta \mathcal{L}_{\mathrm{bc}}(\theta), v(\bar\theta)\rangle$ guaranties a sign-preserving neighborhood $V_{\bar\theta}$ with $\kappa_{\bar\theta} = \tfrac{1}{2}|\langle \nabla_\theta \mathcal{L}_{\mathrm{bc}}(\bar\theta), v(\bar\theta)\rangle|$, while the degenerate case of exact tangency is non-generic and breaks under arbitrarily small perturbations.

\begin{lemma}[Gradient Conflict Near the Low-Residual Manifold]
\label{prop:generic_conflict}
Under Assumptions~\ref{assump:manifold_approx} and \ref{ass:D_conpl}, consider any $\theta \in \mathcal{M}_{\mathrm{res}} \cap \mathcal{U}$ such that the boundary gradient has a non-vanishing normal component, i.e., $\nabla_\theta^{\perp} \mathcal{L}_{\mathrm{bc}}(\theta) \neq \bm{0}$. Then the occurrence of a negative inner product
\begin{equation}
    \left\langle
    \nabla_\theta \mathcal{L}_{\mathrm{res}}(\theta),
    \nabla_\theta \mathcal{L}_{\mathrm{bc}}(\theta)
    \right\rangle < 0
\end{equation}
occurs generically along the optimization trajectory.
\end{lemma}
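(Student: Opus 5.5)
The plan is to reduce the sign of the full inner product to the sign of a single scalar, and then use the local normal preference to show that the negative sign holds on an open set of trajectory points.

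First, apply Lemma~\ref{prop:normal_tangent_decomp} at the given $\theta\in\mathcal{M}_{\mathrm{res}}\cap\mathcal{U}$. Only the normal component of the boundary gradient contributes:
\begin{equation}
    \left\langle \nabla_\theta \mathcal{L}_{\mathrm{res}}(\theta), \nabla_\theta \mathcal{L}_{\mathrm{bc}}(\theta) \right\rangle
    = \left\langle \nabla_\theta \mathcal{L}_{\mathrm{res}}(\theta), \nabla^{\perp}_\theta \mathcal{L}_{\mathrm{bc}}(\theta) \right\rangle .
\end{equation}
If $\nabla_\theta\mathcal{L}_{\mathrm{res}}(\theta)=0$, the inner product vanishes and there is nothing to show. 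Otherwise, write $\nabla_\theta\mathcal{L}_{\mathrm{res}}(\theta)=\|\nabla_\theta\mathcal{L}_{\mathrm{res}}(\theta)\|\,v(\theta)$ with $v(\theta)\in N_\theta\mathcal{M}_{\mathrm{res}}$, as in Assumption~\ref{ass:D_side}. The inner product then equals $\|\nabla_\theta\mathcal{L}_{\mathrm{res}}\|\,\langle \nabla_\theta\mathcal{L}_{\mathrm{bc}}(\theta), v(\theta)\rangle$, so its sign is that of the scalar $\sigma(\theta):=\langle\nabla_\theta\mathcal{L}_{\mathrm{bc}}(\theta),v(\theta)\rangle$.

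Second, I would fix the sign of $\sigma$ locally. Since $\nabla^\perp_\theta\mathcal{L}_{\mathrm{bc}}\neq 0$, a generic configuration (outside the codimension-one event that $\nabla^\perp\mathcal{L}_{\mathrm{bc}}\perp v$ within a multidimensional normal space) has $\sigma(\bar\theta)\neq 0$. Continuity of $\theta\mapsto\langle\nabla_\theta\mathcal{L}_{\mathrm{bc}}(\theta),v(\bar\theta)\rangle$ then yields the sign-preserving neighbourhood $V_{\bar\theta}$ with $\kappa_{\bar\theta}=\tfrac12|\sigma(\bar\theta)|$. This is exactly the content of Assumption~\ref{ass:D_side}. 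The dichotomy is therefore: on $V_{\bar\theta}$, either $\sigma\le-\kappa_{\bar\theta}$, giving strict conflict, or $\sigma\ge\kappa_{\bar\theta}$, giving alignment.

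Third, and this is the main obstacle, I must show that the conflicting side actually occurs generically along the Phase~II trajectory. A pointwise argument cannot exclude alignment, so the argument has to be dynamical. The planned argument is as follows. Suppose $\sigma>0$ near $\bar\theta$. Then both gradients push the iterate in the same normal direction $-v$, i.e., down the residual gradient, so a step decreases $\mathcal{L}_{\mathrm{res}}$ further toward the valley floor. There $\nabla_\theta\mathcal{L}_{\mathrm{res}}$ flips orientation, or is re-aligned, relative to the boundary pull. This holds because the valley floor, the minimizer set from Theorem~\ref{thm:parameter_valley}, generically does not lie in the boundary-optimal set; the boundary gradient keeps a normal component pointing across the floor. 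By Assumption~\ref{ass:D_conpl}, the curvature of $\mathcal{M}_{\mathrm{res}}$ makes the normal $v$ rotate along the trajectory while the preferred side of $\nabla\mathcal{L}_{\mathrm{bc}}$ does not rotate with it. Hence the trajectory repeatedly enters neighbourhoods where $\sigma\le-\kappa$. Since these are open sets of positive measure along the path, the negative inner product occurs generically.

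I expect this third step to be only heuristic. The honest conclusion is "occurs generically," meaning on an open, non-negligible set of trajectory points rather than everywhere. The proof should state it in that form, relying on Assumptions~\ref{ass:D_conpl} and~\ref{ass:D_side} rather than deriving the sign from first principles.
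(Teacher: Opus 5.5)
Your first two steps are the paper's proof. The normal--tangent decomposition together with $v=\nabla_\theta\mathcal{L}_{\mathrm{res}}/\|\nabla_\theta\mathcal{L}_{\mathrm{res}}\|$ reduces everything to the sign of $\langle\nabla_\theta\mathcal{L}_{\mathrm{bc}},v\rangle$, and Assumption~\ref{ass:D_side} freezes that sign on a neighbourhood. The paper justifies that assumption by the same continuity argument you give, with $\kappa_{\bar\theta}=\tfrac12|\langle\nabla_\theta\mathcal{L}_{\mathrm{bc}}(\bar\theta),v(\bar\theta)\rangle|$. Your codimension remark tightens it: a nonzero normal component forces a nonzero pairing with $v$ only when the normal space is one-dimensional.

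You part ways at the sign. Inside the proof, the paper simply takes the case $\langle\nabla_\theta\mathcal{L}_{\mathrm{bc}},v\rangle<0$ \emph{without loss of generality} and calls the other case symmetric. It leaves the dynamics to the remark on generic occurrence that follows the lemma: once the boundary pull moves the iterate to its preferred side, $\mathcal{L}_{\mathrm{res}}$ rises and $-\nabla_\theta\mathcal{L}_{\mathrm{res}}$ pushes back. That is not a genuine symmetry. Because $v$ is pinned to $\nabla_\theta\mathcal{L}_{\mathrm{res}}$, the positive case is alignment, not conflict. So you are right that the pointwise reduction cannot rule out alignment. Your crossing argument is essentially that remark brought into the proof. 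In both versions the negative sign comes from the Phase~II picture rather than from a derivation, and that is all the informal word \emph{generically} can support.

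Two repairs are needed.
\begin{itemize}
\item At a point with $\nabla_\theta\mathcal{L}_{\mathrm{res}}=0$ the inner product is zero, so the strict inequality fails there. That point should be declared non-generic, not dismissed as having \emph{nothing to show}.
\item The curvature/rotation sentence does no work. Rotating $v$ can move $\sigma$ in either direction, and Assumption~\ref{ass:D_side} explicitly lets the preferred side change along the trajectory. (The paper's proof never uses Assumption~\ref{ass:D_conpl} concretely either.)
\end{itemize}

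A tighter heuristic, which actually uses the hypothesis $\nabla^\perp_\theta\mathcal{L}_{\mathrm{bc}}\neq\bm{0}$, is confinement. If the Phase~II iterate stays near $\mathcal{M}_{\mathrm{res}}$, the normal part of the combined update must cancel to leading order, so $w_{\mathrm{res}}\nabla_\theta\mathcal{L}_{\mathrm{res}}\approx-w_{\mathrm{bc}}\nabla^\perp_\theta\mathcal{L}_{\mathrm{bc}}$. This gives $\langle\nabla_\theta\mathcal{L}_{\mathrm{res}},\nabla_\theta\mathcal{L}_{\mathrm{bc}}\rangle\approx-(w_{\mathrm{bc}}/w_{\mathrm{res}})\|\nabla^\perp_\theta\mathcal{L}_{\mathrm{bc}}\|^2<0$. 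As a bonus, it excludes the degenerate case $\nabla_\theta\mathcal{L}_{\mathrm{res}}=0$.
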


\begin{proof}
By Lemma~\ref{prop:normal_tangent_decomp}, only the normal component of the boundary gradient $\nabla_\theta^{\perp} \mathcal{L}_{\mathrm{bc}}(\theta)$ contributes to the interaction. Moreover, by Assumption~\ref{assump:manifold_approx}, $\nabla_\theta \mathcal L_{\mathrm{res}}(\theta)\in N_\theta \mathcal M_{\mathrm{res}}$ is normal to $\mathcal M_{\mathrm{res}}$ for all $\theta\in\mathcal M_{\mathrm{res}}\cap U$.  Let
\begin{equation}
    v(\theta)\;:=\;\frac{\nabla_\theta \mathcal L_{\mathrm{res}}(\theta)}{\|\nabla_\theta \mathcal L_{\mathrm{res}}(\theta)\|}\in N_\theta\mathcal M_{\mathrm{res}},
\end{equation}
be the unit normal given by the residual gradient at $\theta$ (when $\nabla_\theta \mathcal L_{\mathrm{res}}(\theta)\neq 0$; otherwise the desired strict inequality is trivial to obtain in an arbitrarily small neighborhood).

Under Assumption~\ref{ass:D_side}, for each relevant point on the \emph{Phase~II} trajectory, there exists a local neighborhood in which the boundary gradient has a nonzero projection toward one normal side of $M_{\mathrm{res}}$. Hence, after possibly restricting the analysis to this local neighborhood and choosing the corresponding orientation of the normal direction, we have either $\langle \nabla_\theta L_{\mathrm{bc}}(\theta), v(\theta) \rangle < 0$ or $\langle \nabla_\theta L_{\mathrm{bc}}(\theta), v(\theta) \rangle > 0$. Without loss of generality, consider the former case (the latter is symmetric). Then
\begin{equation}
    \langle \nabla_\theta \mathcal L_{\mathrm{bc}}(\theta), v(\theta)\rangle < 0.
\end{equation}
Because $v(\theta)\in N_\theta\mathcal M_{\mathrm{res}}$, the inner product with $v(\theta)$ depends only on the normal projection of the boundary gradient:
\begin{equation}
    \left\langle\nabla_\theta \mathcal L_{\mathrm{bc}}(\theta),\,v(\theta)\right\rangle
    =\left\langle\nabla_\theta^{\perp}\mathcal L_{\mathrm{bc}}(\theta),\,v(\theta)\right\rangle.
\end{equation}
Hence,
\begin{equation}
    \left\langle\nabla_\theta \mathcal L_{\mathrm{res}}(\theta),\,\nabla_\theta^{\perp}\mathcal L_{\mathrm{bc}}(\theta)\right\rangle
    =\|\nabla_\theta \mathcal L_{\mathrm{res}}(\theta)\|\,
    \left\langle v(\theta),\,\nabla_\theta^{\perp}\mathcal L_{\mathrm{bc}}(\theta)\right\rangle
    \;<\;0,
\end{equation}
which, together with Lemma~\ref{prop:normal_tangent_decomp}, completes the proof of Lemma~\ref{prop:generic_conflict}.
\end{proof}

\begin{remark}[Generic Occurrence of Gradient Conflicts]
In \emph{Phase~II} the iteration is confined near $\mathcal M_{\mathrm{res}}$ (small residual), while boundary descent attempts to move toward the boundary-preferred side (a nonzero normal component by hypothesis $\nabla_\theta^\perp\mathcal L_{\mathrm{bc}}(\theta)\neq 0$). As soon as the iterate is pushed to that side, $\mathcal L_{\mathrm{res}}$ increases, and the gradient descent update induced by the residual term, i.e., $-\nabla_\theta \mathcal L_{\mathrm{res}}$, drives the iterate back toward the low-residual manifold. Therefore, whenever both mechanisms are active, the residual normal direction and the boundary normal direction are opposite, producing a negative inner product.
\end{remark}

\begin{remark}[Scope and Empirical Validation]
In practice, owing to the complexity of neural network parameterizations, the geometry of loss valleys in the parameter space is difficult to characterize explicitly. Moreover, different PDEs and boundary conditions give rise to loss valleys with distinct geometric properties. Consequently, the above discussion focuses on high-probability behaviors in generic settings. For specific problem instances, empirical studies are required to quantitatively assess the likelihood of gradient conflicts.
\end{remark}

\section{Aligned Constraints and Enlarged Feasible Set}
\label{app:C}

In this section, we complete the theoretical justification of the aligned constraint formulation introduced in the main text. In particular, \emph{(i)} we show that introducing an additive offset enlarges the set of admissible (low-loss) solutions; \emph{(ii)} we interpret this effect as a thickening of the boundary-constraint set along an additive direction; and \emph{(iii)} we discuss how this reduces the traversal required within the residual-induced loss valley.

\subsection{Enlargement of Sublevel Sets and Feasible Overlap}

We now show that optimizing over $c$ can only decrease the loss, which implies that low-loss regions (sublevel sets) in parameter space become larger.

Let $u_\theta$ be a neural approximation and define (discrete) aligned losses as in the main text:
\begin{equation}
    \mathcal{L}^{\mathrm{alg}}(\theta,c)
    := w_{\mathrm{res}}\, \mathcal{L}^{\mathrm{alg}}_{\mathrm{res}}(\theta,c)
     + w_{\mathrm{bc}}\, \mathcal{L}^{\mathrm{alg}}_{\mathrm{bc}}(\theta,c),
    \qquad
    \overline{\mathcal{L}}^{\mathrm{alg}}(\theta):=\min_{c\in\mathbb{R}} \mathcal{L}^{\mathrm{alg}}(\theta,c).
    \label{eq:C_soft_loss_def}
\end{equation}
Let $\mathcal{L}^{\mathrm{std}}(\theta)$ denote the standard PINN loss obtained by fixing $c=0$ in the same residual expressions:
\begin{equation}
    \mathcal{L}^{\mathrm{std}}(\theta) := \mathcal{L}^{\mathrm{alg}}(\theta,0).
    \label{eq:C_std_loss_def}
\end{equation}

\begin{proposition}[Sublevel-Set Enlargement]
\label{prop:C_sublevel_enlarge}
For all $\theta$, $\overline{\mathcal{L}}^{\mathrm{alg}}(\theta)\le \mathcal{L}^{\mathrm{std}}(\theta)$. Consequently, for any $\varepsilon\ge 0$,
\begin{equation}
    \left\{\theta\mid \overline{\mathcal{L}}^{\mathrm{alg}}(\theta)\le \varepsilon\right\}
    \supseteq
    \left\{\theta\mid \mathcal{L}^{\mathrm{std}}(\theta)\le \varepsilon\right\}.
    \label{eq:C_sublevel_inclusion}
\end{equation}
\end{proposition}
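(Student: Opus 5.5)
The argument is the elementary fact that a minimum over a set is no larger than the value at any particular point of that set. Fix an arbitrary $\theta$. By the definition in Equation~(\ref{eq:C_soft_loss_def}), $\overline{\mathcal{L}}^{\mathrm{alg}}(\theta)=\min_{c\in\mathbb{R}}\mathcal{L}^{\mathrm{alg}}(\theta,c)$. Since $c=0$ is admissible, we get
\begin{equation*}
\overline{\mathcal{L}}^{\mathrm{alg}}(\theta)\le \mathcal{L}^{\mathrm{alg}}(\theta,0)=\mathcal{L}^{\mathrm{std}}(\theta),
\end{equation*}
where the equality is Equation~(\ref{eq:C_std_loss_def}).

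The only point needing care is that the minimum actually exists; otherwise we simply read it as an infimum, and the inequality still holds. In the linear case, $c\mapsto\mathcal{L}^{\mathrm{alg}}(\theta,c)$ is a convex quadratic in $c$. Its leading coefficient is $\frac{w_{\mathrm{res}}}{N_{\mathrm{res}}}\sum_i\gamma_i^2+\frac{w_{\mathrm{bc}}}{N_{\mathrm{bc}}}\sum_b\alpha_b^2$. This coefficient is strictly positive as soon as some zeroth-order coefficient is nonzero, and then the minimizer is the closed-form $c$ given in the main text. If all zeroth-order coefficients vanish, the function is constant in $c$, so the minimum equals $\mathcal{L}^{\mathrm{std}}(\theta)$. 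In the nonlinear case I would state the result with $\inf$; nonnegativity of the loss makes the infimum finite.

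The sublevel-set inclusion then follows in one line. Take $\theta$ with $\mathcal{L}^{\mathrm{std}}(\theta)\le\varepsilon$. Chaining with the pointwise inequality gives $\overline{\mathcal{L}}^{\mathrm{alg}}(\theta)\le\mathcal{L}^{\mathrm{std}}(\theta)\le\varepsilon$, so $\theta$ lies in the left-hand set of Equation~(\ref{eq:C_sublevel_inclusion}). I expect no real obstacle here. I would also remark that the inclusion is typically strict: any $\theta$ whose optimal $c$ is nonzero, and whose standard loss exceeds $\varepsilon$ while its aligned loss does not, lies in the difference of the two sets.
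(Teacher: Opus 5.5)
Your proof is correct and matches the paper's: both bound $\min_c \mathcal{L}^{\mathrm{alg}}(\theta,c)$ by its value at $c=0$, which equals $\mathcal{L}^{\mathrm{std}}(\theta)$ by definition, and then read off the sublevel-set inclusion. Your remark that the argument still holds with $\inf$ in place of $\min$, which covers the nonlinear case where a minimizer may not exist, is a valid refinement that the paper leaves implicit.
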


\begin{proof}
By the definition of $\overline{\mathcal{L}}^{\mathrm{alg}}(\theta)$ as a minimum over $c$,
\begin{equation}
    \overline{\mathcal{L}}^{\mathrm{alg}}(\theta)
    =\min_{c} \mathcal{L}^{\mathrm{alg}}(\theta,c)
    \le \mathcal{L}^{\mathrm{alg}}(\theta,0)
    =\mathcal{L}^{\mathrm{std}}(\theta),
    \label{eq:C_pointwise_ineq}
\end{equation}
which proves the pointwise inequality. The sublevel-set inclusion Equation~\eqref{eq:C_sublevel_inclusion} follows immediately.
\end{proof}

\subsection{Intersection with the PDE Residual Manifold}

We clarify the operational mechanism of aligned constraints and their effect on the feasible set of PDE-constrained learning.

\begin{assumption}[Well-Posedness / Uniqueness of the Governing PDE]
\label{ass:D_wellposed}
Assume the original PDE with the given boundary condition is well-posed and admits a unique solution $u^*\in\mathcal{F}(\Omega)$.
\end{assumption}

\begin{proposition}[Enlarged Feasible Overlap under Aligned Constraints]
Assume that either the PDE or the boundary conditions contain at least one non-trivial zeroth-order term $\mathcal{Z}$, and that $\mathcal{Z}$ depends only on the values of $u$. Define the aligned PDE zero-residual set and boundary condition zero-residual set as
\begin{equation}
\begin{aligned}
    \mathcal{S}_{\mathrm{res}}^{\mathrm{alg}}(c)
    &:= \left\{ u \in \mathcal{F}(\Omega) \,\middle|\,
    \mathcal{Z}[u + c \cdot \mathbf{1}] + \mathcal{D}[u] = f \text{ in } \Omega \right\},\\
    \mathcal{S}_{\mathrm{bc}}^{\mathrm{alg}}(c)
    &:= \left\{ u \in \mathcal{F}(\Omega) \,\middle|\,
    \alpha(u + c \cdot \mathbf{1}) + \beta\nabla u \cdot \mathbf{n} = g \text{ on } \Gamma \right\}.
\end{aligned}
\end{equation}
Under Assumption~\ref{ass:D_wellposed}, since the original PDE has a unique solution $u^* \in \mathcal{S}_{\mathrm{res}} \cap \mathcal{S}_{\mathrm{bc}}$, then
\begin{equation}
    \bigcup_{c \in \mathbb{R}}
    \left( \mathcal{S}_{\mathrm{res}}^{\mathrm{alg}}(c) \cap \mathcal{S}_{\mathrm{bc}}^{\mathrm{alg}} \right)
    =
    \left\{ u^* - c \cdot \mathbf{1} \,\middle|\, c \in \mathbb{R} \right\},
\end{equation}
which implies that the aligned constraints enlarge the intersection of the PDE and boundary-admissible solution sets $\mathcal{S}_{\mathrm{res}}$ and $\mathcal{S}_{\mathrm{bc}}$ into an affine subspace obtained by linear shifts of the original solution, analogous to the solution structure of PDEs without zeroth-order terms under pure Neumann boundary conditions.
\end{proposition}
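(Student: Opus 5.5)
We prove the set identity by a change of variables that turns the aligned problem for a fixed offset $c$ back into the original boundary value problem, and then invoke Assumption~\ref{ass:D_wellposed}. Throughout, the boundary set $\mathcal{S}_{\mathrm{bc}}^{\mathrm{alg}}$ in the union is read as $\mathcal{S}_{\mathrm{bc}}^{\mathrm{alg}}(c)$, with the same $c$ as in $\mathcal{S}_{\mathrm{res}}^{\mathrm{alg}}(c)$. This matches the algorithm, where a single offset is shared by both losses.

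Fix $c\in\mathbb{R}$ and set $w := u + c\cdot\mathbf{1}$. Constants lie in $\mathcal{F}(\Omega)$, since $\Omega$ is bounded and constants belong to every standard Sobolev space on it. Hence $u\mapsto w$ is a bijection of $\mathcal{F}(\Omega)$. The key observation is that $\mathcal{D}$ collects only derivative terms and annihilates constants, so $\mathcal{D}[u] = \mathcal{D}[w]$. Likewise $\nabla u\cdot\mathbf{n} = \nabla w\cdot\mathbf{n}$ on $\Gamma$. The zeroth-order parts become $\mathcal{Z}[u+c\mathbf{1}] = \mathcal{Z}[w]$ and $\alpha(u+c\mathbf{1}) = \alpha w$.

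With these substitutions, $u \in \mathcal{S}_{\mathrm{res}}^{\mathrm{alg}}(c)\cap\mathcal{S}_{\mathrm{bc}}^{\mathrm{alg}}(c)$ if and only if
\begin{equation*}
\mathcal{Z}[w]+\mathcal{D}[w]=f \text{ in } \Omega \quad\text{and}\quad \alpha w+\beta\nabla w\cdot\mathbf{n}=g \text{ on } \Gamma .
\end{equation*}
That is, $w\in\mathcal{S}_{\mathrm{res}}\cap\mathcal{S}_{\mathrm{bc}}$. By Assumption~\ref{ass:D_wellposed} this forces $w=u^*$, so $u=u^*-c\cdot\mathbf{1}$. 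Conversely, $u^*-c\cdot\mathbf{1}$ satisfies both aligned constraints by the same computation read backwards.

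Hence $\mathcal{S}_{\mathrm{res}}^{\mathrm{alg}}(c)\cap\mathcal{S}_{\mathrm{bc}}^{\mathrm{alg}}(c)=\{u^*-c\cdot\mathbf{1}\}$ for each $c$. Taking the union over $c\in\mathbb{R}$ gives exactly the affine line $\{u^*-c\cdot\mathbf{1}\}$, which proves the identity.

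For the ``enlargement'' interpretation, we note two facts. First, $c=0$ recovers $\{u^*\}=\mathcal{S}_{\mathrm{res}}\cap\mathcal{S}_{\mathrm{bc}}$, so the union contains the original intersection. Second, $\mathbf{1}\neq 0$ in $\mathcal{F}(\Omega)$, so distinct $c$ give distinct functions and the union is a genuine one-dimensional affine set. The analogy with pure-Neumann problems is then immediate, since there too the solution set is $u^*$ plus constants.

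The step needing most care is the hypothesis bookkeeping rather than any estimate. We must check three things:
\begin{itemize}
\item[(i)] $\mathcal{D}$ is genuinely invariant under adding constants. This holds if every term of $\mathcal{D}$ contains at least one derivative of $u$. Nonlinear products of $u$ with derivatives would have to be counted as part of $\mathcal{Z}$ or excluded, and we will state this as part of the decomposition convention.
\item[(ii)] $\mathcal{Z}$ acts pointwise on the values of $u$, so that $\mathcal{Z}[u+c\mathbf{1}]$ is unambiguous. This is the stated hypothesis.
\item[(iii)] The nontrivial zeroth-order term is what makes the original intersection a singleton rather than already a line. It is not needed for the identity itself, which uses only Assumption~\ref{ass:D_wellposed}.
\end{itemize}
It is needed only to make the word ``enlarge'' meaningful as a strict enlargement.
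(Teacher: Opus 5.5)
Your proposal is correct and follows essentially the same route as the paper. Both substitute $w = u + c\cdot\mathbf{1}$, use that $\mathcal{D}$ and $\nabla u\cdot\mathbf{n}$ are unchanged by constant shifts, and invoke the uniqueness in Assumption~\ref{ass:D_wellposed} to force $w = u^*$; your version is slightly more complete, since the paper argues only the inclusion $\subseteq$, whereas you also check the reverse inclusion (using that constants lie in $\mathcal{F}(\Omega)$) and state explicitly both the reading $\mathcal{S}_{\mathrm{bc}}^{\mathrm{alg}} = \mathcal{S}_{\mathrm{bc}}^{\mathrm{alg}}(c)$ and the requirement that every term of $\mathcal{D}$ be shift-invariant.
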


\begin{proof}
Under Assumption~\ref{ass:D_wellposed}, if the original problem is well-posed with unique solution $u^*\in \mathcal{S}_\mathrm{res}\cap \mathcal{S}_\mathrm{bc}$, then for any $u\in \mathcal{S}^{\mathrm{alg}}_{\mathrm{res}}(c)\cap \mathcal{S}^{\mathrm{alg}}_{\mathrm{bc}}$ the reconstructed field $\tilde u=u+c\cdot\mathbf{1}$ satisfies the PDE and boundary condition by definition of $\mathcal{S}^{\mathrm{alg}}_{\mathrm{res}}$ and $\mathcal{S}^{\mathrm{alg}}_{\mathrm{bc}}$. Since derivative terms are invariant under constant shifts,
\begin{equation}
    D[\tilde u]=D[u],\qquad \beta\nabla \tilde u \cdot \mathbf{n} = \beta\nabla u \cdot \mathbf{n},
\end{equation}
and hence
\begin{equation}
    Z[\tilde u]+D[\tilde u]=f,\qquad \alpha\tilde u + \beta\nabla \tilde u \cdot \mathbf{n} = g,
\end{equation}
so $\tilde u\in \mathcal{S}_\mathrm{res}\cap \mathcal{S}_\mathrm{bc}=\{u^*\}$ and $u=u^*-c\cdot\mathbf{1}$.
\end{proof}

\begin{corollary}[Aligned Minima Contain a Constant-Mode Valley in Parameters]
\label{cor:D_valley_params}
Under Assumption~\ref{ass:D_wellposed}, the reconstructed function $\widetilde u_{\theta,c}$ is unique at global optimality, but the parameter pair $(\theta,c)$ need not be. In particular, if the network class is closed under constant shifts (or approximately so), then there may exist multiple $(\theta,c)$ pairs producing the same $\widetilde u_{\theta,c}=u^*$, creating a connected minimizer set in $(\theta,c)$-space, and hence a wider low-loss region in $\theta$ after optimizing out $c$.
\end{corollary}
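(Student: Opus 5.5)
The plan has three parts. First, show that the reconstruction $\widetilde u_{\theta,c}:=u_\theta+c\cdot\mathbf{1}$ is uniquely determined at global optimality. Second, exhibit a one-parameter family of optimal pairs $(\theta,c)$ when the network class is closed under constant shifts. Third, push that family through the partial minimization $\overline{\mathcal{L}}^{\mathrm{alg}}(\theta)=\min_c\mathcal{L}^{\mathrm{alg}}(\theta,c)$.

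\textbf{Uniqueness of the reconstruction.} At a global minimizer with zero loss, we have $u_\theta\in\mathcal{S}^{\mathrm{alg}}_{\mathrm{res}}(c)\cap\mathcal{S}^{\mathrm{alg}}_{\mathrm{bc}}(c)$. By the preceding proposition (enlarged feasible overlap), this forces $u_\theta=u^*-c\cdot\mathbf{1}$. Hence $\widetilde u_{\theta,c}=u^*$, which is unique by Assumption~\ref{ass:D_wellposed}. The argument is the same shift-invariance one used there: $\mathcal{D}$ and $\nabla(\cdot)\cdot\mathbf{n}$ annihilate constants.

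\textbf{A connected family of optimal pairs.} Suppose the network class is closed under constant shifts. Concretely, assume there is a $\mathcal{C}^1$ curve $s\mapsto\theta(s)$ with $u_{\theta(s)}=u_{\theta_0}+s\cdot\mathbf{1}$. The canonical realization is the output-layer bias $b$ of an MLP: shifting $b$ by $s$ shifts $u_\theta$ by $s$. Given one optimal pair $(\theta_0,c_0)$, I would take $(\theta(s),c_0-s)$. Each such pair yields $\widetilde u=u_{\theta_0}+s+c_0-s=u^*$, so the loss stays at zero along the whole curve. This gives a continuous, hence connected, set of minimizers in $(\theta,c)$-space. Its tangent $(\dot\theta(0),-1)$ is nonzero, so by the argument of Theorem~\ref{thm:parameter_valley}, it is a flat (zero-Hessian) direction. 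In the ``approximately so'' case, I would replace exact equality by an $\epsilon$-perturbation and get an $O(\epsilon)$-sublevel set instead.

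\textbf{Widening in $\theta$ after optimizing out $c$.} For every $s$, $\overline{\mathcal{L}}^{\mathrm{alg}}(\theta(s))\le\mathcal{L}^{\mathrm{alg}}(\theta(s),c_0-s)=0$. So the entire curve $\{\theta(s)\}$ lies in the zero-sublevel set of $\overline{\mathcal{L}}^{\mathrm{alg}}$. In contrast, under $\mathcal{L}^{\mathrm{std}}$ only the single value of $s$ with $c_0-s=0$ is optimal. Combined with Proposition~\ref{prop:C_sublevel_enlarge}, this shows the low-loss region in $\theta$ is at least as large and, in the shift direction, strictly larger.

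The main obstacle is the second part. The statement says ``need not be'' and ``may exist'', so it is existential and hinges entirely on the shift-closure hypothesis. I would make that hypothesis explicit through the output-bias parameterization and avoid any claim for architectures lacking it. For the approximate case, the fallback is a continuity argument on $\mathcal{L}^{\mathrm{alg}}$.
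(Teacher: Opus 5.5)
Your proposal is correct and follows the argument the paper intends. The paper gives no standalone proof: it treats the corollary as immediate from the Enlarged Feasible Overlap proposition (which forces $u_\theta+c\cdot\mathbf{1}=u^*$ at zero aligned loss) together with the Sublevel-Set Enlargement proposition, and your output-bias curve $(\theta(s),c_0-s)$ simply makes its abstract shift-closure hypothesis concrete; you could also note that this curve is an exact symmetry, $\mathcal{L}^{\mathrm{alg}}(\theta(s),c-s)=\mathcal{L}^{\mathrm{alg}}(\theta_0,c)$ for every $c$, so $\overline{\mathcal{L}}^{\mathrm{alg}}$ is constant along it everywhere, not only through minimizers.
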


\begin{remark}[Implications for Optimization]
In parameter space, aligned constraints transform a low-dimensional manifold intersection problem into the search over a neighborhood with an explicit degree of freedom given by $c$. For each near-residual-minimizing parameter $\theta$, an optimal offset $c^*(\theta)$ can be chosen to reduce boundary violations, thereby improving robustness and convergence of training.
\end{remark}

\newpage
\part{Methodological Supplement}
\section{Optimal Offset for Zeroth-Order Terms}

\subsection{Linear Case}
\label{app:E1_linear_plain}

In this section, we derive the closed-form solution for constant offset $c$ when both the PDE and the boundary conditions have zeroth-order terms that are linear operators. We work with the discrete residuals evaluated at interior points $\{x_i\}_{i=1}^{N_{\mathrm{res}}}\subset\Omega$ and boundary points $\{x_b\}_{b=1}^{N_{\mathrm{bc}}}\subset\Gamma$. Assume the zeroth-order PDE term is linear in $u$ at each interior collocation point:
\begin{equation}
    \mathcal{Z}[u](x_i)=\gamma_i\,u(x_i),
    \qquad i=1,\dots,N_{\mathrm{res}}.
    \label{eq:D_zeroth_linear_plain}
\end{equation}
Let the boundary zeroth-order coefficient be $\alpha_b$ at $x_b$ (consistent with the unified boundary form). After introducing the additive offset $c\in\mathbb{R}$ only inside zeroth-order terms, the aligned residuals become affine in $c$:
\begin{equation}
    \bar r_i(c) = r_i + \gamma_i c,
    \qquad
    \bar s_b(c) = s_b + \alpha_b c.
    \label{eq:D_affine_residuals_plain}
\end{equation}

The corresponding weighted aligned objective is
\begin{equation}
    \begin{aligned}
        \mathcal{J}(c):&= w_{\mathrm{res}}\frac{1}{N_{\mathrm{res}}}\sum_{i=1}^{N_{\mathrm{res}}} |\bar r_i(c)|^2
        + w_{\mathrm{bc}}\frac{1}{N_{\mathrm{bc}}}\sum_{b=1}^{N_{\mathrm{bc}}} |\bar s_b(c)|^2 \\
        &=
        w_{\mathrm{res}}\frac{1}{N_{\mathrm{res}}}\sum_{i=1}^{N_{\mathrm{res}}}\big(r_i+\gamma_i c\big)^2
        +
        w_{\mathrm{bc}}\frac{1}{N_{\mathrm{bc}}}\sum_{b=1}^{N_{\mathrm{bc}}}\big(s_b+\alpha_b c\big)^2.
    \end{aligned}
    \label{eq:D_J_def_plain}
\end{equation}
Differentiate with respect to $c$:
\begin{equation}
    \begin{aligned}
        \mathcal{J}'(c)
        &=
        2 w_{\mathrm{res}}\frac{1}{N_{\mathrm{res}}}\sum_{i=1}^{N_{\mathrm{res}}}\gamma_i \big( r_i+\gamma_i c\big)
        +
        2 w_{\mathrm{bc}}\frac{1}{N_{\mathrm{bc}}}\sum_{b=1}^{N_{\mathrm{bc}}}\alpha_b \big( s_b+\alpha_b c\big)\\
        &=
        2w_{\mathrm{res}}\frac{1}{N_{\mathrm{res}}}\sum_{i=1}^{N_{\mathrm{res}}}\gamma_i r_i
        +
        2w_{\mathrm{res}}\frac{1}{N_{\mathrm{res}}}\sum_{i=1}^{N_{\mathrm{res}}}\gamma_i^2\, c
        +
        2w_{\mathrm{bc}}\frac{1}{N_{\mathrm{bc}}}\sum_{b=1}^{N_{\mathrm{bc}}}\alpha_b s_b
        +
        2w_{\mathrm{bc}}\frac{1}{N_{\mathrm{bc}}}\sum_{b=1}^{N_{\mathrm{bc}}}\alpha_b^2\, c.
    \end{aligned}
    \label{eq:D_J_prime_plain}
\end{equation}
Setting $\mathcal{J}'(c)=0$ gives the normal equation, we have
\begin{equation}
    0
    =
    w_{\mathrm{res}}\frac{1}{N_{\mathrm{res}}}\sum_{i=1}^{N_{\mathrm{res}}}\gamma_i r_i
    +
    w_{\mathrm{res}}\frac{1}{N_{\mathrm{res}}}\sum_{i=1}^{N_{\mathrm{res}}}\gamma_i^2\, c
    +
    w_{\mathrm{bc}}\frac{1}{N_{\mathrm{bc}}}\sum_{b=1}^{N_{\mathrm{bc}}}\alpha_b s_b
    +
    w_{\mathrm{bc}}\frac{1}{N_{\mathrm{bc}}}\sum_{b=1}^{N_{\mathrm{bc}}}\alpha_b^2\, c.
    \label{eq:D_normal_eq_plain}
\end{equation}
Collecting the terms that multiply $c$ and defining
\begin{equation}
    A :=
    w_{\mathrm{res}}\frac{1}{N_{\mathrm{res}}}\sum_{i=1}^{N_{\mathrm{res}}}\gamma_i^2
    +
    w_{\mathrm{bc}}\frac{1}{N_{\mathrm{bc}}}\sum_{b=1}^{N_{\mathrm{bc}}}\alpha_b^2,
    \label{eq:D_A_def_plain}
\end{equation}
then we can rewrite Equation~\eqref{eq:D_normal_eq_plain} as
\begin{equation}
    A\,c
    =
    -
    w_{\mathrm{res}}\frac{1}{N_{\mathrm{res}}}\sum_{i=1}^{N_{\mathrm{res}}}\gamma_i r_i
    -
    w_{\mathrm{bc}}\frac{1}{N_{\mathrm{bc}}}\sum_{b=1}^{N_{\mathrm{bc}}}\alpha_b s_b.
    \label{eq:D_c_linear_eq_plain}
\end{equation}
When $A>0$, the quadratic $\mathcal{J}(c)$ is strictly convex because
\begin{equation}
    \mathcal{J}''(c)=2A>0,
    \label{eq:D_second_deriv_plain}
\end{equation}
and therefore the stationary point is the unique global minimizer. Solving Equation~\eqref{eq:D_c_linear_eq_plain} yields the closed-form optimal offset
\begin{equation}
    c^*
    =
    -\frac{
    w_{\mathrm{res}}\frac{1}{N_{\mathrm{res}}}\sum_{i=1}^{N_{\mathrm{res}}}\gamma_i r_i
    +
    w_{\mathrm{bc}}\frac{1}{N_{\mathrm{bc}}}\sum_{b=1}^{N_{\mathrm{bc}}}\alpha_b s_b
    }{
    w_{\mathrm{res}}\frac{1}{N_{\mathrm{res}}}\sum_{i=1}^{N_{\mathrm{res}}}\gamma_i^2
    +
    w_{\mathrm{bc}}\frac{1}{N_{\mathrm{bc}}}\sum_{b=1}^{N_{\mathrm{bc}}}\alpha_b^2
    }.
    \label{eq:D_c_star_plain}
\end{equation}

\subsection{Nonlinear Case}
\label{app:E2_nonlinear_plain}

In this section, we consider the case where the zeroth-order PDE operator $\mathcal{Z}[\cdot]$ is nonlinear in $u$. As before, the network output $u_\theta$ is treated as fixed when optimizing the constant offset $c$. After introducing $c$ only inside zeroth-order terms, the aligned residuals $\bar r_i(c)$ and $\bar s_b(c)$ are generally nonlinear functions of $c$. The weighted aligned objective $\mathcal{J}(c)$ is defined as in Equation~\eqref{eq:D_J_def_plain}, but no closed-form minimizer exists in this case. We therefore apply Newton's method for several steps to update $c$:
\begin{equation}
    c_{k+1}
    =
    c_k
    -
    \frac{\mathcal{J}'(c_k)}{\mathcal{J}''(c_k)},
    \label{eq:D_c_GN_update_plain}
\end{equation}
where $\mathcal{J}'(c_k)$ and $\mathcal{J}''(c_k)$ is calculated by the automatic differentiation tool of PyTorch.

In practice, we use a small number of Newton inner iterations. To stabilize early training, we perform more inner steps $K_{\mathrm{init}}$ in the first optimization iteration, and then reduce to one or two steps $K_{\mathrm{few}}$ afterwards. After the training has stabilized and converged ($t \ge t_c$), we fix the value of $c$ and no longer update it in order to further improve accuracy. Throughout, $c$ is treated as a constant during backpropagation, and gradients are computed only with respect to $u_\theta$.

We briefly analyze the local non-degeneracy of the 1D subproblem $\mathcal{J}(c)$ defined in Equation~(\ref{eq:target}), which underlies the local convergence of Newton's iteration in Equation~(\ref{eq:D_c_GN_update_plain}). Differentiating $\mathcal{J}$ twice with respect to $c$ gives
\begin{equation}
    \mathcal{J}''(c) = \frac{2 w_{\mathrm{res}}}{N_{\mathrm{res}}}\, A(c) 
    +\frac{2 w_{\mathrm{bc}}}{N_{\mathrm{bc}}}\, B(c),
\end{equation}
where $A(c) := \sum_i [\bar{r}_i'(c)^2 + \bar{r}_i(c)\bar{r}_i''(c)]$ and $B(c) := \sum_b [\bar{s}_b'(c)^2 + \bar{s}_b(c)\bar{s}_b''(c)]$, with primes denoting derivatives with respect to $c$.

For Dirichlet, Neumann, and Robin boundaries, $\bar{s}_b(c)$ is affine in $c$, so $B(c) = \sum_b \bar{s}_b'(c)^2 \geq 0$. The only potential source of negative curvature is the cross term $\sum_i \bar{r}_i \bar{r}_i''$ in $A(c)$, which we cannot guarantee to be non-negative for arbitrary nonlinear $\mathcal{Z}$. In practice, however, the signed quantities $\bar{r}_i \bar{r}_i''$ exhibit partial cancellation across collocation points, whereas $\bar{r}_i'^2$ accumulates with $N_{\mathrm{res}}$, so $\mathcal{J}''(c^\star) > 0$ holds in the regimes encountered during training and Newton's method retains its quadratic local convergence rate. In the rare case where this fails, the Newton step can be safeguarded by a line search, or replaced by first-order alternatives such as gradient descent or the secant method.

\section{Pipeline}
\label{app:caml}
  
\begin{algorithm}[H]
  \caption{CAML: Constraint-Aligned Loss with Manifold Lifting}
  \label{alg:caml}
  \begin{algorithmic}
    \STATE {\bfseries Input:} PDE operator $\mathcal{N}$, boundary operator $\mathcal{B}$, network $u_\theta$
    \STATE {\bfseries Parameters:} weights $w_\mathrm{res}, w_\mathrm{bc}$, delay schedule $\lambda(t)$
    \STATE Initialize network parameters $\theta$
    \FOR{training step $t=1$ {\bfseries to} $T$}
      \STATE Sample interior points $\{x_i\}$ and boundary points $\{x_b\}$
      \STATE Compute residuals:
      \[
        r_i = \mathcal{N}[u_\theta](x_i) - f(x_i), \quad
        s_b = \mathcal{B}[u_\theta](x_b) - g(x_b)
      \]
      \STATE Store all derivative terms $\mathcal{D}[u_\theta](x_i)$ and $\beta_b\, \nabla u_\theta(x_b)\!\cdot\!\mathbf{n}_b$ temporarily
      \STATE {\bfseries Solve offset $c$:}
      \IF{zeroth-order terms are linear}
        \STATE Compute $c$ by closed-form weighted least squares
      \ELSE
        \STATE $K \leftarrow K_{\mathrm{few}} \cdot \mathbb{I}(t<t_c) +
                K_{\mathrm{init}} \cdot \mathbb{I}(t=1)$
        \STATE Update $c$ using $K$ Newton steps on $\mathcal{L}(c)$
      \ENDIF
      \STATE Apply aligned residuals $\bar r_i = r_i(u_\theta+c)$, $\bar s_b = s_b(u_\theta+c)$, where all derivative terms are directly loaded from cache
      \STATE $\mathcal{L} = w_\mathrm{res}\lambda(t)\mathcal{L}_\mathrm{res}^\mathrm{alg} + w_{bc}\mathcal{L}_\mathrm{bc}^\mathrm{alg}$
      \STATE Update $\theta \leftarrow \theta - \eta \nabla_\theta \mathcal{L}$
    \ENDFOR
  \end{algorithmic}
\end{algorithm}

\section{Discussion and Future Work}
\label{sec:future}

We choose to align the model and solution at constant-mode because it provides the best benefit-to-cost trade-off for the common types of PDE, as it relaxes both the PDE residual and the boundary constraints while keeping the inner solve 1D (closed-form in linear cases). Furthermore, it ingeniously utilizes the additive invariance of the derivative terms, thus allowing the calculation of $c$ to only consider the influence of zeroth-order terms. Unless for special PDEs, extending the lifting to nontrivial modes would require incorporating derivative terms into the Newton updates and computing higher-order/mixed auto differentiation throughout the network at every step, which is prohibitively expensive and numerically fragile in practice. Therefore, it is difficult to quantify its benefits across different physical fields.

Nevertheless, we do not rule out the possibility that, for certain special classes of PDEs, there exist alignment modes that are both inexpensive and potentially more beneficial. Future work may systematically characterize the structural features of these low-cost, high-gain modes and explore how constant-mode alignment can be extended to broader yet still low-dimensionally parameterized alignment spaces without compromising numerical stability.

\newpage
\part{Experimental Supplement}
\section{Loss Valley Visualization}
\label{exp1}

In this section, we detail how we generate the two subfigures in Figure~\ref{fig1}, which visualize the PDE-residual loss landscape projected onto a 2D subspace, in \emph{(i)} the function space and \emph{(ii)} the parameter space. The goal is to highlight that, although the residual-induced valley is relatively simple in the function space, it becomes highly distorted and non-convex after being mapped through a neural parameterization, consistent with the discussion in Section~\ref{sec:pde}.

We consider a one-dimensional Poisson-type equation on the interval $[0,\pi]$:
\begin{equation}
    u''(x) = -\sin(x), \quad x \in [0,\pi].
\end{equation}
Given a candidate solution $u(x)$, the residual loss over a set of $N$ collocation points $\{x_i\}_{i=1}^N$ is
\begin{equation}
    \mathcal{L}_{\mathrm{res}}(u)
    = \frac{1}{N} \sum_{i=1}^{N} \left| u''(x) + \sin(x) \right|^2.
\end{equation}
For all experiments in Figure~\ref{fig1}, we use $N=100$ uniformly spaced collocation points on $[0,\pi]$.

\subsection{Figure~\ref{fig1}(a): Loss Landscape in Function Space}
\label{sec:sub1-lossland}

To visualize the loss landscape in the function space, each candidate solution $u(x)$ is represented by its values at the collocation points:
\begin{equation}
    \mathbf{u} =
    \left[
    u(x_1), u(x_2), \dots, u(x_N)
    \right]^\top
    \in \mathbb{R}^N.
\end{equation}

Thus, the infinite-dimensional function space is approximated by an $N$-dimensional Euclidean space with $N=100$.

We generate candidate solutions using a simple sinusoidal ansatz
\begin{equation}
    u_\theta(x) = A \sin(\omega x + \phi) +B,
    \label{eq:ansatz}
\end{equation}
where $\theta = (\omega, \phi, A, B)$.

Three solutions are constructed by directly setting the parameters:
\begin{equation}
    (\omega, \phi, A, B) \in \{(1,0,1,0), (1,0,1,-1), (1,0,1,1)\}.
\end{equation}
Each solution is evaluated on the collocation grid, yielding three vectors
$\mathbf{u}_1, \mathbf{u}_2, \mathbf{u}_3 \in \mathbb{R}^{100}$. This parameter combination ensures that different solutions only differ in terms of the additive constant $B$. In the function space, it forms a straight zero-residual manifold along the dimension of $B$, which is the loss valley.

We define a two-dimensional affine subspace in $\mathbb{R}^{100}$ spanned by the three solution vectors. The center of the plane is chosen as $\mathbf{c} = \mathbf{u}_1$. The two orthonormal directions are constructed via Gram--Schmidt:
\begin{equation}
    \mathbf{d}_1 =
    \frac{\mathbf{u}_2 - \mathbf{u}_1}
    {\| \mathbf{u}_2 - \mathbf{u}_1 \|},
    \qquad
    \mathbf{d}_2 =
    \frac{
    (\mathbf{u}_3 - \mathbf{u}_1)
    - \langle \mathbf{u}_3 - \mathbf{u}_1, \mathbf{d}_1 \rangle \mathbf{d}_1
    }{
    \left\|
    (\mathbf{u}_3 - \mathbf{u}_1)
    - \langle \mathbf{u}_3 - \mathbf{u}_1, \mathbf{d}_1 \rangle \mathbf{d}_1
    \right\|
    }.
\end{equation}

Any point on this plane is parameterized by scalars $(\alpha, \beta)$:
\begin{equation}
    \mathbf{u}(\alpha,\beta)
    = \mathbf{c}
    + \alpha \mathbf{d}_1
    + \beta \mathbf{d}_2.
\end{equation}

Since only the discretized values $\mathbf{u}(\alpha,\beta)$ are available, the second derivative $u''(x)$ is approximated using second-order finite differences on the uniform grid. The resulting landscape is visualized using contour plots with logarithmic scaling to capture variations across multiple orders of magnitude. The red curve in Figure~\ref{fig1}(a) illustrates a low-loss path within this projected landscape.

\subsection{Figure~\ref{fig1}(b): Loss Landscape in Parameter Space}

To visualize Figure~\ref{fig1}(b), we employ a fully connected feedforward neural network with three hidden layers. The network is trained under three analytical solutions defined by Section~\ref{sec:sub1-lossland}, and each instance is optimized until convergence to a very low residual loss. Specifically, the loss function we use is composed of the following elements:
\begin{equation}
	\mathcal{L}(\theta)
	=
	\mathcal{L}_\mathrm{res}
	+
	\mathcal{L}_\mathrm{data},
\end{equation}
where $\mathcal{L}_\mathrm{data}$ is the solution value obtained from Equation~(\ref{eq:ansatz}).

The experimental hyperparameters are shown in Table~\ref{tab:hyp1}.

\begin{table}[h]
\centering
\caption{Hyperparameters used for training the neural networks.}
\label{tab:hyp1}
    \begin{tabular}{l l l}
    \toprule
    \textbf{Category} & \textbf{Hyperparameter} & \textbf{Value} \\
    \midrule
    Network Architecture
    & Number of hidden layers        & $3$ \\
    & Neurons per hidden layer       & $20$ \\
    & Activation function            & \texttt{tanh} \\
    
    \midrule
    Training Setup
    & Optimizer                      & Adam \\
    & Learning rate                  & $1\times10^{-3}$ \\
    & Batch size                     & Full-batch \\
    & Total training steps           & $3000$ \\
    
    \bottomrule
    \end{tabular}
\end{table}

We denote the resulting parameter configurations by $\theta_1, \theta_2, \theta_3 \in \mathbb{R}^P$, corresponding to the three independently trained models. A two-dimensional affine subspace in parameter space is constructed with center $\theta_c = \theta_1$ and directions
\begin{equation}
    \mathbf{v}_1 =
    \frac{\theta_2 - \theta_1}
    {\| \theta_2 - \theta_1 \|},\qquad
    \mathbf{v}_2 =
    \frac{
    (\theta_3 - \theta_1)
    - \langle \theta_3 - \theta_1, \mathbf{v}_1 \rangle \mathbf{v}_1
    }{
    \left\|
    (\theta_3 - \theta_1)
    - \langle \theta_3 - \theta_1, \mathbf{v}_1 \rangle \mathbf{v}_1
    \right\|
    }.
\end{equation}
Any parameter point on this plane is given by
\begin{equation}
    \theta(\alpha,\beta)
    = \theta_c
    + \alpha \mathbf{v}_1
    + \beta \mathbf{v}_2.
\end{equation}

Although the residual loss valley is relatively simple and flat in the function space, the nonlinear mapping from parameters $\theta$ to functions $u_\theta(\cdot)$ can significantly distort this structure. Consequently, the projected loss landscape in parameter space exhibits curvature, folding, and local non-convexity, as illustrated in Figure~\ref{fig1}(b).

\begin{figure}[H]
	\centering
        \begin{tabular}{@{}c@{}c@{}c@{}c@{}}
            \includegraphics[width=0.25\linewidth, trim=10 10 15 15, clip]{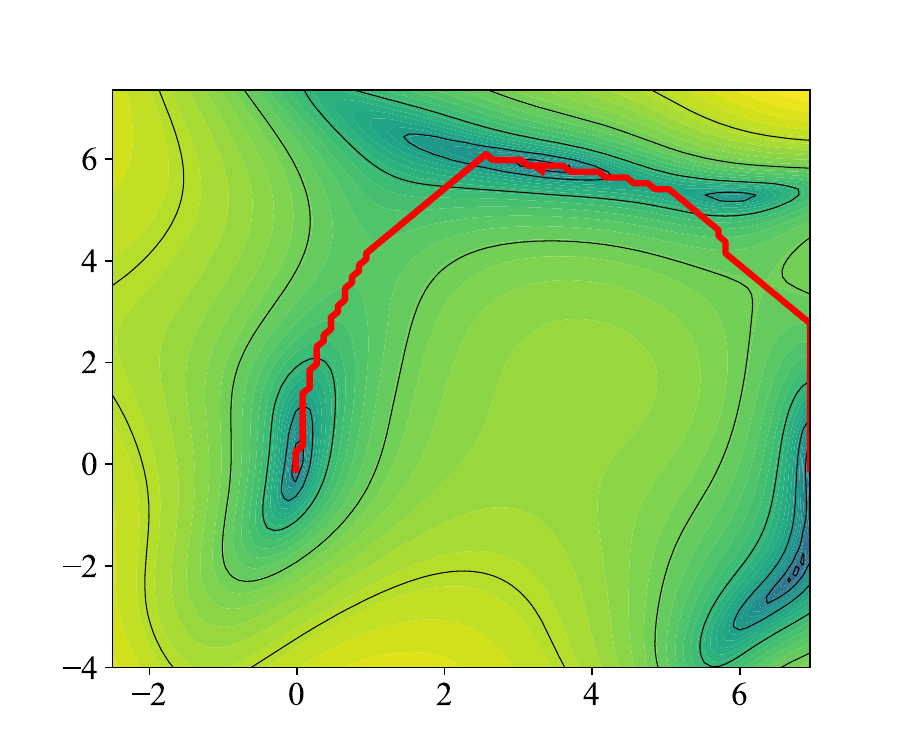} & 
            \includegraphics[width=0.25\linewidth, trim=10 10 15 15, clip]{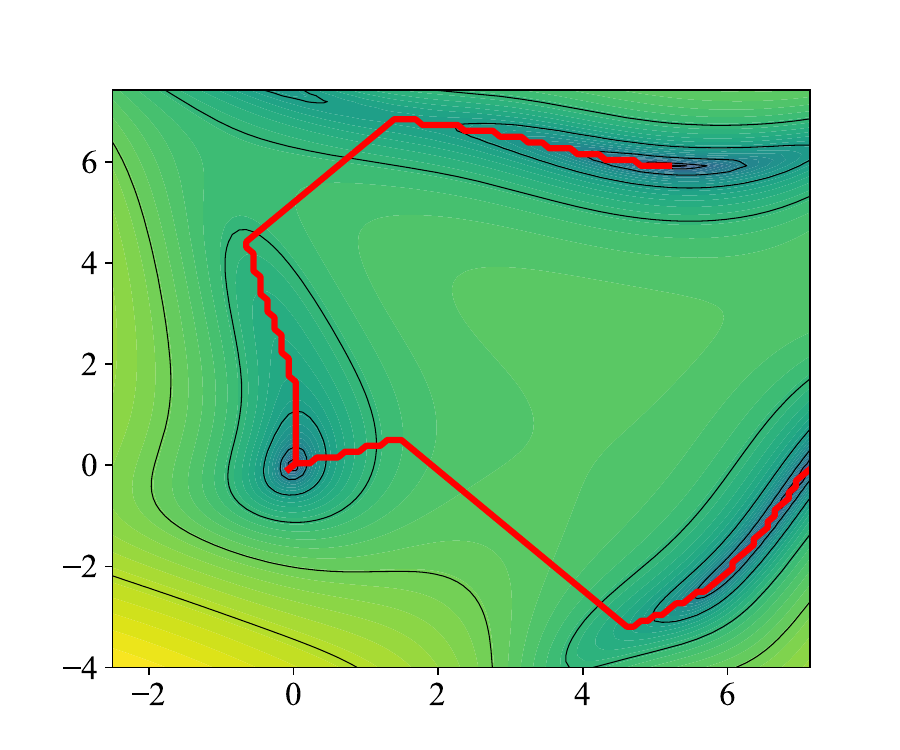} &
            \includegraphics[width=0.25\linewidth, trim=10 10 15 15, clip]{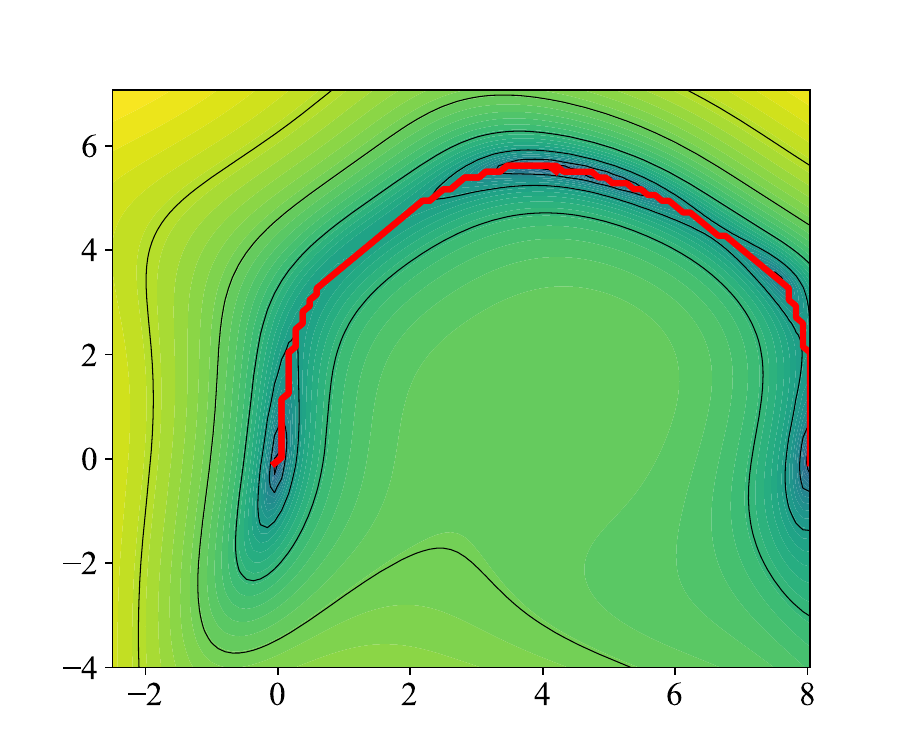} &
            \includegraphics[width=0.25\linewidth, trim=10 10 15 15, clip]{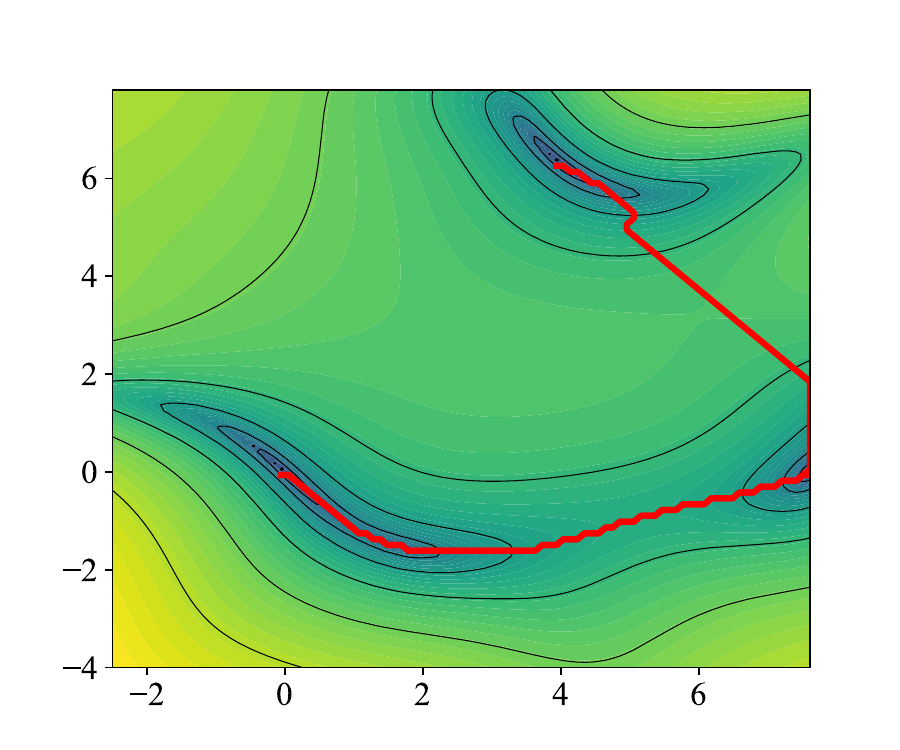}
        \end{tabular}
	\caption{Visualization of the loss valleys in the parameter space under 4 group of different random seeds.}
	\label{fig5}
\end{figure}

We also conducted experiments with more random initializations, as shown in Figure~\ref{fig5}. Note that due to the limitation of two-dimensional slicing, Figure~\ref{fig5} provides a cross-sectional view of the high-dimensional loss landscape, only revealing the rough trajectory of the loss valley rather than indicating that all points along the valley bottom attain zero loss.

\subsection{Table~\ref{tab:results0}: Data Statistics}
\label{sub:data_stat}
We record and analyze the Hessian from the above two experiments in both function and parameter space in Table~\ref{tab:results0}. Let $f:\mathbb{R}^d \to \mathbb{R}$ be twice continuously differentiable. The condition number of the Hessian matrix $H(x) = \nabla^2 f(\theta)$ (at a point $\theta$) is defined as
\begin{equation}
    \kappa(H(\theta))
    \;\triangleq\;
    \|H(\theta)\|_2 \, \|H(\theta)^{-1}\|_2
    \;=\;
    \frac{\lambda_{\max}(H(\theta))}{\lambda_{\min}(H(\theta))},
\end{equation}
where $\lambda_{\max}(\cdot)$ and $\lambda_{\min}(\cdot)$ denote the largest and smallest eigenvalues, respectively. A larger condition number indicates a more ill-conditioned loss landscape, characterized by elongated, valley-like level sets. In contrast, when the condition number is small, the contour lines become nearly spherical, suggesting that the curvature is more uniform across different directions.

We also compute the subspace similarity of the Hessian to quantify the consistency of low-curvature directions across different constants. Let $V_i \in \mathbb{R}^{d \times k}$ denote the matrix whose columns are the $k$ eigenvectors corresponding to the smallest-magnitude eigenvalues of the Hessian at solution $i$, forming a near-zero (low-curvature) eigenspace. For the function space, we set $k = 1$, and for the parameter space, we set $k = 100$. The subspace similarity between solutions $i$ and $j$ is defined as
\begin{equation}
    \mathrm{Sim}(V_i, V_j)
    \;\triangleq\;
    \frac{\| V_i^\top V_j \|_F}{\sqrt{k}},
\end{equation}
where $\|\cdot\|_F$ denotes the Frobenius norm. This quantity lies in $[0,1]$, with larger values indicating stronger alignment between the corresponding low-curvature subspaces. By combining the condition number and the subspace similarity, we characterize not only the degree of degeneracy of the loss landscape but also the geometric stability of its flat directions across independently trained solutions.

In the function space, the Hessian exhibits an infinite condition number, and the zero-curvature subspace is strictly aligned, indicating that the loss valley corresponds to a standard straight zero-residual manifold. In the parameter space, although exact zero residuals cannot be achieved due to training error, the condition number remains extremely large. Moreover, the similarity of the low-curvature subspaces in the parameter space stays only around 50\% even when $k$ approaches one-fourth of the total number of network parameters, suggesting that the loss valley is mapped into the parameter space in a highly distorted and non-convex manner.

\section{Demonstration of Two-Phase Optimization Dynamics}
\label{exp2}

In this section, we present a minimal yet reproducible numerical experiment that illustrates the Two-Phase behavior of standard PINNs discussed in Section~\ref{sec:opt}. In particular, we demonstrate how large-amplitude Dirichlet boundary conditions can induce optimization stagnation or oscillations after the PDE residual has already been minimized.

\subsection{Experiment Setup}

We consider the Poisson equation on the unit square domain $\Omega = (0,1)^2, \Gamma = \partial \Omega$, given by
\begin{equation}
    \Delta u(x,y) = f(x,y), \quad (x,y)\in\Omega,
\end{equation}
with Dirichlet boundary conditions
\begin{equation}
    u(x,y) = g(x,y), \quad (x,y)\in\Gamma.
\end{equation}

To enable quantitative evaluation, the exact solution is constructed as
\begin{equation}
    u^*(x,y) = g(x,y) + \sin(\pi x)\sin(\pi y),
\end{equation}
where the boundary function $g$ is defined as
\begin{equation}
\label{eq:large_bc}
    g(x,y)
    = A\sin(2\pi x)
    + B\cos(3\pi y)
    + Cx + Dy,
\end{equation}
with coefficients $A = 20, B = 15, C = 8, D = 6$. This choice intentionally produces a boundary condition with a large amplitude range, significantly increasing the discrepancy between the randomly initialized network output and the true boundary values.

The right-hand side $f(x,y)$ is computed analytically from $u^*$ as
\begin{equation}
\label{eq:rhs_f}
\begin{aligned}
    f(x,y)
    &= \Delta g(x,y)
    + \Delta \bigl(\sin(\pi x)\sin(\pi y)\bigr) \\
    &= - (2\pi)^2 A \sin(2\pi x)
       - (3\pi)^2 B \cos(3\pi y)
       - 2\pi^2 \sin(\pi x)\sin(\pi y).
\end{aligned}
\end{equation}

We employ a fully connected feedforward neural network to fit this problem. The hyperparameters are shown in Table~\ref{tab:hyp2}.

\begin{table}[h]
\centering
\caption{Hyperparameters used for training the neural networks.}
\label{tab:hyp2}
    \begin{tabular}{l l l}
    \toprule
    \textbf{Category} & \textbf{Hyperparameter} & \textbf{Value} \\
    \midrule
    Network Architecture
    & Number of hidden layers        & $5$ \\
    & Neurons per hidden layer       & $80$ \\
    & Activation function            & \texttt{tanh} \\
    
    \midrule
    Training Setup
    & Optimizer                      & Adam \\
    & Learning rate                  & $1\times10^{-3}$ \\
    & Batch size                     & Full-batch \\
    & Total training steps           & $6000$ \\

    \midrule
    Sample Mode
    & Inner collocation points       & $8000$ \\
    & Boundary collocation points    & $600\times 4$ \\

    \bottomrule
    \end{tabular}
\end{table}

\subsection{Observed Two-Phase Behavior}

To characterize this failure mode more quantitatively, we recommend monitoring the following diagnostics: \emph{(a)} The evolution of $\mathcal{L}_{\text{res}}$, \emph{(b)} $\mathcal{L}_{\text{bc}}$, \emph{(c)} the cosine similarity between task gradients $\cos(\phi)$, and \emph{(d)} gradient norm ratio $\rho_g = \frac{\left\lVert \nabla_{\theta} \mathcal{L}_{\mathrm{res}} \right\rVert}{\left\lVert \nabla_{\theta} \mathcal{L}_{\mathrm{bc}} \right\rVert}$. Under the above setting, standard PINNs frequently exhibit two-phase behavior, as shown in Figure~\ref{fig:gc2}. We have also visualized the optimization trajectory during the training process and superimposed it with the loss landscape, enabling a more intuitive observation of the optimization dynamics, as shown in Figure~\ref{fig:gc3}.

\begin{figure}[H]
	\centering
        \begin{tabular}{@{}c@{}c@{}c@{}c@{}}
            \includegraphics[width=0.25\linewidth, trim=0 0 10 15, clip]{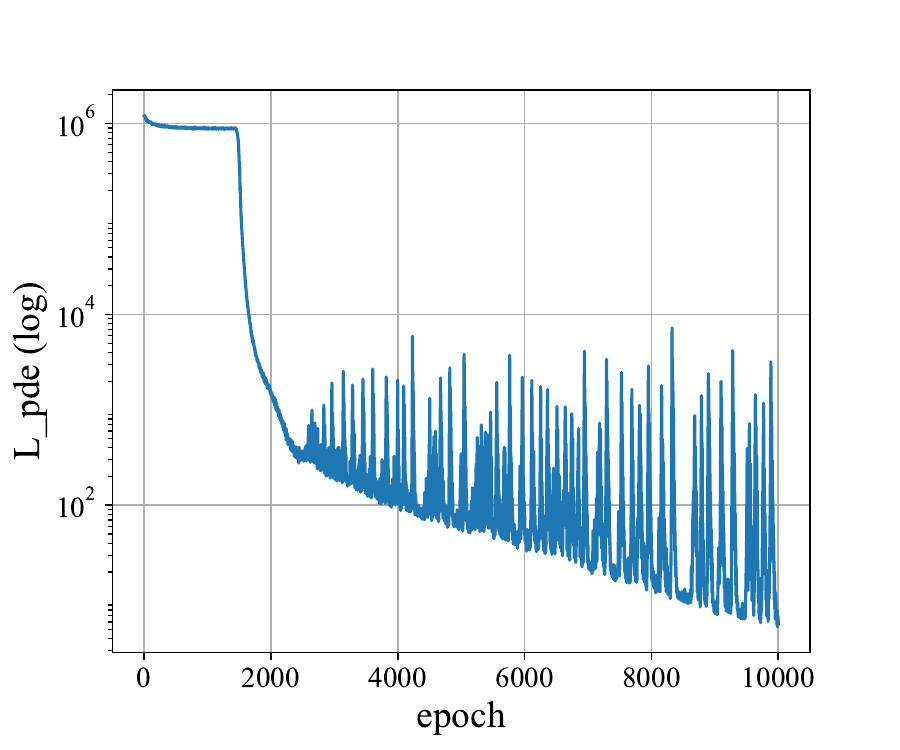} & 
            \includegraphics[width=0.25\linewidth, trim=0 0 10 15, clip]{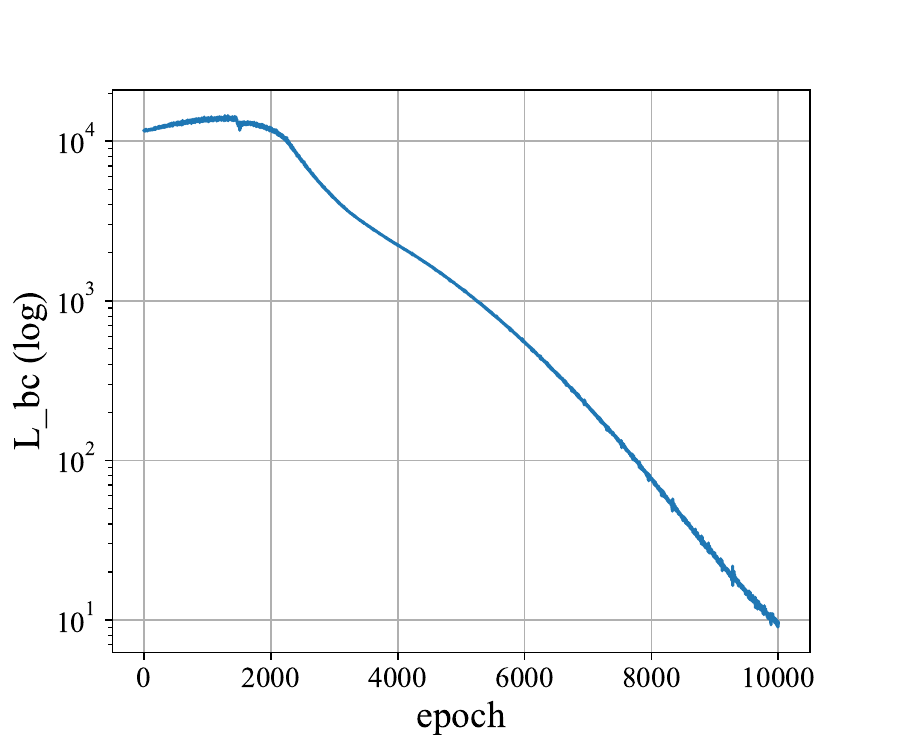} &
            \includegraphics[width=0.25\linewidth, trim=0 0 10 15, clip]{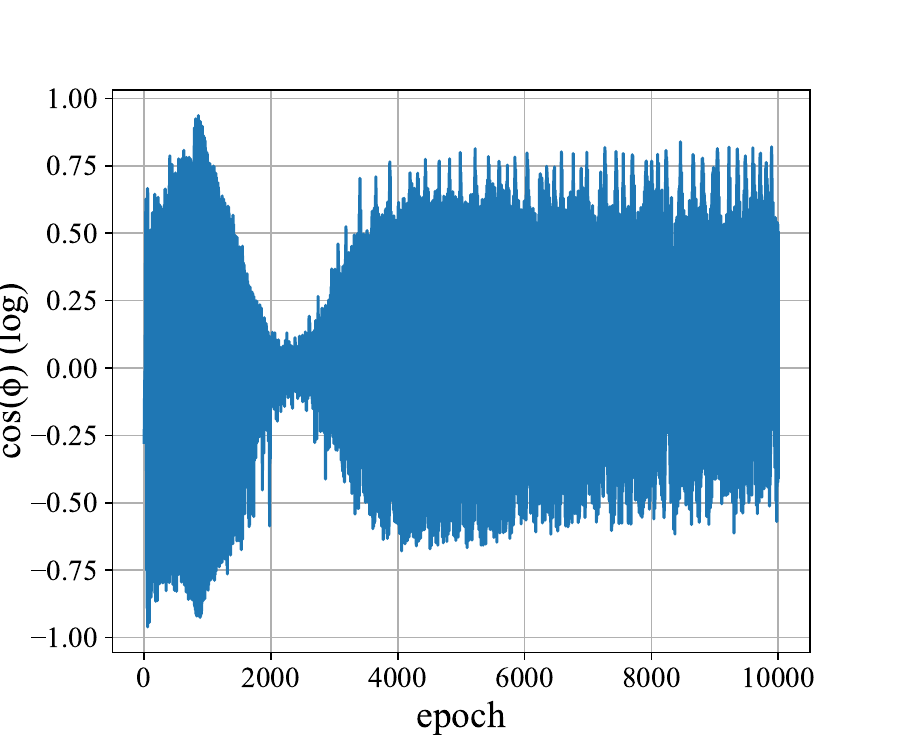} &
            \includegraphics[width=0.25\linewidth, trim=0 0 10 15, clip]{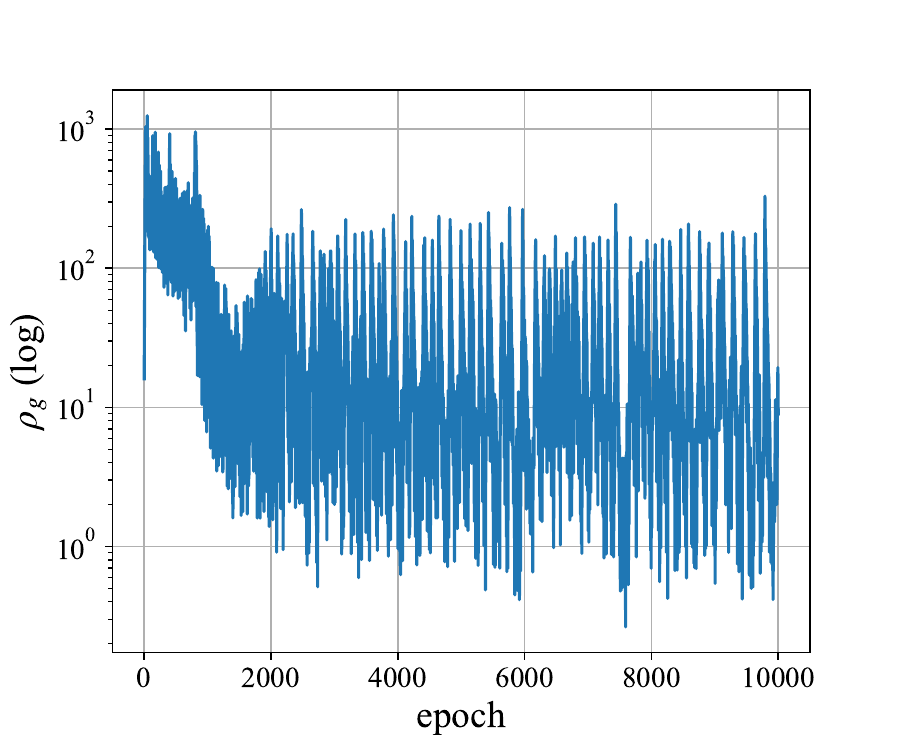} \\

            \footnotesize (a) PDE Loss & \footnotesize (b) BC Loss & \footnotesize (c) Cosine Similarity & \footnotesize (d) Norm Ratio \\
        \end{tabular}
	\caption{Evolution of loss components and gradient statistics during PINN training, including the PDE residual loss, boundary condition loss, cosine similarity between their gradients, and the parameter optimize trajectory.}
	\label{fig:gc2}
\end{figure}

\begin{figure}[h]
	\centering
        \begin{tabular}{@{}c@{}c@{}}
            \includegraphics[width=0.35\linewidth, trim=0 0 0 15, clip]{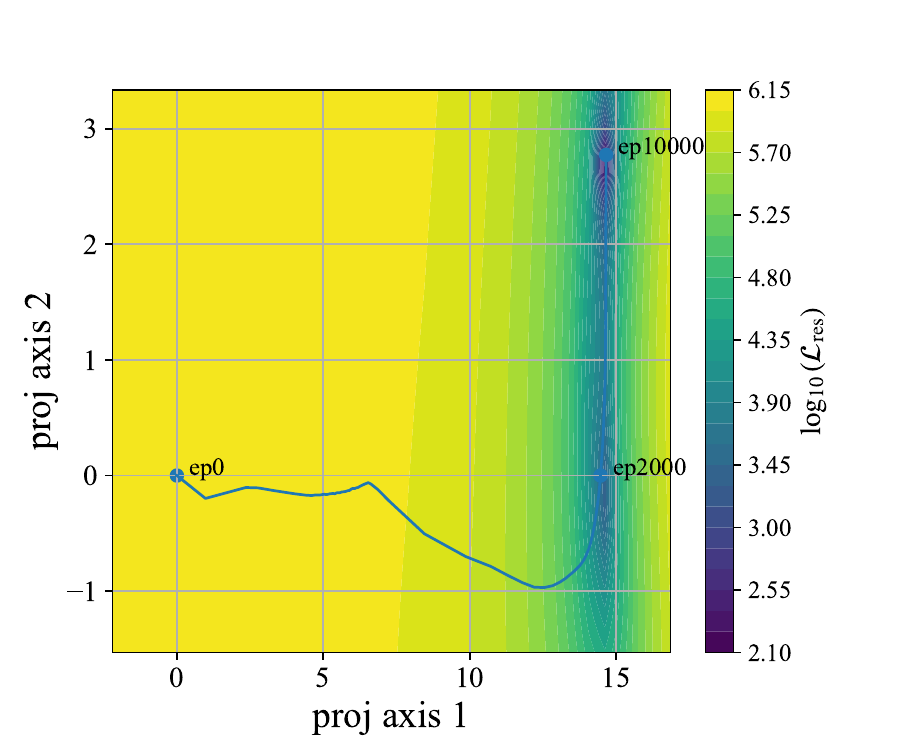} & 
            \includegraphics[width=0.35\linewidth, trim=0 0 0 15, clip]{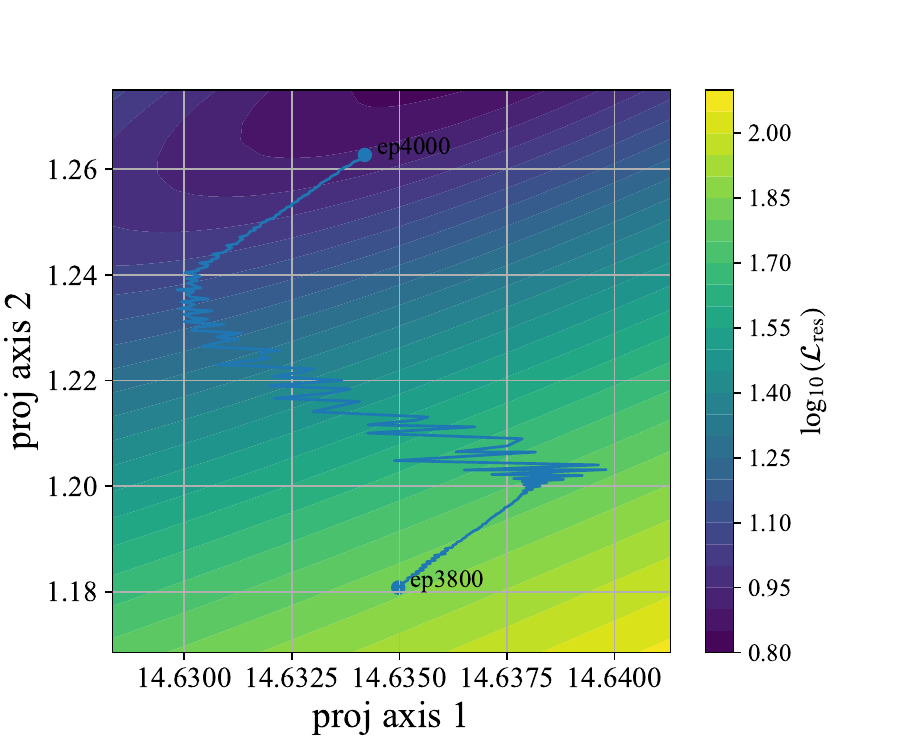} \\

            \footnotesize (a) Global Trajectory (Epoch 0-10000) & \footnotesize (b) Local Trajectory (Epoch 3800-4000) \\
        \end{tabular}
	\caption{Visualization of parameter trajectories and the corresponding PDE residual loss landscape, from both (a) global and (b) local perspective. While the global loss curve appears nearly linear in loss valley, the local optimization path displays substantial oscillatory behavior, indicating persistent gradient conflicts within the residual-induced valley.}
	\label{fig:gc3}
\end{figure}

\textbf{Phase I: Rapid PDE Residual Minimization (0--2000 epochs).} \emph{(a) PDE residual loss}: Decreases rapidly by several orders of magnitude and quickly reaches a near-minimal level, indicating that the optimization is strongly attracted to the loss valley induced by the zero-residual PDE manifold. \emph{(b) Boundary condition loss}: Exhibits transient growth or noticeable fluctuations, suggesting that boundary constraints are not yet effectively enforced during this phase. \emph{(c) Gradient angle cosine similarity}: Displays large fluctuations whose amplitude gradually diminishes toward zero, reflecting a weakening interaction between the two gradient components. \emph{(d) Gradient norm ratio}: Significantly greater than one and decreases steadily, confirming the dominant influence of the PDE residual gradient in early training.

\textbf{Phase II: Boundary-Dominated Valley Traversal (2000--10000 epochs).} \emph{(a) PDE residual loss}: Ceases to decrease monotonically and instead exhibits pronounced oscillations, characteristic of optimization dynamics within a high-curvature region near the bottom of the residual-induced valley. \emph{(b) Boundary condition loss}: Continues to decrease smoothly and steadily, indicating that boundary constraints increasingly govern the optimization trajectory. \emph{(c) Gradient angle cosine similarity}: Shows a brief interval of near-zero values immediately after the PDE residual reaches its minimum, implying near-orthogonality between PDE and boundary gradients. Subsequently, boundary gradients begin to steer the optimization; however, the rugged geometry of the residual valley reintroduces significant oscillations. \emph{(d) Gradient norm ratio}: The mean value is significantly lower than in \emph{Phase~I}, while oscillation amplitude increases substantially, signaling that boundary conditions and PDE residual frequently compete for control of the gradient direction.

\section{Main Experimental Setup}
\label{sec:setup}

\subsection{Benchmarks}
\paragraph{Heat.} We consider a 2D steady-state heat conduction benchmark on a square domain
$\Omega = [0,1]\times[0,1]$. The goal is to learn/solve a scalar temperature field $u(x,y)$ that satisfies a homogeneous elliptic PDE in the interior and mixed (Dirichlet/Neumann) boundary conditions on $\Gamma$. This benchmark evaluates the ability of a model to handle mixed boundary conditions.

Inside the domain, $u$ satisfies the Laplace equation
\begin{equation}
    \Delta u(x,y) \;=\; \frac{\partial^2 u}{\partial x^2} + \frac{\partial^2 u}{\partial y^2} \;=\; 0,
    \qquad (x,y)\in\Omega.
\end{equation}
We impose Dirichlet conditions on the left and bottom boundaries,
\begin{equation}
    u(0,y)=T_0,\quad u(x,0)=T_0,\qquad T_0=100,
\end{equation}
and Neumann conditions on the right and top boundaries (outward normals $(1,0)$ and $(0,1)$),
\begin{equation}
    \frac{\partial u}{\partial x}(L,y)=-q,\quad \frac{\partial u}{\partial y}(x,L)=-q,
    \qquad q=15.
\end{equation}
These correspond to a constant prescribed boundary temperature on $\{x=0\}\cup\{y=0\}$ and a constant outward heat flux on $\{x=L\}\cup\{y=L\}$.

The analytical solution is computed via a truncated Fourier-series representation that satisfies the above mixed boundary conditions:
\begin{equation}
    u(x,y) = T_0 + \sum_{m=1}^{M} A_{n}\Big(\sinh(\lambda x)\sin(\lambda y)+\sinh(\lambda y)\sin(\lambda x)\Big),
    \quad n=2m-1,\ \lambda=\frac{n\pi}{2L},
\end{equation}
with coefficients
\begin{equation}
    A_{n}=\frac{-8 q L}{k\,n^2\pi^2 \cosh\!\left(\frac{n\pi}{2}\right)},\qquad (k=1),
\end{equation}
and we use $M=20$ terms in practice to obtain an accurate numerical approximation of the analytical series solution.

\paragraph{Poisson.} We consider a 2D Poisson benchmark on the unit square $\Omega = [0,1]\times[0,1]$. The task is to solve for a scalar field $u(x,y)$ governed by a Poisson equation with a prescribed source term and Dirichlet boundary conditions on the entire boundary $\Gamma$. This benchmark evaluates the ability of a model to handle complex and strong nonlinear boundary conditions.

Inside the domain, $u$ satisfies
\begin{equation}
    \Delta u(x,y) = f(x,y), \qquad (x,y)\in\Omega,
\end{equation}
where the source term is defined as
\begin{equation}
    f(x,y)
    = - (2\pi)^2 A \sin(2\pi x)
      - (3\pi)^2 B \cos(3\pi y)
      - 2\pi^2 \sin(\pi x)\sin(\pi y),
\end{equation}
with constants $A=30$, $B=25$.

Dirichlet boundary conditions are imposed on the whole boundary:
\begin{equation}
    u(x,y) = g(x,y), \qquad (x,y)\in\Gamma,
\end{equation}
where
\begin{equation}
    g(x,y)
    = A \sin(2\pi x)
    + B \cos(3\pi y)
    + C x + D y + 10,
\end{equation}
and $C=18$, $D=16$.

This Poisson problem is constructed via the method of manufactured solutions. The exact solution is given by
\begin{equation}
    u^*(x,y)
    = g(x,y) + \sin(\pi x)\sin(\pi y),
\end{equation}
which satisfies the prescribed Dirichlet boundary conditions and yields the source term $f(x,y)=\Delta u^*(x,y)$. This analytical solution is used as the ground truth for quantitative error evaluation.

% Navier–Stokes equations
\paragraph{Navier--Stokes (NS).} We consider a 2D steady incompressible Navier--Stokes benchmark on the unit square $\Omega=[0,1]\times[0,1]$. The objective is to solve for the velocity field $\boldsymbol{u}(x,y)=(u(x,y),v(x,y))$ and the pressure field $p(x,y)$ governed by the steady Navier--Stokes equations with a prescribed forcing term and Dirichlet boundary conditions. This benchmark evaluates the ability of a model to handle nonlinear convection, pressure--velocity coupling, and incompressibility constraints.

Inside the domain, the steady incompressible Navier--Stokes equations are given by
\begin{equation}
	\begin{aligned}
        (\boldsymbol{u}\cdot\nabla)\boldsymbol{u}
        + \nabla p
        - \frac{1}{\mathrm{Re}} \Delta \boldsymbol{u}
        &= \boldsymbol{f}(x,y),
        & \qquad (x,y)\in\Omega, \\
        \nabla\cdot\boldsymbol{u} &= 0,
        \qquad & (x,y)\in\Omega,
    \end{aligned}
\end{equation}
where $\mathrm{Re}=500$ denotes the Reynolds number and $\boldsymbol{f}=(f_u,f_v)$ is a known forcing term constructed so that the system admits a closed-form analytical solution.

Dirichlet boundary conditions are imposed on the entire boundary:
\begin{equation}
    \boldsymbol{u}(x,y) = \boldsymbol{u}^*(x,y),
    \qquad (x,y)\in\Gamma,
\end{equation}
where $\boldsymbol{u}^*$ is the prescribed exact velocity field. No explicit boundary condition is imposed on the pressure.

The benchmark is defined using a manufactured solution. Let
\begin{equation}
	\begin{aligned}
        u^*(x,y) &= \pi \sin(\pi x)\cos(\pi y) + U_x, \\
        v^*(x,y) &= -\pi \cos(\pi x)\sin(\pi y) + U_y, \\
        p^*(x,y) &= \sin(2\pi x)\sin(2\pi y),
    \end{aligned}
\end{equation}
where $(U_x=1,U_y=1)$ is a constant velocity offset. The forcing term $\boldsymbol{f}$ is analytically derived by substituting $(\boldsymbol{u}^*,p^*)$ into the Navier--Stokes equations. This exact solution satisfies the incompressibility condition and the prescribed Dirichlet boundary conditions, and is used as the ground truth for error evaluation.

\paragraph{Helmholtz.} We consider a 2D Helmholtz-type benchmark defined on a bounded, irregular domain $\Omega\subset\mathbb{R}^2$. The task is to solve for a scalar field $u(x,y)$ governed by a variable-coefficient second-order elliptic equation with a reaction term. This benchmark tests models' ability to handle spatially varying anisotropic operators and nontrivial domain geometries.

Inside the domain, $u$ satisfies a generalized Helmholtz equation of the form
\begin{equation}
    -\nabla\cdot\!\big(A(x,y)\nabla u(x,y)\big) + q(x,y)\,u(x,y) = f(x,y),
    \qquad (x,y)\in\Omega,
\end{equation}
where the diffusion tensor $A(x,y)\in\mathbb{R}^{2\times 2}$ and the reaction coefficient $q(x,y)$ are given by
\begin{equation}
	\begin{aligned}
        A(x,y) &=
        \begin{pmatrix}
        1+0.3x & 0.15\sin(\pi x)\sin(\pi y)\\
        0.15\sin(\pi x)\sin(\pi y) & 1+0.3y
        \end{pmatrix},\\
        q(x,y) &= 2 + \cos(\pi x)\cos(\pi y).
    \end{aligned}
\end{equation}
Dirichlet boundary conditions are imposed on the entire boundary:
\begin{equation}
    u(x,y) = u^*(x,y), \qquad (x,y)\in\Gamma,
\end{equation}
where $u^*$ is a prescribed exact solution.

The benchmark is constructed using a manufactured solution,
\begin{equation}
    u^*(x,y)
    = U_0 + \sin(\pi x)\cos(2\pi y) + 0.2\,e^{x+y},
    \qquad U_0=100.
\end{equation}
The source term $f(x,y)$ is obtained analytically by substituting $u^*$ into the governing equation. This exact solution satisfies the Dirichlet boundary conditions and serves as the ground truth for quantitative error evaluation.

\subsection{Model and Training Configuration}

Among all the benchmarks and baselines, the configurations of MLP, PirateNets, and PINNsFormer we used are shown in Table~\ref{tab:hyp3}, Table~\ref{tab:hyp4}, and Table~\ref{tab:hyp5}, respectively.

\begin{table}[H]
    \centering
    \caption{Configuration of MLP across all benchmarks.}
    \label{tab:hyp3}
    \begin{tabular}{ll}
        \toprule
        \textbf{Configuration} & \textbf{Value} \\
        \midrule
        Number of hidden layers        & $4$ \\
        Neurons per hidden layer       & $64$ \\
        Activation function            & \texttt{tanh} \\
        \bottomrule
    \end{tabular}
\end{table}

\begin{table}[H]
    \centering
    \caption{Configuration of PirateNets across all benchmarks.}
    \label{tab:hyp4}
    \begin{tabular}{ll}
        \toprule
        \textbf{Configuration} & \textbf{Value} \\
        \midrule
        Number of hidden layers        & $3$ \\
        Neurons per hidden layer       & $32$ \\
        Activation function            & \texttt{tanh} \\
        \bottomrule
    \end{tabular}
\end{table}

\begin{table}[H]
    \centering
    \caption{Configuration of PINNsFormer across all benchmarks.}
    \label{tab:hyp5}
    \begin{tabular}{ll}
        \toprule
        \textbf{Configuration} & \textbf{Value} \\
        \midrule
        Number of attention layers in encoder & $2$ \\
        Number of attention layers in decoder & $2$ \\
        Head number in attention layer        & $1$ \\
        Neurons per hidden layer              & $32$ \\
        Activation function                   & \texttt{WaveAct} \\
        \bottomrule
    \end{tabular}
\end{table}

For PINNsFormer in all benchmarks and baselines, we detach the gradients at the encoder input to prevent gradient mixing among neighboring spatial points when automatic differentiation computes the vector–Jacobian product.

Among all the benchmarks and baselines, the hyperparameters we used to train MLP, PirateNets, and PINNsFormer are shown in Table~\ref{tab:mlp}, Table~\ref{tab:piratenets}, and Table~\ref{tab:pinnsformer}, respectively.

\begin{table}[H]
    \centering
    \caption{Hyperparameters of MLP across all benchmarks. Where $\eta$ is the learning rate, $w_\mathrm{res}$ is the PDE residual loss weight, $w_\mathrm{bc}$ is the boundary condition loss weight, $T_\mathrm{min}$ is the minimum number of iterations, $T_\mathrm{max}$ is the maximum number of iterations, and $L_2^\mathrm{stop}$ is the precision threshold for terminating training.}
    \begin{tabular}{l|cccccc}
        \toprule
        \textbf{Benchmark} & $\eta$ & $w_\mathrm{res}$ & $w_\mathrm{bc}$ & $T_\mathrm{min}$ & $T_\mathrm{max}$ & $L_2^\mathrm{stop}$ \\
        \midrule
        Heat      & 1.0e-3 & 1   & 5   & 6000 & 20000 & 2.0e-3 \\
        Poisson   & 1.0e-3 & 1   & 100 & 6000 & 20000 & 1.0e-2 \\
        NS        & 1.0e-3 & 1   & 100 & 6000 & 20000 & 5.0e-3 \\
        Helmholtz & 1.0e-3 & 1   & 10  & 4000 & 20000 & 1.0e-3 \\
        \bottomrule
    \end{tabular}
    \label{tab:mlp}
\end{table}

\begin{table}[H]
    \centering
    \caption{Hyperparameters of PirateNets across all benchmarks. Where $\eta$ is the learning rate, $w_\mathrm{res}$ is the PDE residual loss weight, $w_\mathrm{bc}$ is the boundary condition loss weight, $T_\mathrm{min}$ is the minimum number of iterations, $T_\mathrm{max}$ is the maximum number of iterations, and $L_2^\mathrm{stop}$ is the precision threshold for terminating training.}
    \begin{tabular}{l|cccccc}
        \toprule
        \textbf{Benchmark} & $\eta$ & $w_\mathrm{res}$ & $w_\mathrm{bc}$ & $T_\mathrm{min}$ & $T_\mathrm{max}$ & $L_2^\mathrm{stop}$ \\
        \midrule
        Heat      & 1.0e-3 & 1   & 1   & 6000 & 20000 & 1.0e-2 \\
        Poisson   & 1.0e-3 & 1   & 100 & 6000 & 20000 & 5.0e-2 \\
        NS        & 3.0e-3 & 1   & 100 & 6000 & 20000 & 1.0e-2 \\
        Helmholtz & 1.0e-3 & 1   & 100 & 4000 & 20000 & 5.0e-3 \\
        \bottomrule
    \end{tabular}
    \label{tab:piratenets}
\end{table}

\begin{table}[H]
    \centering
    \caption{Hyperparameters of PINNsFormer across all benchmarks. Where $\eta$ is the learning rate, $w_\mathrm{res}$ is the PDE residual loss weight, $w_\mathrm{bc}$ is the boundary condition loss weight, $T_\mathrm{min}$ is the minimum number of iterations, $T_\mathrm{max}$ is the maximum number of iterations, and $L_2^\mathrm{stop}$ is the precision threshold for terminating training.}
    \begin{tabular}{l|cccccc}
        \toprule
        \textbf{Benchmark} & $\eta$ & $w_\mathrm{res}$ & $w_\mathrm{bc}$ & $T_\mathrm{min}$ & $T_\mathrm{max}$ & $L_2^\mathrm{stop}$ \\
        \midrule
        Heat      & 1.0e-3 & 1   & 1   & 2000 & 10000 & 1.0e-2 \\
        Poisson   & 1.0e-3 & 1   & 100 & 2000 & 10000 & 5.0e-2 \\
        NS        & 1.0e-3 & 1   & 100 & 2000 & 10000 & 1.0e-2 \\
        Helmholtz & 1.0e-3 & 1   & 100 & 1500 & 10000 & 1.0e-3 \\
        \bottomrule
    \end{tabular}
    \label{tab:pinnsformer}
\end{table}

For CAML, the additional hyperparameters we adopted in different benchmarks are as in Table~\ref{tab:caml}. For other baselines that have additional hyperparameters, we apply the recommended configurations as stated in their original papers.

\begin{table}[H]
    \centering
    \caption{Additional hyperparameters of CAML across all benchmarks. Where $t_d$ and $t_r$ are the delay step number and the ramp length of the delay-residual gating function, $K_\mathrm{init}$ and $K_\mathrm{few}$ are the initial Newton steps and within-loop Newton steps of $c$ in non-linear PDE benchmark, and $t_c$ is the maximum iteration to stop update $c$ in non-linear PDE benchmark.}
    \begin{tabular}{l|ccccc}
        \toprule
        \textbf{Benchmark} & $t_d$ & $t_r$ & $K_\mathrm{init}$ & $K_\mathrm{few}$ & $t_c$ \\
        \midrule
        Heat      & 25  & 50    & $-$ & $-$ & $-$  \\
        Poisson   & 200 & 800   & $-$ & $-$ & $-$  \\
        NS        & 25  & 50    & 10  & 2   & 1000 \\
        Helmholtz & 25  & 50    & $-$ & $-$ & $-$  \\
        \bottomrule
    \end{tabular}
    \label{tab:caml}
\end{table}

\section{Computational Overhead}
\label{sec:overhead}

In this section, we present a quantitative analysis of computational costs for both linear and nonlinear CAML PDEs. We use the Heat benchmark as the linear case, and use NS as the nonlinear benchmark. All configurations are consistent with the main experiment. We first measure the time required to compute the closed-form solution for $c$ in the linear case, as well as the proportion of that time spent in the single iteration. The results are shown in Table~\ref{tab:overhead1}.

\begin{table}[H]
    \centering
    \caption{The computational cost of CAML in linear PDEs. FP: the forward propagation process; AD: automatic differentiation of PDE residual and boundary conditions; AC: calculation of the additive constant $c$; BP: the backward propagation process; Percentage: The proportion of time spent on calculating $c$ in the total duration.}
    \begin{tabular}{ll|cccc|c}
        \toprule
        && FP & AD & \textbf{AC} & BP & Percentage \\
        \midrule
        \multirow{2}{*}{MLP}
        & Single Iteration ($ms$) & 0.1619 & 1.6322  & \textbf{0.1439} & 2.6541  & 3.13\%  \\
        & Full Process ($s$)      & 1.1971 & 10.7513 & \textbf{0.9939} & 17.2778 & 3.29\%  \\
        
        \midrule
        \multirow{2}{*}{PirateNets}
        & Single Iteration ($ms$) & 0.7503 & 7.0681  & \textbf{0.1763} & 15.4657  & 0.75\%  \\
        & Full Process ($s$)      & 6.1116 & 57.5850 & \textbf{1.1378} & 116.7246 & 0.63\%  \\

        \midrule
        \multirow{2}{*}{PINNsFormer}
        & Single Iteration ($ms$) & 2.3134 & 21.4542 & \textbf{0.1487} & 47.0843 & 0.21\%  \\
        & Full Process ($s$)      & 6.8850 & 66.9518 & \textbf{0.4497} & 140.9999 & 0.21\%  \\
        \bottomrule
    \end{tabular}
    \label{tab:overhead1}
\end{table}

It can be observed that in the linear setting, the additional computational cost introduced by increasing $c$ is relatively constant. Even for the smallest MLP, it accounts for only approximately 3\% of the total cost and can therefore be neglected.

In the nonlinear case, we measured the time required to compute $c$ under $K_\mathrm{few}=1$, $2$, and $5$ Newton iterations. The Newton updates are performed only during the first $1/6$ of the training process. Once the error stabilizes, the value of $c$ is fixed. The results are reported in Table~\ref{tab:overhead2}.

\begin{table}[H]
    \centering
    \caption{The computational cost of CAML in non-linear PDEs. FP: the forward propagation process; AD: automatic differentiation of PDE residual and boundary conditions; AC: calculation of the additive constant $c$; BP: the backward propagation process; Percentage: The proportion of time spent on calculating $c$ in the total duration.}
    \begin{tabular}{lll|cccc|c}
        \toprule
        &&& FP & AD & \textbf{AC} & BP & Percentage \\
        \midrule
        \multirow{6.5}{*}{$K_{\mathrm{few}}=1$}
        &\multirow{2}{*}{MLP}
        &  Single Iteration ($ms$) & 0.1799 & 1.4494  & \textbf{1.0979} & 2.6280  & 20.50\%  \\
        && Full Process ($s$)      & 1.2951 & 11.5819 & \textbf{3.9947} & 18.6597 & 11.24\%  \\
        
        \cmidrule(lr){2-8}
        &\multirow{2}{*}{PirateNets}
        &  Single Iteration ($ms$) & 0.7328 & 8.0003  & \textbf{1.0322} & 14.8213  & 4.20\%  \\
        && Full Process ($s$)      & 6.0822 & 55.2902 & \textbf{4.0421} & 112.3832 & 2.27\%  \\

        \cmidrule(lr){2-8}
        &\multirow{2}{*}{PINNsFormer}
        &  Single Iteration ($ms$) & 2.5648 & 23.8952 & \textbf{1.8702} & 55.1467  & 2.24\%  \\
        && Full Process ($s$)      & 7.2837 & 70.0029 & \textbf{0.6324} & 151.3379 & 0.28\%  \\

        \midrule
        \multirow{6.5}{*}{$K_{\mathrm{few}}=2$}
        &\multirow{2}{*}{MLP}
        &  Single Iteration ($ms$) & 0.1895 & 1.4653  & \textbf{2.0220} & 2.7328  & 31.55\%  \\
        && Full Process ($s$)      & 1.4321 & 12.8582 & \textbf{8.3121} & 21.5747 & 18.81\%  \\
        
        \cmidrule(lr){2-8}
        &\multirow{2}{*}{PirateNets}
        &  Single Iteration ($ms$) & 0.7807 & 7.2948  & \textbf{2.3412} & 15.0520  & 9.19\%  \\
        && Full Process ($s$)      & 6.1321 & 59.8055 & \textbf{9.3241} & 116.2348 & 4.87\%  \\

        \cmidrule(lr){2-8}
        &\multirow{2}{*}{PINNsFormer}
        &  Single Iteration ($ms$) & 2.3818 & 21.5656 & \textbf{3.0841} & 47.1944  & 4.16\%  \\
        && Full Process ($s$)      & 7.8837 & 71.2378 & \textbf{1.0790} & 155.3432 & 0.46\%  \\

        \midrule
        \multirow{6.5}{*}{$K_{\mathrm{few}}=5$}
        &\multirow{2}{*}{MLP}
        &  Single Iteration ($ms$) & 0.1614 & 1.5289  & \textbf{5.0107}  & 2.4561  & 54.72\%  \\
        && Full Process ($s$)      & 1.3320 & 10.0022 & \textbf{19.7822} & 16.9772 & 41.13\%  \\
        
        \cmidrule(lr){2-8}
        &\multirow{2}{*}{PirateNets}
        &  Single Iteration ($ms$) & 0.7361 & 8.2706  & \textbf{5.2069} & 15.1342   & 17.74\%  \\
        && Full Process ($s$)      & 6.9929 & 57.9483 & \textbf{21.3250} & 118.1247 & 10.43\%  \\

        \cmidrule(lr){2-8}
        &\multirow{2}{*}{PINNsFormer}
        &  Single Iteration ($ms$) & 2.3477 & 22.1690 & \textbf{6.0726} & 49.3347  & 7.60\%  \\
        && Full Process ($s$)      & 6.4899 & 73.5383 & \textbf{2.7600} & 157.1937 & 1.15\%  \\
        \bottomrule
    \end{tabular}
    \label{tab:overhead2}
\end{table}

The results indicate that, for nonlinear PDEs, under the typical CAML configuration ($K_\mathrm{few}=2$), the cost of online update $c$ accounts for approximately 20\% of the total training time in lightweight models such as MLP. Since the computational cost of Newton update remains relatively constant across different backbones, this proportion decreases to below 5\% for more complex models. Moreover, if the updates of $c$ were terminated earlier, this proportion would be further reduced.

\section{Difference with Learnable Bias}
\label{sec:difference}
In this section, we perform an experiment to evaluate the distinction between CAML and explicitly introduced learnable biases in the output layer. Using the Heat benchmark as an illustrative example, we employ the MLP backbone and keep all experimental settings consistent with those in the main experiments. We compare the following configurations: \emph{(i)} standard PINN with a standard MLP, \emph{(ii)} standard PINN with an MLP augmented by an explicitly added, learnable output bias, and \emph{(iii)} CAML with aligned constraints only (to isolate the effect of the delay-residual). The results are shown in Table~\ref{tab:bias}.
\begin{table}[H]
    \centering
    \caption{Comparison of CAML with a learnable output bias.}
    \begin{tabular}{l|ccc}
        \toprule
        & Stp & $L_2$ & $\cos(\phi)$ \\
        \midrule
        PINN                  & 10127$_{\pm 2839}$ & 7.52e-3$_{\pm 4\text{e-}3}$ & 5.89\% \\
        PINN + Learnable Bias & 9264$_{\pm 1877}$  & 6.53e-3$_{\pm 2\text{e-}3}$ & 16.13\% \\
        CAML (AC-Only)        & 1491$_{\pm 186}$   & 1.15e-3$_{\pm 6\text{e-}5}$ & 36.12\% \\
        \bottomrule
    \end{tabular}
    \label{tab:bias}
\end{table}

As shown in Table~\ref{tab:bias}, introducing a bias term improves over vanilla PINN in all metrics, indicating that increasing the constant-mode flexibility indeed alleviates part of the gradient conflict. However, the improvement remains limited: convergence is still slow, and the final error remains an order of magnitude larger than that of CAML. In contrast, CAML (AC-only) significantly reduces the required training steps, lowers the $L_2$ error, and substantially increases the gradient alignment ratio. This suggests that simply introducing a trainable bias is insufficient. The key advantage of CAML lies in its per-iteration optimal alignment of the constant mode via an analytic (or low-dimensional) solve, rather than relying on gradient-based adaptation of a bias parameter, which may become trapped in a distorted residual valley.

\section{Parameter Sensitivity}
\label{sec:sens}

In this section, we analyze the sensitivity of CAML to its key hyperparameters, including the Newton 
iteration steps $K_{\text{init}}$ and $K_{\text{few}}$, the constant-fixing iteration $t_c$, and the 
delay-residual parameters $t_d$ and $t_r$.

\textbf{Newton's method.} The value of $K_{\text{init}}$ is set higher because, in the first step, $c$ 
does not have a warm start, and more iterations are needed to find reasonable initial values. 
$K_{\text{few}}$ is set to 2 as a trade-off between accuracy and computational cost (the comparison of 
costs is shown in Table~\ref{tab:overhead1}). The selection of $t_c$ is based on observing when 
the change in $c$ stabilizes, thereby avoiding training noise caused by minor fluctuations in the later 
stage.

\textbf{Delay-residual schedule.} The principle for setting $t_d$ and $t_r$ is to ensure that the 
network roughly satisfies the boundary conditions before the PDE residual term is introduced. To 
examine the effect of these parameters more systematically, we conducted an additional experiment on 
the Poisson benchmark with the MLP backbone (5 random seeds). The results are reported in 
Table~\ref{tab:delay_sensitivity}.

\begin{table}[h]
    \centering
    \caption{Sensitivity of CAML to the delay-residual schedule $(t_d, t_r)$ on the Poisson benchmark 
    (MLP backbone). The definitions of $\text{Stp}$ and $L_2$ are consistent with those in the main 
    experiment. Configurations that cannot achieve the expected accuracy within the maximum training 
    budget $T_{\max}$ are indicated by `$-$'.}
    \label{tab:delay_sensitivity}
    \begin{tabular}{l|ccccccc}
        \toprule
        $t_d/t_r$ & 0/0 & 40/160 & 80/320 & 120/480 & 160/640 & 200/800 & 800/3200 \\
        \midrule
        Stp   & $-$ & 5521    & 4713    & 5595     & 4574     & 3868     & 1984      \\
        $L_2$ & $-$ & 9.99e-3 & 6.25e-3 & 1.00e-2  & 6.98e-3  & 5.00e-3  & 2.98e-3   \\
        \bottomrule
    \end{tabular}
\end{table}

Overall, we found that for strongly nonlinear boundary conditions, the later the residuals are 
introduced, the more stable the training becomes. Nevertheless, excessive $t_d$ may lead to 
overfitting of the boundary conditions, thereby causing gradient explosion after the introduction of 
the PDE residual term. Therefore, we suggest setting this value with greater caution.

\section{Failure Experiment}
\label{sec:failure}

In this section, we present a failure-case experiment as a consistency check of our theoretical analysis. We consider a well-posed toy Poisson problem with purely Dirichlet boundary conditions. In this case, the model's initialization is already close to the global optimum, and the PDE loss landscape is relatively smooth, so it will not guide the parameter to a remote valley at the very beginning. In this setting, CAML is not expected to provide a significant improvement over standard PINNs. This experiment clarifies the applicability and limitations of our method.

Consider the Poisson equation on a unit square domain $\Omega = [0, 1] \times [0, 1]$:
\begin{equation}
	-\nabla^2 u(x, y) = f(x, y), \quad (x, y) \in \Omega,
\end{equation}
where $u(x, y)$ is the unknown solution and $f(x, y)$ is the source term. For this case, the source term is chosen as:
\begin{equation}
	f(x, y) = -2\pi^2 \sin(\pi x) \sin(\pi y).
\end{equation}

The boundary condition is specified as:
\begin{equation}
	u(x, y) = 0, \quad (x, y) \in \Gamma,
\end{equation}
and the analytical solution for this problem is:
\begin{equation}
	u(x, y) = \sin(\pi x) \sin(\pi y).
\end{equation}

In this experiment, we measured the performance of the MLP backbone under the original PINN and CAML (AC-only). We simultaneously recorded the final additive offset $c$ applied by CAML.

\begin{table}[H]
    \centering
    \caption{Comparison of CAML on toy Poisson benchmark.}
    \begin{tabular}{l|cccc}
        \toprule
        & Stp & $L_2$ & $\cos(\phi)$ & $c$\\
        \midrule
        PINN                  & 2472$_{\pm 398}$ & 3.96e-3$_{\pm 9\text{e-}4}$ & 19.88\%  & $-$ \\
        CAML (AC-Only)        & 2328$_{\pm 221}$ & 3.92e-3$_{\pm 6\text{e-}4}$ & 23.21\% & 3.16 \\
        \bottomrule
    \end{tabular}
    \label{tab:bias2}
\end{table}

Based on the results in Table~\ref{tab:bias2}, the toy Poisson experiment indicates that for well-posed problems with purely Dirichlet boundary conditions, CAML (AC-only) does not provide substantial improvements over standard PINNs. Both methods achieve nearly identical convergence steps and final $L_2$ errors, with only a modest increase in gradient cosine similarity under CAML. The learned additive offset $c$ remains small, yielding only minor performance gains. This result is consistent with our theoretical discussion: when initialization has already eliminated additive invariance and the optimization landscape is relatively benign, the optimizer's search distance within the PDE residual loss valley is already short. Therefore, the advantage of solution set expansion caused by aligned constraints becomes limited.

\section{Benchmark Visualization}
In this section, we present visualization results to qualitatively assess the performance of CAML integrated into the different backbones. We visualize snapshots of the solution on a two-dimensional domain across different benchmarks.

\begin{figure}[H]
	\centering
        \begin{tabular}{@{}c@{}c@{}c@{}}
            \includegraphics[width=0.25\linewidth, trim=5 0 5 15, clip]{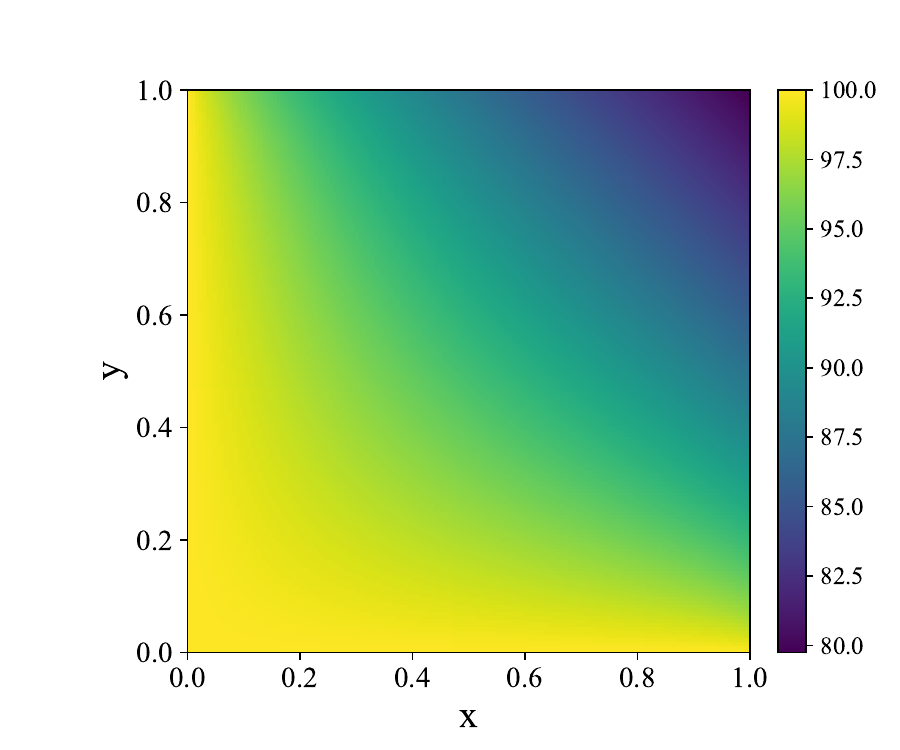} & 
            \includegraphics[width=0.25\linewidth, trim=5 0 5 15, clip]{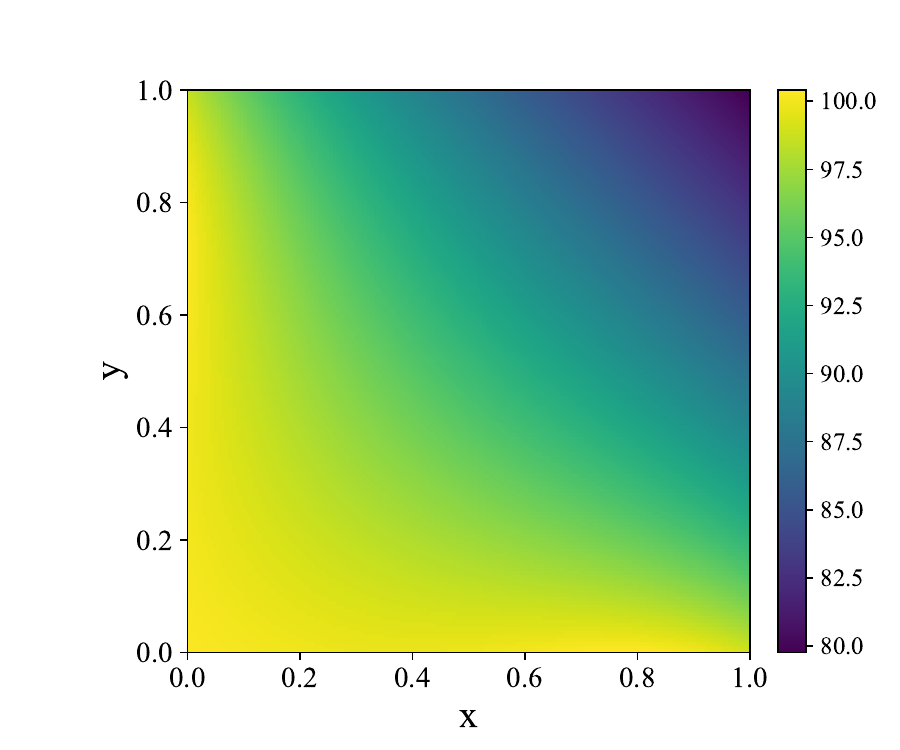} &
            \includegraphics[width=0.25\linewidth, trim=5 0 5 15, clip]{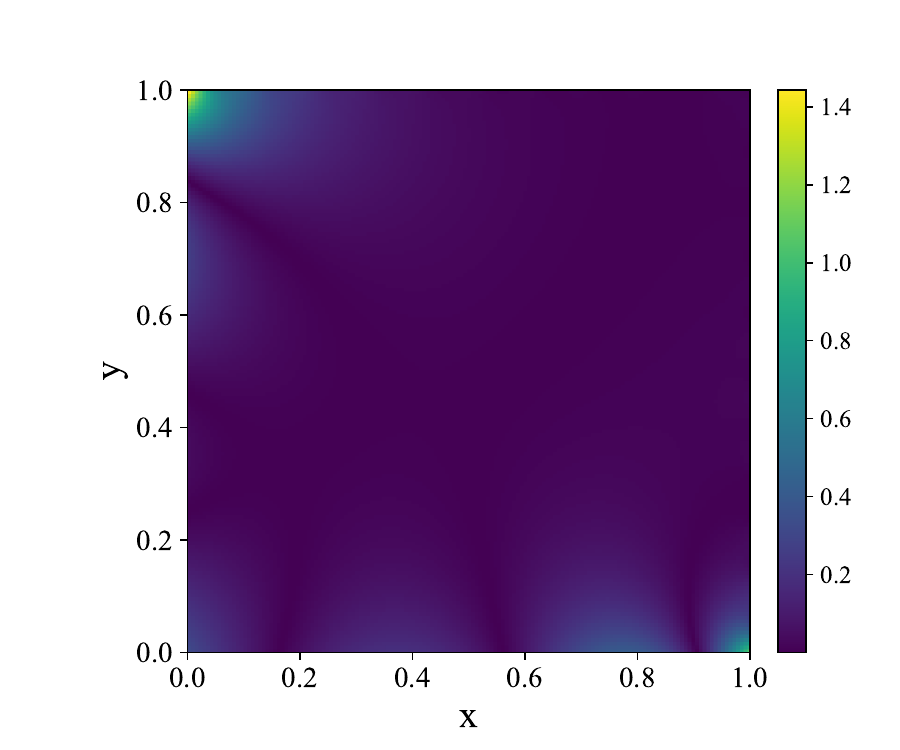} \\

            \footnotesize (a) Ground Truth & \footnotesize (b) Prediction & \footnotesize (c) Absolute Error \\
        \end{tabular}
	\caption{Visualization results on the Heat benchmark, showing the ground truth, prediction and error distributions during training with CAML integrated into the MLP backbone.}
	\label{fig:vis1}
\end{figure}

\begin{figure}[H]
	\centering
        \begin{tabular}{@{}c@{}c@{}c@{}}
            \includegraphics[width=0.25\linewidth, trim=5 0 5 15, clip]{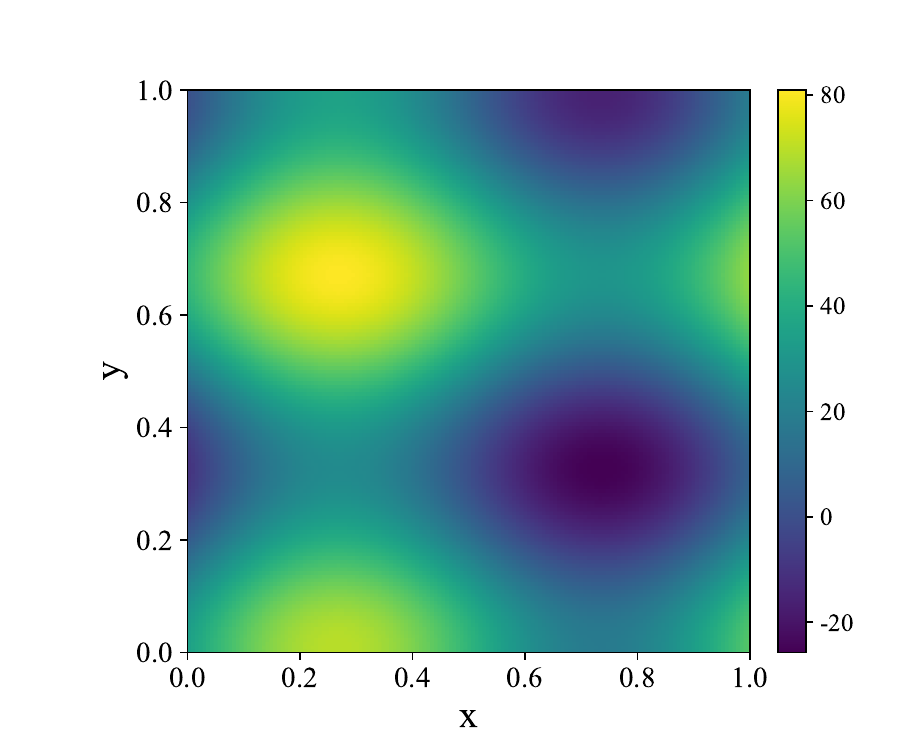} & 
            \includegraphics[width=0.25\linewidth, trim=5 0 5 15, clip]{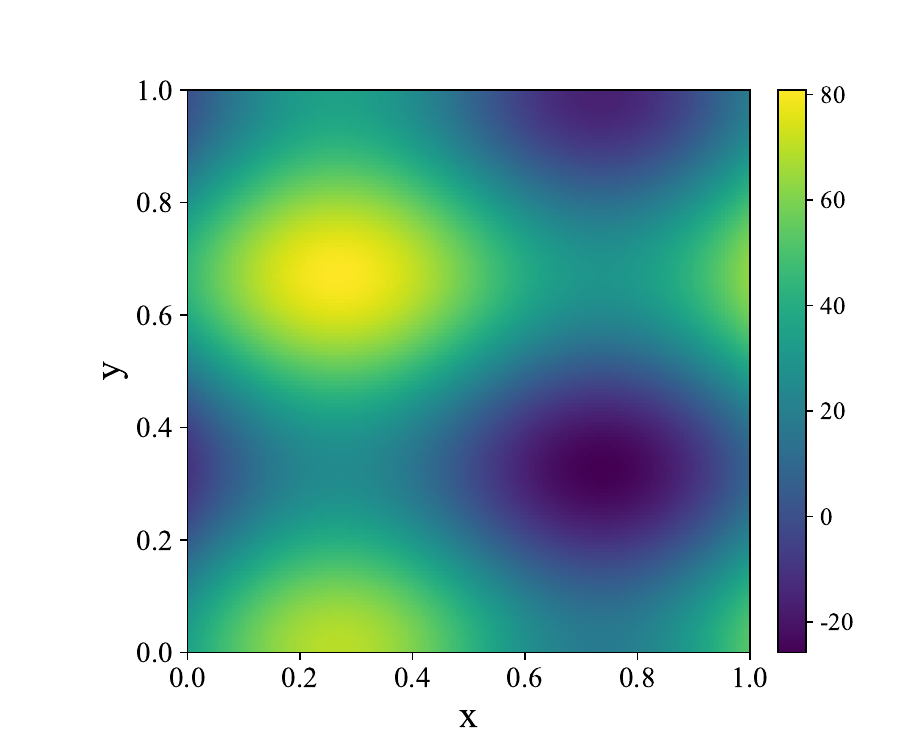} &
            \includegraphics[width=0.25\linewidth, trim=5 0 5 15, clip]{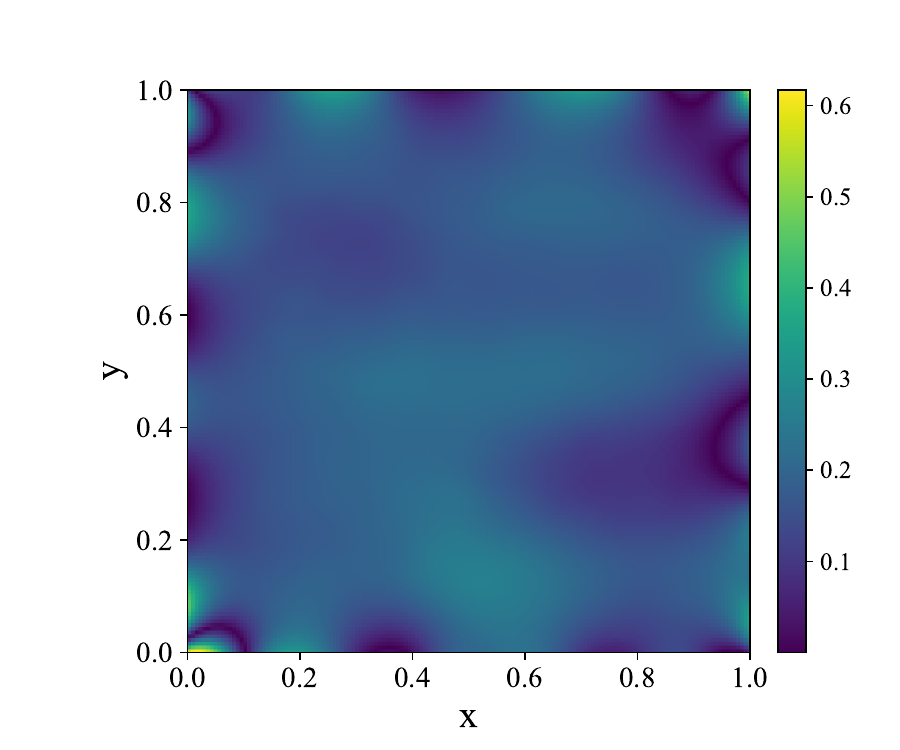} \\

            \footnotesize (a) Ground Truth & \footnotesize (b) Prediction & \footnotesize (c) Absolute Error \\
        \end{tabular}
	\caption{Visualization results on the Poisson benchmark, showing the ground truth, prediction and error distributions during training with CAML integrated into the MLP backbone.}
	\label{fig:vis2}
\end{figure}

\begin{figure}[H]
	\centering
        \begin{tabular}{@{}c@{}c@{}c@{}c@{}}
            \includegraphics[width=0.25\linewidth, trim=5 0 5 15, clip]{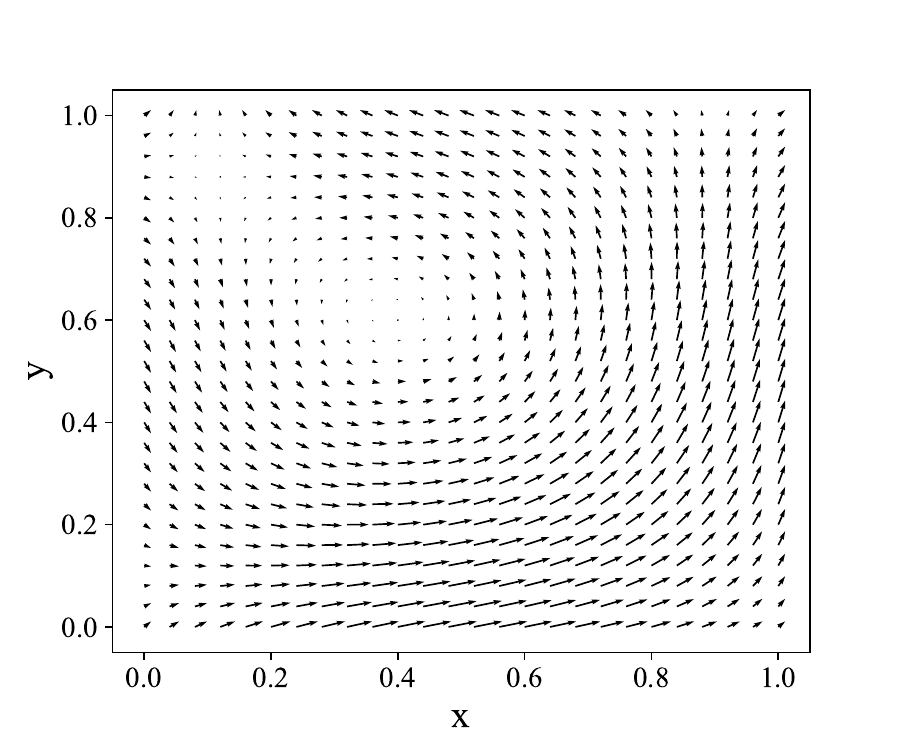} & 
            \includegraphics[width=0.25\linewidth, trim=5 0 5 15, clip]{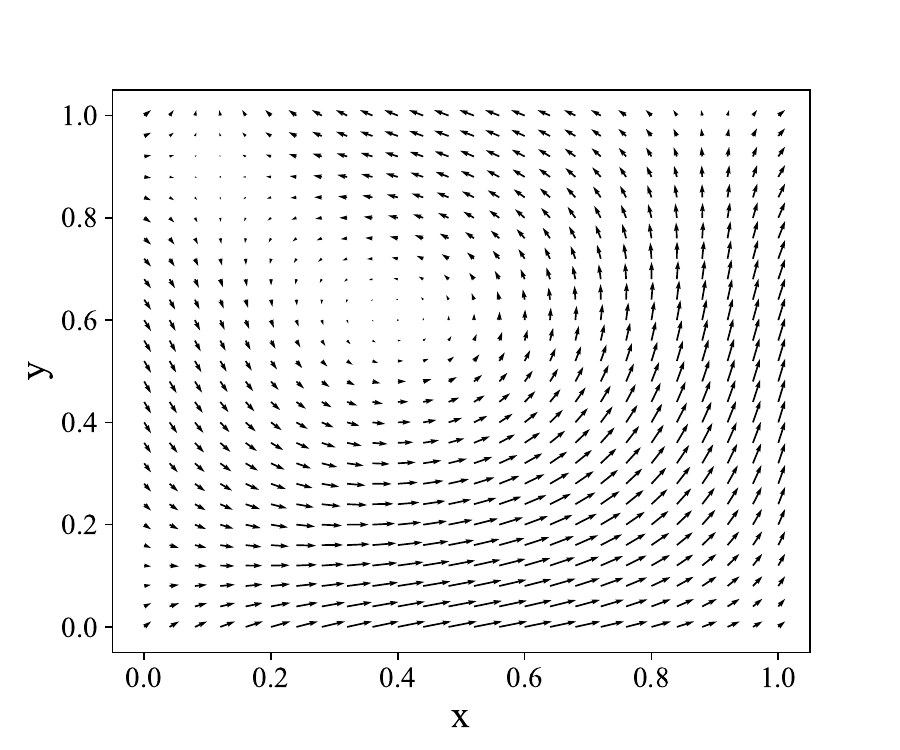} &
            \includegraphics[width=0.25\linewidth, trim=5 0 5 15, clip]{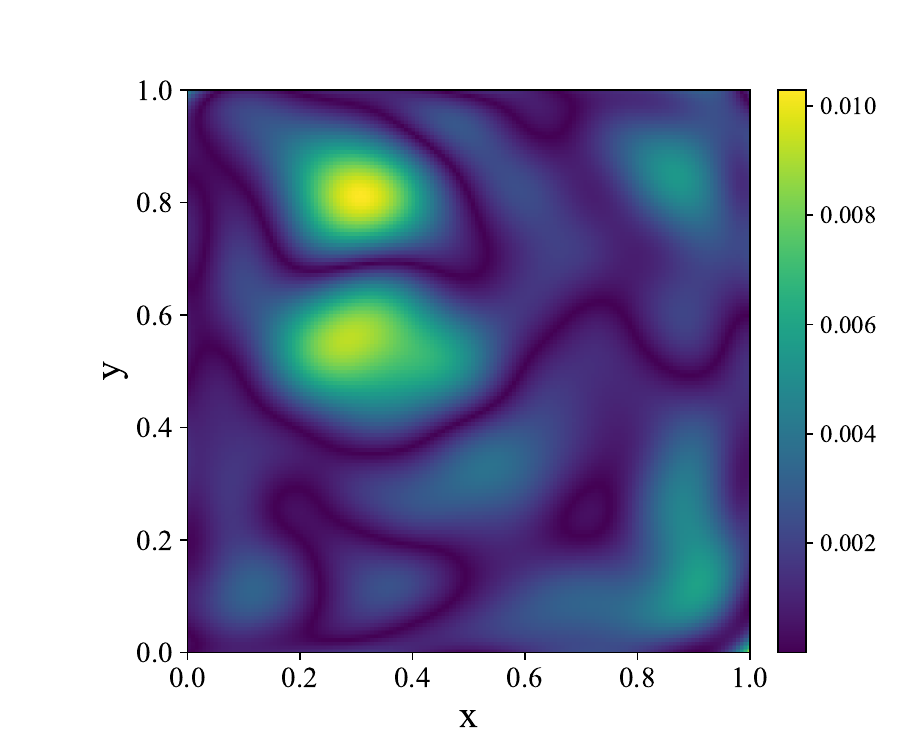} &
            \includegraphics[width=0.25\linewidth, trim=5 0 5 15, clip]{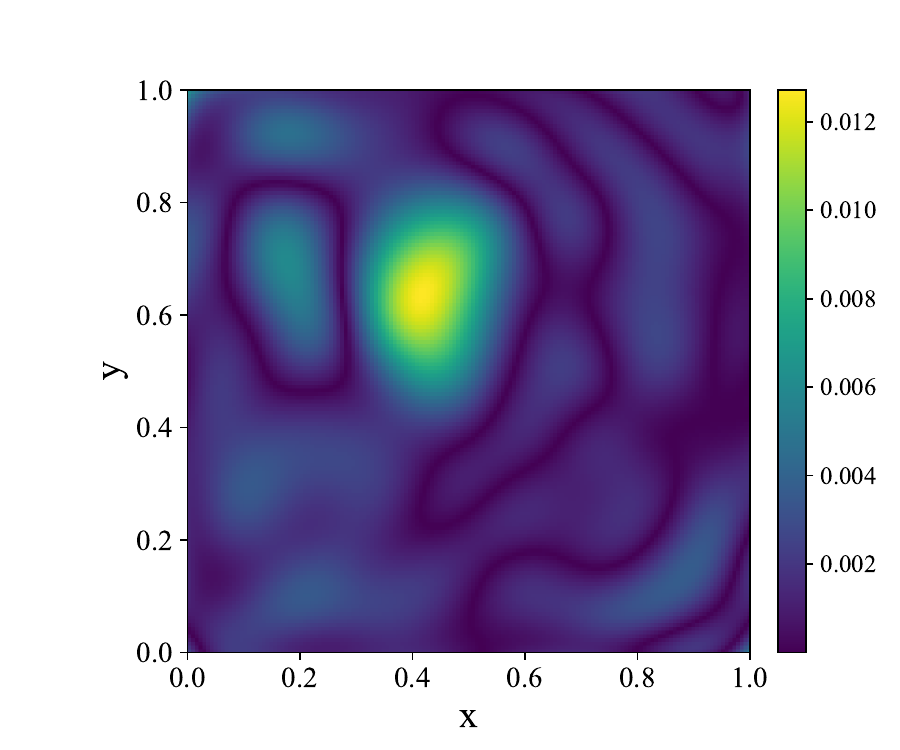} \\

            \footnotesize (a) Ground Truth & \footnotesize (b) Prediction & \footnotesize (c) Absolute Error ($u$) & \footnotesize (c) Absolute Error ($v$) \\
        \end{tabular}
	\caption{Visualization results on the NS benchmark, showing the ground truth, prediction and error distributions during training with CAML integrated into the MLP backbone.}
	\label{fig:vis3}
\end{figure}

\begin{figure}[H]
	\centering
        \begin{tabular}{@{}c@{}c@{}c@{}}
            \includegraphics[width=0.25\linewidth, trim=5 0 5 15, clip]{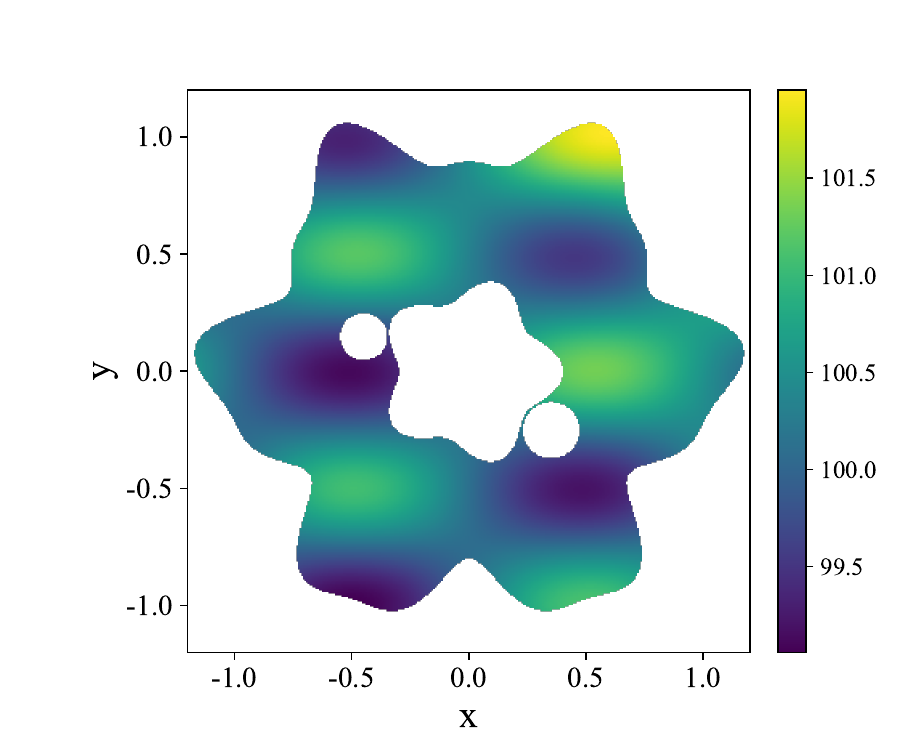} & 
            \includegraphics[width=0.25\linewidth, trim=5 0 5 15, clip]{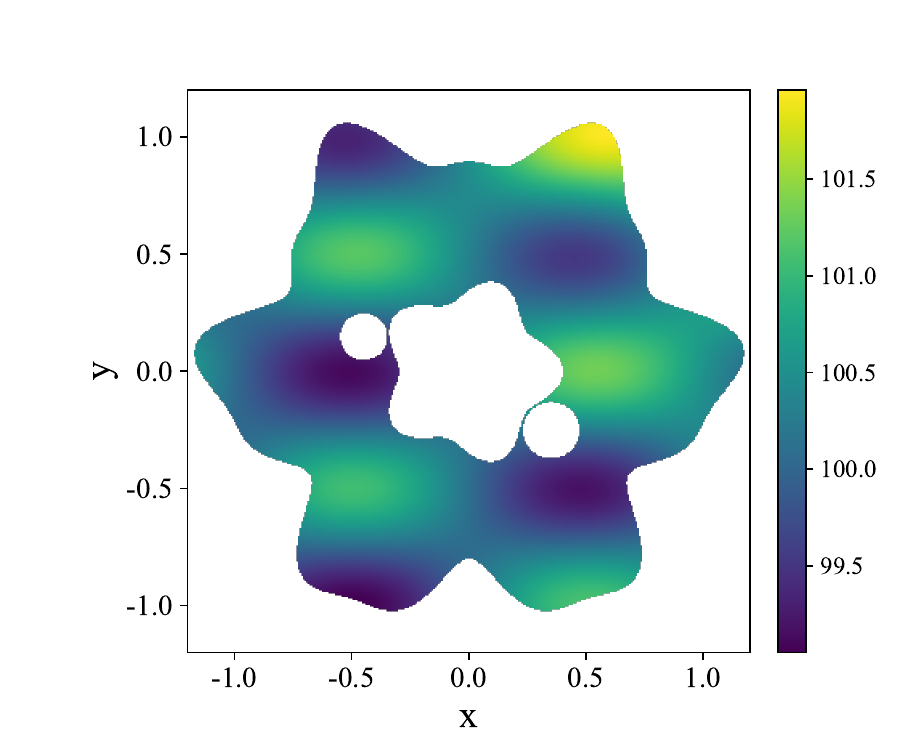} &
            \includegraphics[width=0.25\linewidth, trim=5 0 5 15, clip]{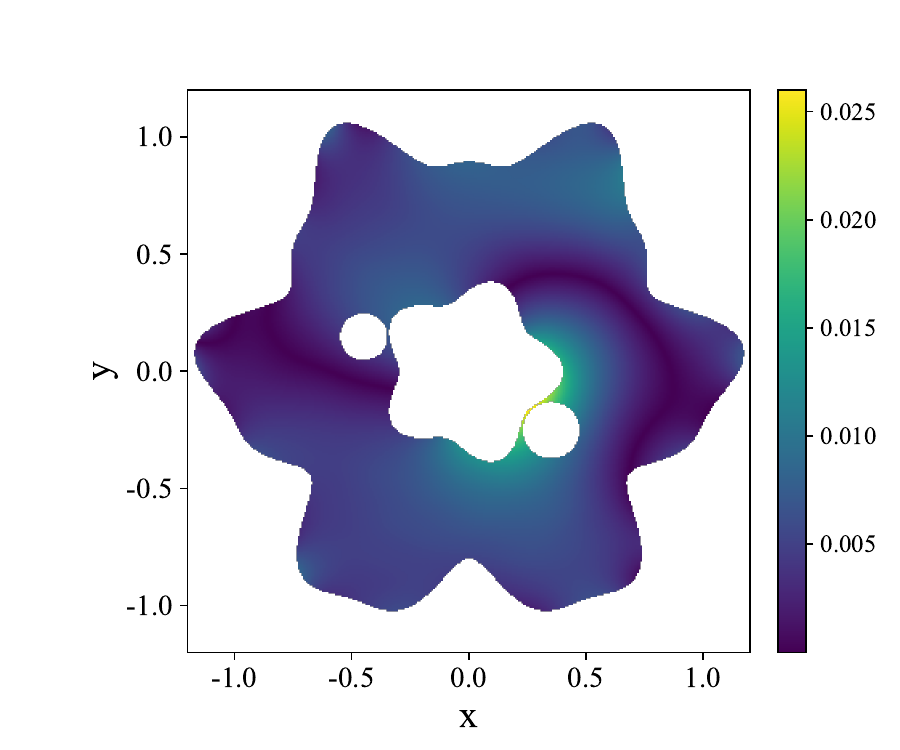} \\

            \footnotesize (a) Ground Truth & \footnotesize (b) Prediction & \footnotesize (c) Absolute Error \\
        \end{tabular}
	\caption{Visualization results on the Helmholtz benchmark, showing the ground truth, prediction and error distributions during training with CAML integrated into the MLP backbone.}
	\label{fig:vis4}
\end{figure}

\end{document}